\def\sR{{\mathbb{R}}}
\newcommand{\fev}{FEV$_1$}
\newenvironment{packed_itemize}{
\begin{list}{\labelitemi}{\leftmargin=2em}
\vspace{-6pt}
 \setlength{\itemsep}{0pt}
 \setlength{\parskip}{0pt}
 \setlength{\parsep}{0pt}
}{\end{list}}
\title{SmoothHess: ReLU Network Feature Interactions \\ 
via Stein's Lemma}
\author{%
Max Torop,\text{$^{1*}$}
  Aria Masoomi,\text{$^{1*}$}
 Davin Hill,$^{1}$ Kivanc Kose,$^{2}$ Stratis Ioannidis,$^{1}$ Jennifer Dy$^{1}$
 \\
  $^{1}$ Northeastern University, $^{2}$ Memorial Sloan Kettering Cancer Center
\\
  \texttt{\{torop.m, masoomi.a\}@northeastern.edu}, \\ 
 \texttt{\{dhill, ioannidis, jdy\}@ece.neu.edu}, \\
 \texttt{\{kosek\}@mskcc.org}
 }
\begin{document}

\maketitle
\def\thefootnote{$*$}\footnotetext{Equal contribution}

\begin{abstract}
Several recent methods for interpretability model feature interactions by looking at the Hessian of a neural network.
This poses a challenge for ReLU networks, which are piecewise-linear and thus have a zero Hessian almost everywhere.
We propose \emph{SmoothHess}, a method of estimating second-order interactions through Stein's Lemma. In particular, we estimate the Hessian of the network convolved with a Gaussian through an efficient sampling algorithm, requiring only network gradient calls. 
SmoothHess is applied post-hoc, requires no modifications to the ReLU network architecture, and the extent of smoothing can be controlled explicitly.
We provide a non-asymptotic bound on the sample complexity of our estimation procedure. 
We validate the superior ability of SmoothHess to capture interactions on benchmark datasets and a real-world medical spirometry dataset. 
\end{abstract}
\section{Introduction}
As machine learning models are increasingly relied upon in a variety of high-stakes applications such as credit lending \cite{lipton2018mythos, knauth2020self} medicine \cite{50013, roy2022does, codella2019skin}, or law \citep{DBLP:journals/corr/abs-2106-10776}, it is important that users are able to interpret model predictions. 
To this end, many methods have been developed to assess the importance of individual input features in effecting model output \citep{lundberg2017unified, smilkov2017smoothgrad, sundararajan2017axiomatic, ribeiro2016should, Simonyan2014DeepIC}. 
However, one may achieve a deeper understanding of model behavior by quantifying how features \emph{interact} to affect predictions. 
While diverse notions of feature interaction have been proposed  \cite{lundberg2017unified, masoomi2021explanations, masoomi2020instance,InformationTheoryInteraction, RuleFit,sorokina2008detecting, tsang2018detecting}, in this work, we focus on the intuitive partial-derivative definition of feature interaction \cite{RuleFit, ai2003interaction, lerman2021explaining, tsang2019feature, CUI, janizek2021explaining}. 

Given a function and point, the Interaction Effect~\citep{lerman2021explaining} between a given set of features on the output is the partial derivative of the function output with respect to the features; 
intuitively, it represents the infinitesimal change in the function  
engendered by a joint infinitesimal change in each chosen feature. 
The Interaction Effect 
derives in part from \citet{RuleFit}, who define the \emph{global} interaction between a set of features over the data distribution as the expected square of the partial-derivative with respect to those features. 
As in prior works \citep{lerman2021explaining,janizek2021explaining}, we eschew the expectation to focus on \emph{local} interactions occurring around a given point $x$ and avoid squaring partial-derivatives to maintain the directionality of the interaction.  We focus on \emph{pair-wise feature interactions}, which, in the context of the Interaction Effect, are the elements of the Hessian. 

ReLU networks are a popular class of neural networks that use ReLU activation functions~\citep{glorot2011deep}. 
The use of ReLU has desirable properties, such as the mitigation of the vanishing gradient problem \citep{glorot2011deep}, and it is the sole activation function used in popular neural network families such as ResNet \citep{He_2016_CVPR} and VGG \citep{simonyan2014very}. 
However, ReLU networks are piece-wise linear \citep{montufar2014number} and thus have a zero Hessian almost everywhere (a.e.), posing a problem for quantifying interactions (see Figure \ref{fig:ReLUNet}(a)). 
\begin{figure}[t]
    \begin{center}
    \includegraphics[width=0.8\linewidth]{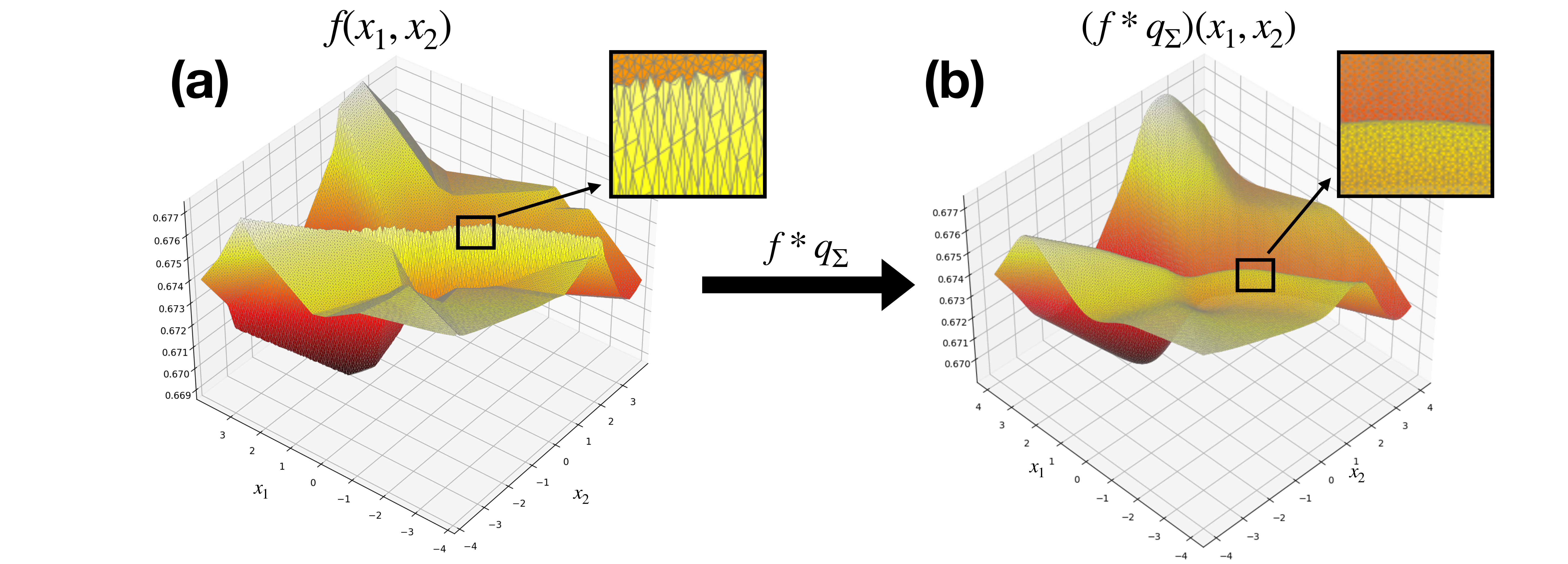}
    \end{center}
    \caption{ \textbf{(a)} An exemplar illustration of a simple $5$-hidden-layer ReLU network $f: \sR^2 \rightarrow \sR$. 
    Note that $f$ is piece-wise linear and thus has $\nabla^2_x f(x) = 0$ a.e.
    \textbf{(b)} The ReLU network convolved with $q_{0.3I}: \sR^2 \rightarrow \sR$, the density function of $\mathcal{N}(0, 0.3I)$. This function is no longer piece-wise linear and admits non-zero  higher order derivatives.} 
    \label{fig:ReLUNet}
\end{figure}

A common approach for estimating the Hessian in ReLU networks is to take the Hessian of a smooth surrogate network which approximates the original: 
each ReLU is replaced with SoftPlus, a smooth approximation to ReLU \citep{glorot2011deep, dugas2000incorporating}, before differentiating \citep{janizek2021explaining, geometryblame}. 
However, this approach affords one only coarse control over the smoothing as \emph{each internal neuron} is smoothed, leading to unwieldy asymmetric effects, as can be seen in Figure~\ref{fig:FourQuadrantToy}. 

In this work, we propose \emph{SmoothHess}: the Hessian of the ReLU network convolved with a Gaussian.
Such a function is a more flexible smooth surrogate than SoftPlus as \emph{smoothing is done on the network output}, and the covariance of the Gaussian allows one to mediate the contributions of points based on their Mahalanobis distance. 
Unfortunately, obtaining the Hessian of the convolved ReLU network is impossible using na\"ive Monte-Carlo (MC) averaging. 
However, by proving an extension of Stein's Lemma~\citep{stein1981estimation, SteinForReparam}, we show that such a quantity can be efficiently estimated using \emph{only gradient calls} on the original network. 
For an illustration of ReLU network smoothing, see Figure \ref{fig:ReLUNet}(b). 

Our \textbf{main contributions}  are as follows:
\begin{packed_itemize}
     \item We propose \emph{SmoothHess}, the Hessian of the Gaussian smoothing of a  neural network, as a model of the second-order feature interactions.
      \item We derive a variant of Stein's Lemma, which allows one to estimate SmoothHess for ReLU networks using only gradient calls.
   \item We prove non-asymptotic sample complexity bounds for our SmoothHess estimator.
  \item We empirically validate the superior flexibility of SmoothHess to capture interactions on MNIST, FMNIST, and CIFAR10. 
    We utilize SmoothHess to derive insights into a network trained on  a real-world medical spirometry dataset. 
    Our code is publicly available.\footnote{\url{https://github.com/MaxTorop/SmoothHess}} 
\end{packed_itemize}
The remainder of this paper is organized as follows: In Sec.~\ref{scn:relatedwork}, we summarize the gradient-based methods for feature importance and interactions that are most related to our work. 
In Sec.~\ref{scn:technicalprelim}, we provide a technical preliminary covering the definitions and techniques underlying both our method and competing works.
Next, in Sec.~\ref{scn:SmoothHess}, we introduce our method, SmoothHess, explain how to estimate it, and provide sample complexity bounds for our estimation procedure. 
In Sec.~\ref{scn:experiments}, we experimentally demonstrate the ability of SmoothHess to model interactions.
Finally, in Sec.~\ref{scn:conclusion}, we summarize our method and results, followed by a discussion of limitations and possible solutions.
\section{Related Work}
\label{scn:relatedwork}

\textbf{Feature Importance and First-Order Methods:} Methods that quantify feature importance fall into two categories:
(i) perturbation-based methods (e.g., ~\citep{lundberg2017unified, ribeiro2016should, covert2020feature}), which evaluate the change in model outputs with respect to perturbed inputs, and (ii) gradient-based methods (e.g.,~\citep{simonyan2014very, smilkov2017smoothgrad, sundararajan2017axiomatic}), which leverage the natural interpretation of the gradient as infinitesimally local importance for a given sample. 
Most relevant to our work are gradient-based approaches.
The saliency map, as defined in \citep{simonyan2014very}, is simply the gradient of model output with respect to the input. 
Several variants are developed to address the shortcomings of the saliency maps.
SmoothGrad \citep{smilkov2017smoothgrad} was developed to address noise by averaging saliency maps (see also Sec.~\ref{scn:SmoothGrad}), 
and comes with sample complexity guarantees~\citep{agarwal2021towards}.
\citet{sundararajan2017axiomatic} introduce Integrated Gradients, the path integral between an input and an uninformative baseline. This is extended to the Shapley framework by {\citet{erionImprovingPerformanceDeep2021a}}.
The Grad-CAM line of work~\citep{zhou2016learning, DBLP:journals/corr/SelvarajuDVCPB16, omeiza2019smooth} is similar in nature to the methods above, with the key distinction that importance is modeled over internal (hidden) layers.

\textbf{Feature Interactions:}
A variety of methods have been proposed to estimate higher-order feature interactions, which can again be separated into perturbation-based \citep{lundberg2020local, sundararajanShapleyTaylorInteraction2020,tsang2020does, masoomi2021explanations, zhang2021interpreting} and gradient-based approaches.
Among the latter, \citet{tsang2019feature} propose Gradient-NID, which estimates feature interaction strength as the corresponding Hessian element squared. 
\citet{janizek2021explaining} propose Integrated Hessian, which extends Integrated Gradients to use a path-integrated Hessian. \citet{lerman2021explaining} propose Taylor-CAM, a higher-order generalization of Grad-Cam \citep{DBLP:journals/corr/SelvarajuDVCPB16}. 
\citet{CUI} quantify global interactions for Bayesian neural networks in terms of the expected value of the Hessian over the data distribution. 
For classification ReLU networks, the CASO and CAFO explanation vectors exploit feature interactions in the loss function using Hessian estimates \citep{singla2019understanding}. 

Unfortunately, due to their piecewise linearity, existing Hessian-based interaction methods cannot be readily applied to ReLU networks. 
\citet{janizek2021explaining} replace each ReLU activation with SoftPlus post-hoc, before applying Integrated Hessians. 
Similarly, \citet{tsang2019feature} apply their method to networks with the SoftPlus activation instead of ReLU. 
For regression tasks, \citet{lerman2021explaining} replace ReLU with the smooth activation function GELU \citep{hendrycks2016bridging} before training.
Although SoftMax outputs and the cross-entropy loss admit higher order derivatives, pre- or post-hoc smoothing, as above, is necessary for finding interactions affecting logits, internal neurons, or regression outputs.
Indeed, while \citet{singla2019understanding} estimate interactions on the original ReLU network, they are only with respect to the loss function. 

In contrast, we propose a method for quantifying feature interactions that works with any ReLU network post-hoc without requiring retraining or modifications to the network architecture. It also can be directly estimated with respect to model \emph{as well as} intermediate layer outputs. 
Furthermore, our experiments in Sec.~\ref{scn:experiments} show the superior ability of our method to model logit outputs, internal neurons, and SoftMax probabilities as compared to a SoftPlus smoothed network. 

\begin{figure}[t]
    \begin{center}
    \includegraphics[width=0.7\linewidth]{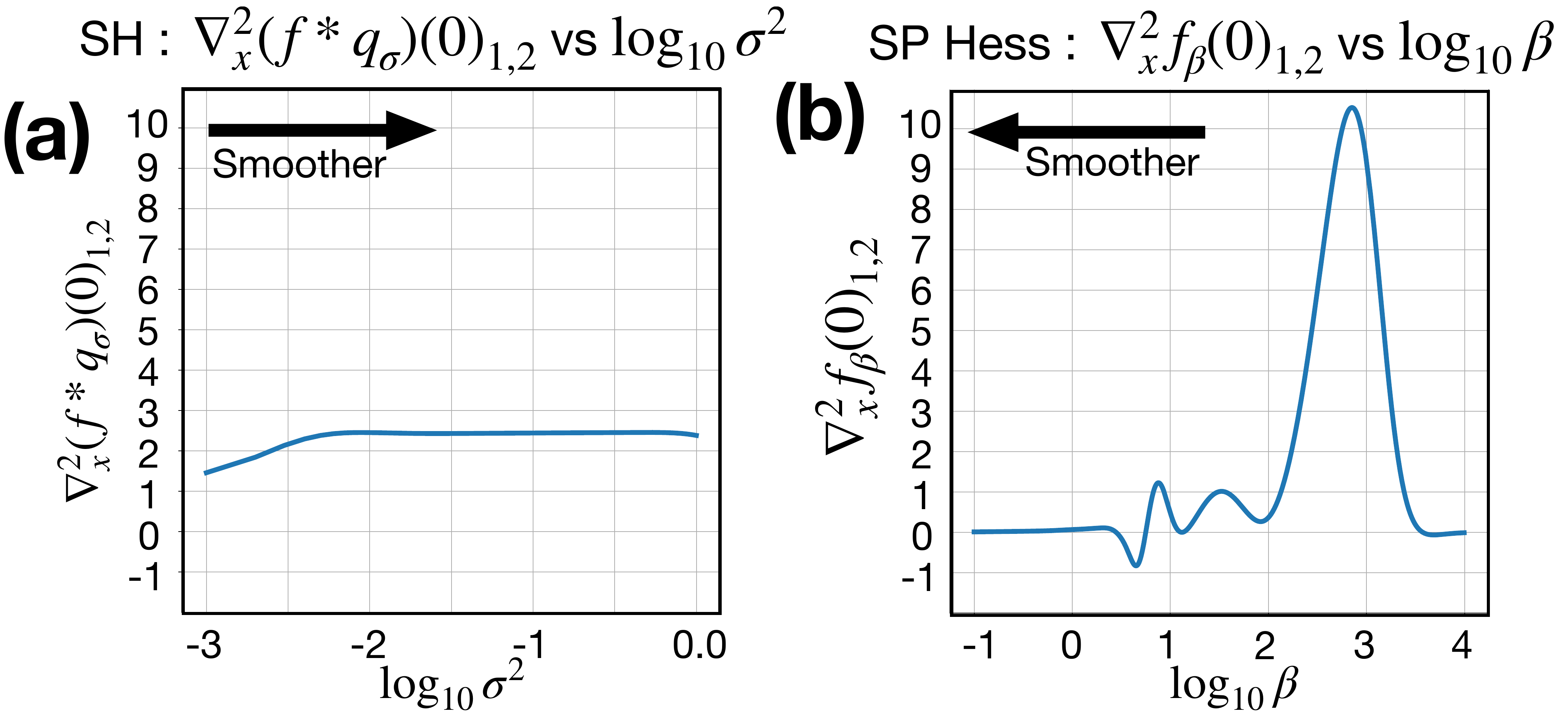}
    \end{center}
    \setlength{\belowcaptionskip}{-10pt}
    \caption{ Estimated Hessian element between features $1$ and $2$ at $x_0 = (0,0)^T$ for a $6$-layer ReLU Network $f : \sR^2 \rightarrow \sR$ trained to memorize the Four Quadrant toy dataset. 
    \textbf{(a)} SmoothHess (SH) is estimated with isotropic covariance $\Sigma = \sigma^2 I$ using granularly sampled $\sigma^2 \in \{1e\text{-}3, \ldots, 1\}$. Aside from at minute $\sigma^2 < 1e\text{-}2.5$, where hyper-local noisy behavior is captured, we have $\nabla^2(f \ast q_\sigma)(x_0)_{1,2} \approx (5 + 3 + 12 - 10) / 4 = 2.5$, the average of the memorized "ground truth" off-diagonal Hessian element over the four quadrants. 
    This indicates a symmetry in the weighting of the contributions from points around $x_0$ at \emph{every level of smoothing}. 
    \textbf{(b)} The Hessian of the SoftPlus smoothed function $f_\beta$ (SP Hess) is computed using granularly sampled $\beta \in \{1e\text{-}1, \ldots, 1e4\}$. 
    The average value of $2.5$ is not achieved at any value of $\beta$, aside from briefly between $\log_{10} \beta = 2$ and $\log_{10} \beta = 3.5$ indicating that
    SoftPlus fails to incorporate the information around $x_0$ in a symmetric manner at \emph{every level of smoothing}.
 }
    \label{fig:FourQuadrantToy}
\end{figure}
\section{Technical Preliminary}
\label{scn:technicalprelim}

\paragraph{ReLU Network Background:}
We denote a ReLU network by $F: \sR^d \rightarrow \sR^c$, where $c=1$ in the case of regression.
We denote the function which we wish to explain as $f : \sR^d \rightarrow \sR$, an arbitrary neuron $f_i^{(l)}$ (the $i^{th}$ neuron in the $l^{th}$ layer) in our ReLU network, or a SoftMax Probability for some class $k \in \{1, \ldots, c\}$.
We denote sample space $\mathcal{X} \subseteq \sR^d$ and point $x_0 \in \mathcal{X}$ for which we wish to capture feature interactions affecting $f(x_0)$. 
A general $L$-hidden-layer ReLU Network $F = f^{(L+1)} : \mathbb{R}^d \rightarrow \mathbb{R}^c$ may be defined recursively by~\citep{hein2019relu, montufar2014number}: 
\begin{subequations}
    \begin{align}
    & f^{(l)}(x_0) = W^{(l)}g^{(l-1)}(x_0) + b^{(l)}, \text{ for } l = 1 \ldots L + 1, \\
    & g^{(l)}(x_0) = \max(0, f^{(l)}(x_0)), \text{ for } l = 1 \ldots L, 
\end{align}
\end{subequations}
with $g^{(0)}(x_0)  = x_0 \in \mathbb{R}^d$. 
For a given layer $l$ the dimension (number of neurons) is defined as  $n_l \in \mathbb{N}$, i.e. $f^{(l)}(x_0), g^{(l)}(x_0) \in \mathbb{R}^{n_l}$ (with $n_0 = d$) with weight and bias $W^{(l)} \in \mathbb{R}^{n_{l}, n_{l-1}}, \ b^{(l)} \in \mathbb{R}^{n_l}$. 
As stated above, ReLU networks are piecewise linear \citep{montufar2014number}.
Specifically, \textit{each neuron} $f_i^{(l)} : \sR^d \rightarrow \sR, \ l \in \{1, \ldots, L+1 \}, i \in \{1, \ldots, n_l\}$, is a piecewise linear function, corresponding to a finite set of $K \in \mathbb{N}$ convex polytopes $\mathcal{Q} = \{Q_i \}_{i=1}^K, \ Q_i \subseteq \sR^d$ which form a partition of $\sR^d$ \citep{hanin2019deep, hanin2021deep, montufar2014number, hein2019relu, serra2018bounding, raghu2017expressive, MadMax}. 
See Figure~\ref{fig:ReLUNet}(a) for an example of a ReLU network.

\paragraph{SmoothGrad:}
\label{scn:SmoothGrad} 
SmoothGrad \citep{smilkov2017smoothgrad} is an extension of the saliency map $\nabla_x f(x_0)$ \citep{simonyan2014very} developed to reduce noise.
SmoothGrad is an average over saliency maps; formally, it is $\mathbb{E}_{\delta}[\nabla_{x} f(x_0 + \delta)]$ where $\delta \sim \mathcal{N}(0, \Sigma)$ and the covariance $\Sigma \in \sR^{d \times d}$ is a hyperparameter usually chosen to be isotropic $\Sigma = \sigma^2 I, \sigma > 0 $. 
SmoothGrad is estimated by sampling $n$ perturbation vectors $\{\delta_i \}_{i=1}^n, \ \delta_i \sim \mathcal{N}(0,\Sigma), \forall i \in \{1, \ldots, n\}$  and averaging via $\hat{G}_n^{\text{SG}}(x_0,f,\Sigma) = \frac{1}{n} \sum_{i=1}^n \nabla_{x} f(x_0 + \delta_i)$.
Unfortunately, the analogous Hessian average $\frac{1}{n} \sum_{i=1}^n \nabla_{x}^2 f(x_0 + \delta_i) \approx \mathbb{E}_\delta[\nabla_{x}^2 f(x_0 + \delta)]$ is not useful for quantifying feature interactions in ReLU networks as $\mathbb{E}_\delta[\nabla_{x}^2 f(x_0 + \delta)] = 0$.

\paragraph{SoftPlus:}
A common approach to assessing higher-order derivatives for ReLU networks is by differentiating a smooth surrogate $f_\beta : \sR^d \rightarrow \sR$  where a SoftPlus replaces every ReLU in $f$~\citep{glorot2011deep, janizek2021explaining, geometryblame, dombrowski2022towards}. 
The SoftPlus function $s_\beta(x_0) = \frac{1}{\beta} \log(1 + e^{\beta x_0}), \ \beta > 0$, is a smooth approximation of ReLU, where $\beta$ is a parameter inversely proportional to the level of smoothing~\cite{glorot2011deep,dugas2000incorporating}. 
SoftPlus approaches ReLU  and  $f_\beta$ and $\nabla f_\beta$ approximate $f$ and $\nabla f$ with arbitrary accuracy as $\beta\to\infty$~\citep{geometryblame}. 
\citet{geometryblame} establish empirical and theoretical connections between $\nabla_x f_\beta(x_0)$ and SmoothGrad.
They observe visual similarities between $\nabla_x f_\beta(x_0)$ and SmoothGrad at appropriate $\beta$ values.
Further, in the simple case of a single neuron with no bias, they prove that replacing ReLU with SoftPlus corresponds to a Gaussian-like smoothing of the gradient. 
However, this theory does not extend to networks with multiple neurons and layers.

\textbf{Stein's Lemma:} 
Stein's Lemma \citep{stein1981estimation} is central to our analysis.
We relate a variant of Stein's Lemma~\citep{ctx16573887870001401}, which extends the results of \citet{stein1981estimation} to hold for multivariate normal distributions with arbitrary covariance matrices: 

\begin{restatable}{lemma}{SteinsLemma} 
\label{lemma:SteinsLemma}
(Stein's Lemma \citep{ctx16573887870001401})
Given $x_0 \in \sR^d$, covariance matrix $\Sigma \in \sR^{d \times d}$, multivariate normal random vector $\delta \in \sR^d$ distributed from $\delta \sim \mathcal{N}(0, \Sigma)$ and almost everywhere differentiable function $g: \sR^d \rightarrow \sR$ for which $\mathbb{E}_{\delta}[|[\nabla_x g(x_0 + \delta)]_i|] < \infty $ for each $i \in \{1, \ldots, d\}$, 
then
\begin{equation}
\label{eqn:SteinsLemma}
    \mathbb{E}_{\delta}[\Sigma^{-1} \delta g(x_0 + \delta)] = \mathbb{E}_{\delta}[\nabla_{x} g(x_0 + \delta)].
\end{equation}
\end{restatable}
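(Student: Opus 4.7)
The plan is to reduce \eqref{eqn:SteinsLemma} to an integration by parts against the Gaussian density. Write $q_\Sigma(\delta) = (2\pi)^{-d/2} |\Sigma|^{-1/2} \exp\!\bigl(-\tfrac{1}{2}\delta^\top \Sigma^{-1}\delta\bigr)$ for the density of $\mathcal{N}(0,\Sigma)$. The key algebraic observation is
\begin{equation*}
\nabla_\delta q_\Sigma(\delta) \;=\; -\Sigma^{-1}\delta\, q_\Sigma(\delta),
\end{equation*}
so the left-hand side of \eqref{eqn:SteinsLemma} becomes $-\int_{\sR^d} g(x_0+\delta)\, \nabla_\delta q_\Sigma(\delta)\, d\delta$. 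Applying integration by parts componentwise, one formally obtains
\begin{equation*}
-\!\int_{\sR^d}\! g(x_0+\delta)\, \partial_{\delta_i} q_\Sigma(\delta)\, d\delta \;=\; \int_{\sR^d} \partial_{\delta_i} g(x_0+\delta)\, q_\Sigma(\delta)\, d\delta,
\end{equation*}
and by the chain rule $\partial_{\delta_i} g(x_0+\delta) = [\nabla_x g(x_0+\delta)]_i$, giving the right-hand side of \eqref{eqn:SteinsLemma}.

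To streamline the argument and bypass the anisotropy, I would first change variables via $\delta = \Sigma^{1/2}\epsilon$ with $\epsilon \sim \mathcal{N}(0,I)$. Setting $h(\epsilon) := g(x_0 + \Sigma^{1/2}\epsilon)$, the isotropic density factorizes as $\prod_i \phi(\epsilon_i)$, so the identity to prove reduces to the coordinatewise one-dimensional statement $\mathbb{E}[\epsilon_i h(\epsilon)] = \mathbb{E}[\partial_{\epsilon_i} h(\epsilon)]$. Here the classical 1D Stein identity (equivalently, integration by parts against $\phi'(t) = -t\phi(t)$) applies, and standard transformation back via $\nabla_\epsilon h(\epsilon) = \Sigma^{1/2}\nabla_x g(x_0 + \Sigma^{1/2}\epsilon)$ combined with $\Sigma^{-1}\delta = \Sigma^{-1/2}\epsilon$ recovers the stated identity after multiplying by $\Sigma^{-1/2}$ and taking expectations.

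The main obstacle is justifying the integration by parts when $g$ is only \emph{almost everywhere} differentiable rather than $C^1$. Two acceptable remedies: (i) appeal to the fact that, for each fixed direction, a locally integrable function with an integrable weak derivative is absolutely continuous on almost every line (a Fubini-type argument using $\mathbb{E}_\delta[|\partial_i g(x_0+\delta)|] < \infty$), which is exactly what 1D integration by parts requires; or (ii) mollify $g$ by a sequence of smooth approximations $g_n := g \ast \psi_n$, apply the identity on each smooth $g_n$, and pass to the limit using dominated convergence with the integrability hypothesis $\mathbb{E}_\delta[|\partial_i g(x_0+\delta)|] < \infty$ as the dominating condition. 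The second ingredient to check is that boundary terms vanish, but this is immediate from the super-polynomial decay of $q_\Sigma$ combined with local integrability of $g$: for each $i$ and almost every fixing of the remaining coordinates, $g(x_0+\delta)\, q_\Sigma(\delta) \to 0$ as $|\delta_i| \to \infty$ along almost every line, which suffices for the line-by-line integration by parts followed by Fubini.
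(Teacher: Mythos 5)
The paper does not actually prove this lemma: it is imported verbatim from the cited reference and used as a black box, so there is no in-paper argument to compare against. Your proof is the standard one --- Gaussian integration by parts via $\nabla_\delta q_\Sigma(\delta) = -\Sigma^{-1}\delta\, q_\Sigma(\delta)$, or equivalently whitening to $\epsilon \sim \mathcal{N}(0,I)$ and applying the one-dimensional Stein identity coordinatewise --- and the algebra is all correct: the change of variables, $\nabla_\epsilon h(\epsilon) = \Sigma^{1/2}\nabla_x g(x_0+\Sigma^{1/2}\epsilon)$, $\Sigma^{-1}\delta = \Sigma^{-1/2}\epsilon$, and the cancellation of the $\Sigma^{\pm 1/2}$ factors all check out, and the integrability hypothesis is exactly what the one-dimensional identity needs.

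The one genuine soft spot is your treatment of the hypothesis ``almost everywhere differentiable.'' Pointwise a.e.\ differentiability together with an integrable a.e.\ gradient is not by itself sufficient for the identity: the Cantor function applied to a single coordinate is a.e.\ differentiable with derivative $0$ a.e., yet is nonconstant, so the right-hand side vanishes while the left-hand side does not. Both of your proposed remedies quietly upgrade the hypothesis rather than work from it. Remedy (i) assumes the a.e.\ pointwise derivative is a weak derivative, which is precisely what fails in the Cantor example; remedy (ii)'s mollification argument gives $\nabla g_n = (\nabla g) \ast \psi_n \to \nabla g$ only under that same weak-differentiability assumption, since for a merely a.e.\ differentiable $g$ the mollified gradients need not converge to the pointwise gradient. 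The hypothesis Stein actually uses (``almost differentiable,'' i.e., absolutely continuous on almost every line with a locally integrable gradient) is exactly the locally ACL condition the paper invokes in its footnote to the second-order lemma. Since every function the paper feeds into this lemma is a ReLU network output, hence locally Lipschitz, hence ACL, your argument is sound for every use the paper makes of the result; but as a proof of the lemma under the literal hypothesis stated, the passage from ``differentiable a.e.'' to ``integration by parts is valid on almost every line'' is the missing step, and it cannot be filled without strengthening the hypothesis. The boundary-term claim is also slightly hand-wavy (local integrability of $g$ does not give pointwise decay of $g\,q_\Sigma$ along lines); the clean fix is the usual Fubini argument writing $\phi(t) = \int_t^\infty s\,\phi(s)\,ds$, which avoids boundary terms entirely.
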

A zero-th order oracle associated with a function $g : \mathbb{R}^d \rightarrow \mathbb{R}$ is an oracle which, when provided with any input $x \in \mathbb{R}^d$, returns $g(x)$.
Likewise, a first order oracle (or gradient oracle) returns $\nabla_x g(x)$.
In the context of this work, where $g$ is a ReLU network, zero-th order and first order oracle calls amount to network forward passes and backpropagation calls, respectively.

Given one has access to $g$ as a zero-th order oracle, the LHS of Stein's Lemma may be estimated by sampling a set of perturbations $\{ \delta_i \}_{i=1}^n; \delta_i \sim \mathcal{N}(0, \Sigma)$, querying the zero-th order oracle $g(x_0 + \delta_i)$ for each $\delta_i$, and MC-estimating, i.e:
\begin{equation}
\label{eqn:SteinsLemmaEstimation} 
    \hat{G}_n(x_0,g,\Sigma) = \frac{1}{n} \sum_{i=1}^n \Sigma^{-1} \delta_i g(x_0 + \delta_i) \approx \mathbb{E}_{\delta}[\Sigma^{-1} \delta g(x_0 + \delta)] \stackrel{\text{Lemma}~\ref{lemma:SteinsLemma}}{=} \mathbb{E}_{\delta}[\nabla_{x} g(x_0 + \delta)]. 
\end{equation}
Such an approach is useful for estimating $\mathbb{E}_{\delta}[\nabla_{x} g(x_0 + \delta)]$ in the RHS of Eq.~\eqref{eqn:SteinsLemmaEstimation} when the gradient of $g$ is impossible or expensive to obtain but $g$ itself may be efficiently queried as a zero-th order oracle (see e.g.,~\citep{larson_menickelly_wild_2019, pmlr-v119-lim20b, nesterov2017random}). Extending \citet{ctx16573887870001401}'s work, \citet{SteinForReparam} present a second-order variant of Stein's Lemma expressing the expected Hessian in terms of the gradient:
\begin{restatable}{lemma}{SteinsLemmaSecondOrder} 
\label{lemma:secondorderstein}
(First-Order Oracle Stein's Lemma \citep{SteinForReparam})
\label{lemma:firstorderoraclesteinlemma}
Given $x_0 \in \sR^d$, covariance matrix $\Sigma \in \sR^{d \times d}$, multivariate normal random vector $\delta \in \sR^d$ distributed from $\delta \sim \mathcal{N}(0, \Sigma)$ and continuously differentiable function $g(z) : \mathbb{R}^d \rightarrow \mathbb{R}$ with locally Lipschitz\footnote{In its most general form, Lemma \ref{lemma:secondorderstein} holds for functions which are continuously differentiable and have  \emph{locally ACL} gradients. 
Locally ACL functions are functions which are absolutely continuous on almost every straight line, a mild condition which is satisfied by both locally Lipschitz and continuously differentiable functions~\citep{SteinForReparam, leoni_first_2009, royden2017real}. As ReLU networks are locally Lipschitz \citep{gouk2021regularisation}, they are locally ACL.}  gradients $\nabla g : \sR^d \rightarrow \sR^d$, then  
\begin{equation}
    \label{eqn:PricesTheorem}
    \mathbb{E}_{\delta}[\Sigma^{-1} \delta [\nabla_{x} g(x_0 + \delta)]^T] = \mathbb{E}_{\delta}[\nabla_{x}^2 g(x_0 + \delta)].
\end{equation}
\end{restatable}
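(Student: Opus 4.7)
The plan is to reduce the matrix identity of Lemma~\ref{lemma:secondorderstein} to $d$ scalar applications of Lemma~\ref{lemma:SteinsLemma}, one for each coordinate of the gradient. Writing $\nabla_x g(x) = (h_1(x), \ldots, h_d(x))^T$ with $h_j(x) := [\nabla_x g(x)]_j$, observe that the $j$-th column of the Hessian $\nabla_x^2 g(x)$ is exactly $\nabla_x h_j(x)$, while the $j$-th column of the matrix $\Sigma^{-1}\delta [\nabla_x g(x_0+\delta)]^T$ is $\Sigma^{-1}\delta \, h_j(x_0+\delta)$. Hence Eq.~\eqref{eqn:PricesTheorem} is equivalent to establishing, for every $j \in \{1, \ldots, d\}$, the vector identity
\begin{equation*}
\mathbb{E}_\delta[\Sigma^{-1} \delta \, h_j(x_0 + \delta)] = \mathbb{E}_\delta[\nabla_x h_j(x_0 + \delta)],
\end{equation*}
which is precisely the statement of Lemma~\ref{lemma:SteinsLemma} applied to the scalar function $h_j$. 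Linearity of expectation then stacks these $d$ scalar identities into the desired matrix equation.

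The substantive work is to verify that each $h_j$ meets the hypotheses of Lemma~\ref{lemma:SteinsLemma}. Continuous differentiability of $g$ makes $h_j$ continuous, and the local Lipschitz assumption on $\nabla_x g$ transfers to $h_j$; by Rademacher's theorem this in turn delivers a.e.\ differentiability of $h_j$, which is exactly the regularity required by Lemma~\ref{lemma:SteinsLemma}. It remains to check the integrability condition $\mathbb{E}_\delta[|[\nabla_x h_j(x_0+\delta)]_i|]<\infty$ for every $i$. Because $\nabla_x g$ is locally Lipschitz, the mixed partials $\partial_i h_j = \partial^2_{ij} g$ are essentially bounded on every compact ball; a standard Fubini-type split of the Gaussian expectation over a ball $B_R$ and its complement converts such local bounds into a finite expectation by exploiting the rapid decay of the Gaussian density against any polynomially growing local Lipschitz constant.

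Once these measure-theoretic checks are in place, the proof concludes by applying Lemma~\ref{lemma:SteinsLemma} coordinatewise and collecting the resulting $d$ vector identities into a single matrix identity via linearity of expectation. The main obstacle I anticipate is the tail-integrability step: local Lipschitzness of $\nabla_x g$ alone does not supply any uniform bound on the Hessian, so one must carefully pair local control with Gaussian tail decay. The footnote's appeal to locally ACL gradients is in fact the minimal hypothesis under which both Rademacher differentiability of $\nabla_x g$ and the dominated-convergence style exchange of differentiation and expectation are justified; beyond this technicality the argument is a purely structural coordinatewise reduction.
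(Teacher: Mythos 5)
A point of reference first: the paper does not prove Lemma~\ref{lemma:secondorderstein} at all --- it is imported verbatim from \citet{SteinForReparam}, and the only piece of that source the paper actually uses in its own arguments is Lemma~\ref{lemma:linlemma} in App.~\ref{app:Stein}, which feeds the proof of Proposition~\ref{prop:PricesConvolvedTheorem} via a convolution-and-differentiate-under-the-integral route. So there is no in-paper proof to match; judged on its own terms, your structural reduction is sound and is in fact the classical way to obtain second-order Stein identities. The observation that the $j$-th column of $\Sigma^{-1}\delta[\nabla_x g(x_0+\delta)]^T$ is $\Sigma^{-1}\delta\, h_j(x_0+\delta)$ while the $j$-th column of $\nabla_x^2 g(x_0+\delta)$ is $\nabla_x h_j(x_0+\delta)$ is exactly right, and each $h_j$ is locally Lipschitz, hence locally ACL and a.e.\ differentiable by Rademacher, which is the regularity that makes the integration by parts underlying Lemma~\ref{lemma:SteinsLemma} legitimate.

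The step that does not hold up as written is your integrability check. You claim local Lipschitzness of $\nabla_x g$ plus Gaussian decay yields $\mathbb{E}_\delta[|\partial_i h_j(x_0+\delta)|]<\infty$ by pairing local bounds with the tail of the Gaussian against a ``polynomially growing local Lipschitz constant.'' Nothing in the hypotheses bounds that growth: a locally Lipschitz gradient can have local Lipschitz constants growing like $\exp(\exp(\lVert x\rVert^3))$ on balls of radius $\lVert x\rVert$, which defeats any Gaussian tail, so the expectation can genuinely be infinite and your Fubini-type split does not close. The honest repair is not to derive this condition but to note it is implicit in the statement being well-posed: the right-hand side $\mathbb{E}_\delta[\nabla_x^2 g(x_0+\delta)]$ only makes sense when each entry $\partial_i h_j(x_0+\delta)$ is integrable, and that is verbatim the hypothesis Lemma~\ref{lemma:SteinsLemma} requires of $h_j$ (the source likewise carries explicit integrability hypotheses, cf.\ the condition $\mathbb{E}_z[|h(z)|]<\infty$ in Lemma~\ref{lemma:linlemma}). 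With that adjustment the coordinatewise argument goes through cleanly; note also that for the functions the paper ultimately cares about the issue is moot for a different reason --- ReLU networks have globally bounded gradients but are not continuously differentiable, which is precisely why the paper bypasses Lemma~\ref{lemma:secondorderstein} in favor of Proposition~\ref{prop:PricesConvolvedTheorem}.
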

Complexity bounds have been derived for similar identities, which express the Hessian using zero-th order information~\citep{balasubramanian2022zeroth, zhu2021hessian, erdogdu2016newton}.  
However, Lemma~\ref{lemma:firstorderoraclesteinlemma} \emph{fails} for ReLU networks. This is precisely because ReLU networks are piecewise linear and, therefore, \emph{are not continuously differentiable}. 
In the next section, we directly address this through our method for estimating a smoothed Hessian for ReLU networks. 
\section{SmoothHess} 

\label{scn:SmoothHess} 

Our main contributions are: 
(1) We propose \emph{SmoothHess}, the Hessian of the network convolved with a Gaussian, for modeling feature interactions. 
(2) We use Stein's Lemma to prove that SmoothHess may be estimated for ReLU networks using only gradient oracle calls. 
(3) We prove non-asymptotic sample complexity bounds for our SmoothHess estimator.  

\textbf{Gaussian Convolution as a Smooth Surrogate:}
An alternative smooth-surrogate to $f_\beta$ is $h_{f, \Sigma} : \sR^d \rightarrow \sR$, the convolution of $f$ with a Gaussian:
\begin{align}
\label{eqn:convolved_f}
& h_{f, \Sigma}(x_0) = (f \ast q_\Sigma)(x_0) = \int_{z \in \sR^d} f(z) q_\Sigma(z - x_0) dz ,         
\end{align} 
where $\Sigma \in \sR^{d \times d}$ is a covariance matrix, $q_\Sigma(z - x_0) = (2 \pi)^{- \frac{d}{2} } |\Sigma|^{-\frac{1}{2}} \exp(-\frac{1}{2} d(x_0,z)_{\Sigma})$ is the density function of the Gaussian distribution $\mathcal{N}(0, \Sigma)$, $ |\cdot|$ is the determinant and $d(x_0,z)_{\Sigma} = (x_0 -z)^T\Sigma^{-1}(x_0-z) \in \sR$ is the Mahalanobis distance between $x_0$ and $z$.

The Gaussian-smoothed function $h_{f,\Sigma}$ is infinitely differentiable and does not suffer from the limitations of surrogates obtained from internal smoothing.
Here, smoothing is done on the \emph{output image} of $f$ and thus the relationship between $h_{f, \Sigma}(x_0)$, $f(z)$, $z$ and $x_0$ is made explicit by Eq.~\eqref{eqn:convolved_f}: the relative contribution of $f(z)$ to $h_{f, \Sigma}(x_0)$ is proportional to the exponentiated negative half Mahalanobis distance $-\frac{1}{2}d(x_0,z)_{\Sigma}$. 
The ability to adjust $\Sigma$ gives a user fine-grained and localized control over smoothing; the eigenvectors and eigenvalues of $\Sigma$ respectively encode directions of input space and a corresponding locality for their contribution to $h_{f, \Sigma}$.

We define SmoothHess as the Hessian of $h_{f, \Sigma}$: 
\begin{restatable}{definition}{SmoothHess}
    \label{def:SmootHess}
    (SmoothHess) Given ReLU network $f : \sR^d \rightarrow \sR$, point to explain $x_0 \in \sR^d$, covariance matrix $\Sigma \in \sR^{d \times d}$ and $q_\Sigma : \sR^d \rightarrow \sR$, the density function of Gaussian distribution $\mathcal{N}(0, \Sigma)$, then SmoothHess is defined to be the Hessian of $f$ convolved with $q_\Sigma$ evaluated at $x_0$: 
    \begin{equation}
    \label{eqn:SmoothHessEq}
        \nabla_x^2 h_{f, \Sigma}(x_0) = \nabla_x^2 (f \ast q_\Sigma)(x_0) = 
        \int_{z \in \sR^d} f(z) \nabla_x^2 q_\Sigma(z - x_0) dz . 
    \end{equation}
    \vspace{-4mm}
\end{restatable}

Well-known properties of the Gaussian distribution may be used to encode desiderata into the convolved function and, accordingly, to SmoothHess through the choice of the covariance.
For instance, it is known that as $d \rightarrow \infty$ the isotropic Gaussian distribution $\mathcal{N}(0,\sigma^2 I_{d \times d})$ converges to  $U(S^{d-1}_{\sigma \sqrt{d}})$, a uniform distribution over the radius $\sigma \sqrt{d}$ sphere \citep{vershynin2018high}. 
Thus, given large enough $d$, as is commonly encountered in deep learning datasets, one may choose $\Sigma = (r / \sqrt{d})I$ to approximately ensure that $h_{f, \Sigma}(x_0)$ incorporates information from the radius $r$ sphere around $x_0$. We exploit and validate this intuition in our experiments (see Table~\ref{table:PMSE}). 

Finally, we must highlight the strong connection between SmoothHess and SmoothGrad \citep{smilkov2017smoothgrad}, which \citet{wang2020smoothed} prove is equivalent to the gradient of the same smooth surrogate: $\nabla_x h_{f,\Sigma}(x_0)$. 
Thus, SmoothGrad and SmoothHess together define a second-order Taylor expansion of $h_{f, \Sigma}$ at $x_0$, which can be used as a second-order model of $f$ around $x_0$.

\textbf{SmoothHess Computation via Stein's Lemma:}
We relate our method for estimating SmoothHess for ReLU networks. 
As stated above, Lemma~\ref{lemma:secondorderstein} \emph{does not hold for ReLU networks.} 
However, we extend the arguments of \citet{wang2020smoothed} and \citet{SteinForReparam} to show  that the LHS from Lemma~\ref{lemma:secondorderstein} \emph{is equivalent to} SmoothHess for all Lipschitz continuous functions\footnote{All ReLU network outputs, internal neurons, and SoftMax probabilities are Lipschitz continuous \citep{gouk2021regularisation, gao2017properties}.}:

\begin{restatable}{proposition}{steinlemmafirstsecondsmoothgood}
\label{prop:PricesConvolvedTheorem}
Given $x_0 \in \sR^d$, $L$-Lipschitz continuous function $g : \sR^d \rightarrow \sR$, covariance matrix $\Sigma \in \sR^{d \times d}$ and random vector $\delta \in \sR^d$ distributed from $\delta \sim \mathcal{N}(0, \Sigma)$ with density function $q_\Sigma : \sR^d \rightarrow \sR$,
then
\begin{equation}
\label{eqn:steinconveq}
    \mathbb{E}_{\delta}[\Sigma^{-1}\delta [\nabla_{x} g(x_0 + \delta)]^T] = \nabla^2_x [(g \ast q_\Sigma)(x_0)] = \nabla_x^2 h_{g, \Sigma}(x_0),
\end{equation}
where $\ast$ denotes convolution.  
\end{restatable}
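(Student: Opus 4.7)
The plan is to compute $\nabla_x^2 h_{g,\Sigma}(x_0)$ directly from the convolution integral by differentiating twice in $x_0$---routing the first derivative through the Gaussian factor $q_\Sigma$ and the second through $g$---and then verify that what emerges is exactly the Stein-type expression on the LHS of Eq.~\eqref{eqn:steinconveq}. Because $q_\Sigma$ is of Schwartz class, $h_{g,\Sigma} = g \ast q_\Sigma$ is $C^\infty$ and both orderings of the two differentiations are permissible; the $L$-Lipschitz hypothesis on $g$ is used only to license an exchange of differentiation with the Gaussian expectation at the second step.

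For the first derivative I would move $\partial_{x_0}$ onto $q_\Sigma$, use $\nabla_{x_0} q_\Sigma(z-x_0) = \Sigma^{-1}(z-x_0)\, q_\Sigma(z-x_0)$, and change variables to $\delta = z - x_0 \sim \mathcal{N}(0,\Sigma)$ to obtain
\begin{equation*}
\nabla_x h_{g,\Sigma}(x_0) \;=\; \mathbb{E}_\delta\!\left[\Sigma^{-1}\delta\, g(x_0+\delta)\right].
\end{equation*}
The Leibniz step is justified because $g$ has at most linear growth (being $L$-Lipschitz) while $\|\Sigma^{-1}(z-x_0)\|\, q_\Sigma(z-x_0)$ decays at Gaussian rate in $z$, yielding an integrable dominating function uniformly in a neighbourhood of $x_0$. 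This quantity is precisely the LHS of Lemma~\ref{lemma:SteinsLemma} applied to $g$, which in turn identifies it with $\mathbb{E}_\delta[\nabla_x g(x_0+\delta)]$.

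For the second derivative I would differentiate this identity again in $x_0$, letting the derivative fall on $g(x_0+\delta)$ and exchanging with the Gaussian expectation. Componentwise, the difference quotient $t^{-1}\bigl(g(x_0+te_j+\delta)-g(x_0+\delta)\bigr)$ is bounded in absolute value by $L$, and by Rademacher's theorem $g$ is differentiable at $x_0+\delta$ for a.e.\ $\delta$ (its set of differentiability has full Lebesgue hence full Gaussian measure), so the quotient converges pointwise a.e.\ to $[\nabla g(x_0+\delta)]_j$. After multiplying by $[\Sigma^{-1}\delta]_i q_\Sigma(\delta)$ one obtains a Gaussian-integrable dominator $L\,|[\Sigma^{-1}\delta]_i|\, q_\Sigma(\delta)$, so dominated convergence yields
\begin{equation*}
[\nabla_x^2 h_{g,\Sigma}(x_0)]_{ij} \;=\; \mathbb{E}_\delta\!\left[[\Sigma^{-1}\delta]_i\, [\nabla g(x_0+\delta)]_j\right],
\end{equation*}
which is exactly the $(i,j)$ entry of $\mathbb{E}_\delta[\Sigma^{-1}\delta\,(\nabla g(x_0+\delta))^T]$, proving the claim.

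The main obstacle is this second derivative-expectation exchange. Because $g$ is only a.e.\ differentiable with $\nabla g$ discontinuous across ReLU activation boundaries, Lemma~\ref{lemma:firstorderoraclesteinlemma} (which requires locally Lipschitz $\nabla g$, as in \citet{SteinForReparam}) cannot be invoked on $g$ directly---this is precisely why the naive second-order Stein identity fails for ReLU networks. The workaround is to never differentiate $\nabla g$: the Lipschitz bound on $g$ supplies the Gaussian-integrable dominator needed for DCT, and Rademacher's theorem supplies the needed a.e.\ convergence of the difference quotients. Everything else reduces to the identity $\nabla_y q_\Sigma(y) = -\Sigma^{-1} y\, q_\Sigma(y)$ and the change of variables $\delta = z - x_0$.
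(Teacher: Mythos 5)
Your proof is correct, but it takes a genuinely different route from the paper's. The paper pushes \emph{both} derivatives onto the Gaussian kernel, obtaining $\nabla^2_x(g\ast q_\Sigma)(x_0)=\mathbb{E}_z[\Sigma^{-1}((z-x_0)(z-x_0)^T-\Sigma)\Sigma^{-1}g(z)]$ via the product rule, and then converts this zero-th-order expression back into the gradient form by invoking the integration-by-parts identity of \citet{SteinForReparam} (Lemma~\ref{lemma:linlemma}), whose hypotheses ($g$ locally Lipschitz, $\mathbb{E}_z[|g(z)|]<\infty$) it verifies with the same linear-growth bound you use. You instead push only the first derivative onto the kernel and place the second on $g$ itself, justifying the derivative--expectation exchange directly with the Lipschitz bound $L$ as a dominating constant for the difference quotients, Rademacher's theorem for a.e.\ pointwise convergence, and dominated convergence against the Gaussian-integrable weight $|[\Sigma^{-1}\delta]_i|\,q_\Sigma(\delta)$. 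Your argument is self-contained --- it never needs the external second-order Stein identity, and it makes explicit exactly where the Lipschitz hypothesis and the a.e.-differentiability of ReLU networks enter --- at the cost of a slightly more delicate limit argument; the paper's version is shorter once the cited lemma is granted and confines all the analysis to a single reference. Both are valid, and they localize the "hard step" in different places: yours in the DCT exchange, the paper's inside Lemma~\ref{lemma:linlemma}.
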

In other words, even though Lemma~\ref{lemma:secondorderstein} does not hold for ReLU networks, Stein's Lemma is indeed evaluating a Hessian: namely, SmoothHess given by Eq.~\eqref{eqn:SmoothHessEq}.
The proof consists of moving the Hessian operator on the RHS of Eq.~\eqref{eqn:steinconveq} into the integral defined by $(g \ast q_\Sigma)(x_0)$. 
The resulting expression is simplified into a form for which \citet{SteinForReparam} prove is equivalent to the LHS of Eq.~\eqref{eqn:steinconveq}. 
The proof is provided in App. \ref{app:Stein}. 

Proposition~\ref{prop:PricesConvolvedTheorem} opens up the possibility of an MC-estimate of SmoothHess \emph{that only require access to a first-order oracle} $\nabla f$. This is computed by sampling a set of $n \in \mathbb{N}$ perturbations $\{ \delta_i \}_{i=1}^n, \delta_i \sim \mathcal{N}(0, \Sigma)$, and for each $\delta_i$ querying $\nabla_{x} f(x_0 + \delta_i)$ before taking the outer product $\delta_i [\nabla_{x} f(x_0 + \delta_i)]^T \in \sR^{d \times d}$, Monte-Carlo averaging and finally symmetrizing: 
\begin{subequations}
    \label{eqn:SmoothHess_Estimator}
\begin{align}
   &  \hat{H}_n^\circ(x_0,f,\Sigma) = \frac{1}{n} \sum_{i=1}^n \Sigma^{-1} \delta_i [\nabla_{x} f(x_0 + \delta_i)]^T,  \label{eqn:unsymm_estimator} \\ 
   & \hat{H}_n(x_0,f,\Sigma) = \frac{1}{2} (\hat{H}^{\circ}_n(x_0,f,\Sigma) + \hat{H}^{\circ^T}_n(x_0,f,\Sigma)). \label{eqn:symm_step}
\end{align}
\end{subequations}
Each first-order oracle call has the same time complexity as one forward pass and may be computed efficiently in batches using standard deep learning frameworks \citep{PyTorch, TensorFlow, jax2018github}. 
As $n$ is finite, $\hat{H}^\circ_n(x_0,f,\Sigma)$ is not guaranteed to be symmetric in practice.
Thus, we symmetrize our estimator in Eq.~\eqref{eqn:symm_step}.
A straightforward consequence of Proposition~\ref{prop:PricesConvolvedTheorem} is that $\lim_{n \rightarrow \infty} \hat{H}_n(x_0,f,\Sigma) = \nabla_x^2 h_{f, \Sigma}(x_0)$, which we formally show in the Proof of Theorem~\ref{thm:SH_Complexity} in App. \ref{app:complexity}.
 
Estimation of SmoothGrad may be amortized with SmoothHess, obtaining $\nabla_x h_{f, \Sigma}(x_0)$ \emph{at a significantly reduced cost}. 
The main computational expense when estimating SmoothGrad is the querying of the first-order oracle.
As this querying is part of SmoothHess estimation, the gradients which we compute may be averaged at a minimal additional cost of $\mathcal{O}(nd)$ to obtain a SmoothGrad estimate.
Likewise, SmoothHess may be obtained at a reduced cost of $\mathcal{O}(n d^2)$, the cost of the outer products, during SmoothGrad estimation. 
For details of our algorithm, see App. \ref{app:Imp_Details}. 

Last, we prove non-asymptotic bounds for our SmoothHess estimator: 

\setcounter{theorem}{0}
\begin{restatable}{theorem}{SteinComplexity} 
\label{thm:SH_Complexity} 
    Let $f: \sR^d \rightarrow \sR$ be a piece-wise linear function over a finite partition of $\sR^d$. Let $x_0 \in \sR^d$, and denote $\{\delta_i\}_{i=1}^n$, a set of $n$ i.i.d random vectors in $\sR^{d}$ distributed from $\delta_i \sim \mathcal{N}(0, \Sigma)$. 
    Given $\hat{H}_n(x_0,f,\Sigma)$ as in Eq.~\eqref{eqn:SmoothHess_Estimator}, for any fixed $\varepsilon, \gamma \in (0,1]$, given $n \geq \frac{4}{\varepsilon^2}[\max( (C^+\sqrt{d} + \sqrt{\frac{1}{c^+} \log \frac{4}{\gamma}})^2, (C^-\sqrt{d} + \sqrt{\frac{1}{c^-} \log \frac{4}{\gamma}})^2)]$ then 
    \begin{equation}
    \label{eqn:main_bound_result}
        \mathbb{P} \bigg( \big \lVert \hat{H}_n - H  \big\rVert_2 > \varepsilon \bigg) \leq \gamma, 
    \end{equation}
    where $H = \nabla_x^2[(f \ast q_\Sigma)(x_0)]$,
    $C^+, C^- c^+, c^- >0$ are constants depending on the function $f$ and covariance $\Sigma$ and $q_\Sigma : \sR^d \rightarrow \sR$ is the density function of $\mathcal{N}(0, \Sigma)$.
\end{restatable}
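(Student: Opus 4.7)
My plan is to combine Proposition~\ref{prop:PricesConvolvedTheorem} with a concentration-of-measure argument for a sum of unbounded random matrices. First I would reduce the problem to controlling $\|\hat{H}_n^\circ - H\|_2$: by Proposition~\ref{prop:PricesConvolvedTheorem} each summand $Y_i := \Sigma^{-1}\delta_i[\nabla_x f(x_0+\delta_i)]^T$ satisfies $\mathbb{E}[Y_i] = H$, and because $H = \nabla_x^2 h_{f,\Sigma}(x_0)$ is symmetric one has $\hat{H}_n - H = \tfrac{1}{2}(\hat{H}_n^\circ - H) + \tfrac{1}{2}(\hat{H}_n^\circ - H)^T$. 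The triangle inequality then yields $\|\hat{H}_n - H\|_2 \leq \|\hat{H}_n^\circ - H\|_2$, so it suffices to obtain a $(1-\gamma)$-probability bound on the right-hand side.

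Next I would exploit the piecewise-linear structure of $f$. Since $f$ is piecewise linear over a finite partition, $\|\nabla_x f\|_2$ is bounded by some finite constant $L$ wherever it exists, and it exists almost surely under the Gaussian perturbation (the polytope boundaries have Lebesgue measure zero). Each rank-one summand therefore satisfies $\|Y_i\|_2 = \|\Sigma^{-1}\delta_i\|_2 \, \|\nabla_x f(x_0+\delta_i)\|_2 \leq L\|\Sigma^{-1}\delta_i\|_2$. Since $\Sigma^{-1}\delta_i \sim \mathcal{N}(0,\Sigma^{-1})$, standard Gaussian-norm concentration (for example Lipschitz concentration applied to $z\mapsto\|z\|_2$, or Hanson--Wright) gives a tail of the form
\[
\mathbb{P}\!\bigl(\|\Sigma^{-1}\delta_i\|_2 > \sqrt{\operatorname{tr}\Sigma^{-1}} + t\bigr) \leq 2\exp(-c\,t^2),
\]
with $c \propto 1/\|\Sigma^{-1}\|_2$. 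Inverting this with budget $\gamma/2$ produces the advertised $C\sqrt{d} + \sqrt{\tfrac{1}{c}\log(4/\gamma)}$ shape and supplies a high-probability spectral-norm envelope $\|Y_i\|_2 \leq M$ for every $i$.

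Finally I would feed this envelope into a matrix concentration argument for the symmetric random matrix $\hat{H}_n - H$. On the truncation event $\{\max_i \|\Sigma^{-1}\delta_i\|_2 \leq R\}$ the summands are deterministically bounded in operator norm, so a one-sided matrix Chernoff / Bernstein argument controls $\lambda_{\max}(\hat{H}_n - H)$ and $-\lambda_{\min}(\hat{H}_n - H)$ \emph{separately}---this is precisely what yields the two distinct pairs of constants $(C^+,c^+)$ and $(C^-,c^-)$ governing the two extremal eigenvalues, recombined via $\|\hat{H}_n - H\|_2 = \max(\lambda_{\max}, -\lambda_{\min})$. Combined with a union bound absorbing the $\gamma/2$-probability truncation event, this gives the stated threshold $n \geq (4/\varepsilon^2)\max(M_+^2, M_-^2)$.

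The \textbf{main obstacle} is that the summands $Y_i$ are \emph{not} uniformly bounded in operator norm: their spectral norm inherits the Gaussian tails of $\|\Sigma^{-1}\delta_i\|$, so classical bounded matrix Hoeffding or Bernstein inequalities do not apply off the shelf. The technical heart of the proof is marrying the Gaussian tail estimate on $\|\Sigma^{-1}\delta_i\|_2$ with a spectral-norm concentration bound for $\hat{H}_n^\circ - H$ in a way that preserves the additive $\bigl(C\sqrt{d}+\sqrt{\log(4/\gamma)/c}\bigr)$ structure and cleanly produces the ``$+$''/``$-$'' asymmetry between the two extremal eigenvalues. A secondary subtlety is that $\nabla_x f$ is only defined almost everywhere, but this is benign because the Gaussian measure places no mass on the measure-zero polytope boundaries across which $\nabla_x f$ jumps, so Proposition~\ref{prop:PricesConvolvedTheorem} unambiguously gives $\mathbb{E}[Y_i] = H$.
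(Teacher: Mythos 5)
Your high-level plan (use Proposition~\ref{prop:PricesConvolvedTheorem} to identify $\mathbb{E}[Y_i]=H$, exploit boundedness of $\nabla_x f$ from piecewise linearity, then concentrate) starts the same way as the paper, but your concentration step diverges and leaves the decisive idea unexecuted. You candidly flag that the summands are unbounded and that "marrying the Gaussian tail estimate with a spectral-norm concentration bound in a way that preserves the additive $(C\sqrt{d}+\sqrt{\log(4/\gamma)/c})$ structure" is the technical heart --- but that is exactly the step you do not supply, and the truncation-plus-matrix-Bernstein route you sketch will not deliver it as stated. Truncating introduces a bias $\lVert\mathbb{E}[Y_i\mathbb{1}_{\{\lVert\Sigma^{-1}\delta_i\rVert\le R\}}]-H\rVert$ that you never control, and matrix Bernstein carries an ambient-dimension factor in the failure probability, yielding a bound with $\log d$ terms and a Bernstein-type $\sqrt{\sigma^2\log(d/\gamma)/n}+M\log(d/\gamma)/n$ shape rather than the clean $n\gtrsim d/\varepsilon^2$ threshold in the theorem. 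Your attribution of the two constant pairs $(C^{\pm},c^{\pm})$ to the two extremal eigenvalues of the error matrix $\hat H_n-H$ is also not how they arise.

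The paper's mechanism, which is the idea you are missing, works with the \emph{symmetrized} summands rather than reducing to $\hat H_n^{\circ}$: each $A_{\delta}=\tfrac12(\Sigma^{-1}\delta[\nabla_x f(x_0+\delta)]^T+\nabla_x f(x_0+\delta)\delta^T\Sigma^{-1})$ is a rank-$\le 2$ symmetric matrix whose two nonzero eigenvalues are computed in closed form as $\lambda^{\pm}=x^Ty\pm\lVert x\rVert_2\lVert y\rVert_2$ (Lemma~\ref{thm:eigen_thm}), one nonnegative and one nonpositive. Writing $w^{\pm}_{\delta}=\sqrt{\pm\lambda^{\pm}_{\delta}}\,v^{\pm}_{\delta}$ gives the exact identity $A_{\delta}=w^+_{\delta}w^{+T}_{\delta}-w^-_{\delta}w^{-T}_{\delta}$ with $w^{\pm}_{\delta}$ sub-gaussian (since $\Sigma^{-1}\delta$ is Gaussian and $\nabla_x f$ is bounded, $\lambda^{\pm}_{\delta}$ are sub-exponential and their square roots sub-gaussian). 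This converts the problem into two instances of covariance estimation --- sums of outer products of i.i.d.\ sub-gaussian vectors \emph{with themselves} --- to which Theorem 3.39 of \citet{vershynin2010introduction} applies directly and produces precisely the additive $C^{\pm}\sqrt{d}+\sqrt{\log(4/\gamma)/c^{\pm}}$ form, with the two constant pairs coming from the sub-gaussian norms of $w^+_{\delta}$ and $w^-_{\delta}$; a triangle inequality and union bound then finish. Without this (or an equivalent) device for handling the outer product of two \emph{distinct} sub-gaussian vectors, your sketch does not close.
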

 
We elect to use the non-asymptotic bounds presented in Theorem 3.39 of \citet{vershynin2010introduction}, which hold for sums of outer products of a sub-gaussian random vector with itself.
This poses a challenge as $\hat{H}_n$ is the sum of outer products of two sub-gaussian random vectors which are \emph{not necessarily equal}. 
To deal with this issue,  we separately prove non-asymptotic bounds for the outer products of the positive and negative eigenvectors of the summands.
This is accomplished using an identity expressing the eigenvalues of the summands in terms of $\Sigma^{-1}\delta$ and $\nabla_x f(x_0 + \delta)$, which we state and prove in App.~\ref{app:complexity}.
The proof is completed by applying the triangle inequality and union bound to combine the two bounds into Eq.~\eqref{eqn:main_bound_result}. Our proof is included in App.~\ref{app:complexity}.
\section{Experiments}
\label{scn:experiments}

\subsection{Experimental Setup}
\textbf{Datasets and Models}: Experiments were conducted on a real-world spirometry regression dataset, three image datasets (MNIST \citep{lecun-mnisthandwrittendigit-2010}, FMNIST \citep{fmnist} and CIFAR10 \citep{cifar10}), and one synthetic dataset (Four Quadrant). 
The Four Quadrant dataset consists of points $x' \in \sR^2$ sampled uniformly from the grid $[-2,2] \times [2,2] \subseteq \sR^2$, with a spacing of $0.008$. 
The label of a point $y(x') = Kx'_1x'_2 \in \sR$ is chosen based upon its quadrant, with $K=5,3,12,-10$ for Quadrants $1, 2, 3$ and $4$ respectively.
We train a 5-layer network on MNIST and FMNIST a ResNet-18~\citep{He_2016_CVPR} on CIFAR10 and a 6-layer network on Four Quadrant. 
A 10-layer 1-D convolutional neural network was trained on the spirometry regression dataset.
Additional model, hyperparameter and dataset details are outlined in App.~\ref{app:experiment_setup}.

\textbf{Hardware:} Experiments were performed on an internal cluster using NVIDIA A100 GPUs and AMD EPYC223
7302 16-Core processors.

\begin{table*}[t]
    \setlength{\belowcaptionskip}{-15pt}

    \scriptsize
    \setlength{\tabcolsep}{1.0pt}
    \renewcommand{\arraystretch}{1.2}
    \centering
    \begin{tabular}{ l c c c c c c c c c c c c c c c c c c c c c c c c}
        \cline{0-18}
        \multicolumn{1}{|l|}{\emph{Dataset}} & 
        \multicolumn{6}{c|}{\textbf{MNIST}}  & 
        \multicolumn{6}{c|}{\textbf{FMNIST}} &
        \multicolumn{6}{c|}{\textbf{CIFAR10}} \\
      
        \cline{0-18}

        \multicolumn{1}{|l|}{\emph{Function}} & 
        \multicolumn{3}{c|}{Class Logit ($\downarrow$)} &
        \multicolumn{3}{c|}{Int. Neuron ($\downarrow$)} &
        \multicolumn{3}{c|}{Class Logit ($\downarrow$)} &
        \multicolumn{3}{c|}{Int. Neuron ($\downarrow$)} & 
        \multicolumn{3}{c|}{Class Logit ($\downarrow$)} &
        \multicolumn{3}{c|}{Int. Neuron ($\downarrow$)} \\ 
        
        \cline{0-18}
        \multicolumn{1}{|l|}{$\epsilon$} & 
        \multicolumn{1}{c}{$0.25$}  & 
        \multicolumn{1}{c}{$0.50$}  & 
        \multicolumn{1}{c|}{$1.00$}  & 
        \multicolumn{1}{c}{$0.25$}  & 
        \multicolumn{1}{c}{$0.50$}  &  
        \multicolumn{1}{c|}{$1.00$}  & 
        \multicolumn{1}{c}{$0.25$} & 
        \multicolumn{1}{c}{$0.50$}  &  
        \multicolumn{1}{c|}{$1.00$}  &
         \multicolumn{1}{c}{$0.25$}  & 
        \multicolumn{1}{c}{$0.50$}  &  
        \multicolumn{1}{c|}{$1.00$}  &
          \multicolumn{1}{c}{$0.25$}  & 
        \multicolumn{1}{c}{$0.50$}  &  
        \multicolumn{1}{c|}{$1.00$}  &
         \multicolumn{1}{c}{$0.25$}  & 
        \multicolumn{1}{c}{$0.50$}  &  
        \multicolumn{1}{c|}{$1.00$}  
        \\ 
        
        \cline{0-18} 
        \multicolumn{1}{|l|}{SH+SG (Us)} &
            \textbf{9.6e-7} &
            \textbf{7.8-6} &
            \multicolumn{1}{c|}{\textbf{6.7e-5}} &
            \textbf{4.9e-8} &
            \textbf{4.0e-7}  &
             \multicolumn{1}{c|}{\textbf{3.3e-6}} &
             \textbf{6.5e-7} &
            \textbf{4.0e-6} &
            \multicolumn{1}{c|}{\textbf{4.3e-5}} 
           &  \textbf{2.0e-8} 
           & \textbf{1.8e-7}  
            & \multicolumn{1}{c|}{\textbf{1.6e-6}}  
             & \textbf{9.8e-4}
            & \textbf{2.2e-2}
           &  \multicolumn{1}{c|}{\textbf{1.2e-1}}  
             & \textbf{8.1e-7}
            &  \textbf{1.4e-5}
           &  \multicolumn{1}{c|}{\textbf{1.6e-4}}   
            \\


        \multicolumn{1}{|l|}{SG~\cite{smilkov2017smoothgrad}} &
            4.5e-6 &
            4.1e-5 &
            \multicolumn{1}{c|}{3.9e-4} &
            2.1e-7 &
            1.7e-6  &
            \multicolumn{1}{c|}{ 1.5e-5} &
            3.0e-6 & 
            2.7e-5 &
            \multicolumn{1}{c|}{2.6e-4} 
             & 1.0e-7 
            & 9.0e-7
              &  \multicolumn{1}{c|}{7.0e-6} 
              & 1.3e-2
            & 8.6e-2
              &  \multicolumn{1}{c|}{4.9e-1} 
              & 1.3e-5
            &   1.1e-4
              &  \multicolumn{1}{c|}{8.3e-4} 
            \\

        \multicolumn{1}{|l|}{SP H + G} &
             1.2e-6 &
            9.6e-6 &
            \multicolumn{1}{c|}{8.1e-5} &
            5.5e-8 &
            4.4e-7 &
            \multicolumn{1}{c|}{3.7e-6} &
            9.6e-7 & 
            7.5e-6 &
            \multicolumn{1}{c|}{6.5e-5} & 
             3.0e-8 &
            2.1e-7 &
            \multicolumn{1}{c|}{1.8e-6}
            & 2.1e-3
            & 3.3e-2
              &  \multicolumn{1}{c|}{2.5e-1} 
              &  1.1e-5
            & 1.0e-4
              &  \multicolumn{1}{c|}{7.0e-4} 
            \\
        \multicolumn{1}{|l|}{SP G} &
            4.6e-6 &
             4.1e-5 &
            \multicolumn{1}{c|}{3.9e-4} &
            2.1e-7 &
            1.7e-6  &
             \multicolumn{1}{c|}{1.5e-5} &
           3.2e-6   &
           2.8e-5 &
            \multicolumn{1}{c|}{2.6e-4} 
             & 1.0e-7
            & 8.5e-7
            &  \multicolumn{1}{c|}{7.2e-6} 
            & 1.3e-2
            & 9.0e-2
              &  \multicolumn{1}{c|}{5.2e-1} 
              & 5.1e-5
             & 2.9e-4
              &  \multicolumn{1}{c|}{1.6e-3} 
            \\
            
         \multicolumn{1}{|l|}{G~\cite{Simonyan2014DeepIC}} &
            4.2e-3 &
            1.7e-2 &
            \multicolumn{1}{c|}{6.7e-2} &
            2.0e-3 &
            7.0e-3 &
            \multicolumn{1}{c|}{2.9e-2} &
           3.8e-3  & 
           1.5e-2&
            \multicolumn{1}{c|}{6.0e-2} 
            & 1.0e-4
            & 4.0e-4 
            &  \multicolumn{1}{c|}{1.8e-3} 
            &  3.0e-1
            & 1.2e-0
              &  \multicolumn{1}{c|}{5.0e-0} 
              & 9.0e-4
            & 3.5e-3
              &  \multicolumn{1}{c|}{1.4e-2}  \\
              
        \cline{0-18} 
    \end{tabular}
    \caption{ 
    Average $\mathcal{P}_{MSE}$ at three radii $\epsilon$. 
    Results are calculated for SmoothHess + SmoothGrad (SH+SG,Us) SmoothGrad (SG) SoftPlus Gradient (SP G) SoftPlus Hessian + SoftPlus Gradient (SP (H + G)) and the vanilla Gradient (G).
    Results are provided for the predicted class logit, and the penultimate neuron maximally activated by the "three," dress and cat classes for MNIST, FMNIST and CIFAR10 respectively.
    While SP (H + G) and SP G results are reported using the best $\beta$ \emph{chosen from over $100$ values} based on validation set performance, \emph{only $3$ values were checked} for SH + SG and SG based upon $\sigma = \epsilon \backslash \sqrt{d}$. 
    SH + SG outperforms the competing methods for all 18 permutations of dataset, function and $\varepsilon$.
    }

    \label{table:PMSE}
    
    \end{table*}

\textbf{Metrics}: 
Gradient-based explainers are evaluated using a Taylor expansion-like function as a proxy.
We estimate a gradient Hessian pair $\tilde{G}_{(x_0,g)} \in \sR^d, \tilde{H}_{(x_0,g)} \in \sR^{d \times d }$ for some function $g : \sR^d \rightarrow \sR$ around point $x_0$.
Here $g$ is either a smooth-surrogate of $f$ or $f$ itself.
We define the following Taylor expansion-like function:
\begin{equation}
\label{eq:taylor_like_func}
    \tilde{f}_{x_0,\tilde{G}_g,\tilde{H}_g}(\Delta) =  f(x_0) + \tilde{G}_g^T (\Delta - x_0) + \frac{1}{2}(\Delta - x_0)^T\tilde{H}_g(\Delta -x_0). 
\end{equation}
For readability, we remove the dependence on $x_0$ from $\tilde{G}_{(x_0,g)}$ and $\tilde{H}_{(x_0,g)}$. 
Eq.~\eqref{eq:taylor_like_func} is almost equivalent to the Taylor-expansion of $g$ at $x_0$, with the key difference that the zero-th order term is $f(x_0)$ as opposed to $g(x_0)$. 
This is done to isolate the impact of the explainers $\tilde{G}_g$ and $\tilde{H}_g$.
Setting $\tilde{H} = 0$ yields $\tilde{f}_{x_0,\tilde{G}_g,0}(\Delta)$, a first-order Taylor expansion-like function. For brevity, we refer to $\tilde{f}_{x_0,\tilde{G}_g,\tilde{H}_g}$ simply as a Taylor expansion below. 
We use the following two metrics to quantify the efficacy of SmoothHess:

\textbf{Perturbation MSE}: We introduce the Perturbation Mean-Squared-Error ($\mathcal{P}_{MSE}$) to assess the ability of $\tilde{H}$ and/or $\tilde{G}$ to capture the behaviour of $f$ over a given neighborhood. 
Given point $x_0$ and radius $\varepsilon > 0$,
the $\mathcal{P}_{MSE}$ directly measures the fidelity of $\tilde{f}_{x_0, \tilde{G}_g, \tilde{H}_g}$ to $f$ when restricted to the ball $B_\varepsilon(x_0)$:
\begin{equation}
   \mathcal{P}_{MSE}(x_0, f, \tilde{f}_{x_0, \tilde{G}_g, \tilde{H}_g}, \varepsilon) =  \frac{1}{\text{Vol}(B_\varepsilon(x_0))}\int_{x' \in B_\varepsilon(x_0)} (\tilde{f}_{x_0, \tilde{G}_g, \tilde{H}_g}(x') - f(x') )^2 dx' 
\end{equation}
A low $\mathcal{P}_{MSE}$ value indicates that $\tilde{f}_{x_0, \tilde{G}_g, \tilde{H}_g}$ is a good fit to $f$ when restricted to $B_\varepsilon(x_0)$, and thus that $\tilde{G}_g$ and $\tilde{H}_g$ capture the behaviour of $f$ over $B_\varepsilon(x_0)$.
$\mathcal{P}_{MSE}$ may be estimated for a point $x_0$ by sampling a set of $n \in \mathbb{N}$ points $\{x'_i\}_{i=1}^n$ uniformly from $B_\varepsilon(x_0)$, computing the errors and MC-averaging: $\frac{1}{n} \sum_{i=1}^n (\tilde{f}_{x_0, \tilde{G}_g, \tilde{H}_g}(x'_i) - f(x'_i))^2 \approx \mathcal{P}_{MSE}(x_0, f, \tilde{f}_{x_0, \tilde{G}_g, \tilde{H}_g}, \varepsilon)$. 

\textbf{Adversarial Attacks}:
Given $\tilde{G}$ and $\tilde{H}$, one can use $\tilde{f}_{x_0, \tilde{G}, \tilde{H}}$ to construct adversarial attacks of any desired magnitude $\varepsilon >0$. 
We denote $\Delta^* \in \sR^d$ as the perturbed input, and corresponding attack vectors by $\Delta^* - x_0$. 
Given only $\tilde{G}_g$ (i.e. $\tilde{H}_g = 0$), the first-order Taylor expansion attack yields: $\Delta^* -x_0 = -\varepsilon \tilde{G}_g \backslash \lVert \tilde{G}_g \rVert_2$. 
Otherwise, the attack may be framed as the solution to a quadratic optimization minimizing the second-order Taylor expansion:
\begin{subequations}
\begin{align}
    \underset{\Delta \in \sR^d}{\min} \ \ \  \tilde{f}_{x_0, \tilde{G}_g, \tilde{H}_g}(\Delta), \quad  s.t. \quad \lVert \Delta - x_0 \rVert_2 \leq \varepsilon. 
\end{align}
\end{subequations}
Although this optimization is non-convex (as $\tilde{H}_g$ is not guaranteed to be positive-semi-definite), it can be solved exactly \citep{boyd2004convex}. For implementation details, and a discussion of the similarities and differences with \citet{singla2019understanding}, see App.~\ref{app:Imp_Details}. 
We set $g$ to be the smoothed version of the predicted class SoftMax probability function. 
Given a set of test points $\{x_i\}_{i=1}^n, n \in \mathbb{N}$, we validate the efficacy of our attacks using the average post-hoc accuracy metric $\frac{1}{n}\sum_{i=1}^n \mathbb{I}[\text{argmax}_t F(x_i) = \text{argmax}_t F(\Delta_i^*)]$, where $\mathbb{I}$ denotes the indicator function, and $\Delta_i^*$ the output after $x_i$ is attacked.

\textbf{Setup}: We provide the details for our experiments below: 

\textbf{Four Quadrant:} We train the network to memorize the Four Quadrant dataset. We measure the interactions at $x_0 = (0,0)^T$ using both SmoothHess and SoftPlus Hessian, estimated with granularly sampled $\sigma^2 \in \{1e\text{-}3,\ldots,1\}$ and $\beta \in \{1e\text{-}1,\ldots,4 \}$.  
 
$\boldsymbol{\mathcal{P}_{MSE}:}$ We set $g$ to be the smoothed version of a predicted class logit or a penultimate neuron.
The penultimate neuron is chosen to be the maximally activated neuron on average over the train data by the "three," dress and cat classes for MNIST, FMNIST and CIFAR10, respectively.
We compute $\mathcal{P}_{MSE}$ for three neighborhood sizes 
$\varepsilon \in \{0.25, 0.50, 1.0 \}$. 
While \emph{over one-hundred values} of $\beta$ are checked on a validation set before selection, \emph{only three values} of $\sigma$ are checked for SmoothHess and SmoothGrad, based on the common-sense criterion $\sigma = \varepsilon/\sqrt{d}: \ \sigma \in \{\varepsilon/2\sqrt{d}, 3\varepsilon/4\sqrt{d}, \varepsilon/\sqrt{d} \}$ outlined in Sec~\ref{scn:SmoothHess}. The standard deviation of $\mathcal{P}_{MSE}$ results is reported in App.~\ref{app:Additional_Experiments}.   

\begin{table*}[t]

    \scriptsize
    \setlength{\tabcolsep}{2.5pt}
    \renewcommand{\arraystretch}{1.2}
    \centering
    \begin{tabular}{ l c c c c c c c c c c c c c c c}
        \cline{0-15}
        \multicolumn{1}{|l|}{\emph{Dataset}} & 
        \multicolumn{5}{c|}{\textbf{MNIST}}  & 
        \multicolumn{5}{c|}{\textbf{FMNIST}} &
        \multicolumn{5}{c|}{\textbf{CIFAR10}} \\
      
        \cline{0-15}

        \multicolumn{1}{|l|}{\emph{Attack Magnitude} $\epsilon$} & 
        \multicolumn{1}{c}{$0.25$}  & 
        \multicolumn{1}{c}{$0.50$}  &
        \multicolumn{1}{c}{$0.75$}  & 
        \multicolumn{1}{c}{$1.25$}  & 
        \multicolumn{1}{c|}{$1.75$}  & 
        \multicolumn{1}{c}{$0.25$}  & 
        \multicolumn{1}{c}{$0.50$}  &
        \multicolumn{1}{c}{$0.75$}  & 
        \multicolumn{1}{c}{$1.25$}  & 
        \multicolumn{1}{c|}{$1.75$}  & 
        \multicolumn{1}{c}{$0.1$} & 
        \multicolumn{1}{c}{$0.2$}  & 
        \multicolumn{1}{c}{$0.3$}  & 
        \multicolumn{1}{c}{$0.4$}  &
        \multicolumn{1}{c|}{$1.0$}   \\ 
        
        \cline{0-15} 
            
        \multicolumn{1}{|l|}{SH+SG (Us)} &
            \textbf{93.0} &
            \textbf{80.3} & 
            \textbf{48.0} &
            \textbf{10.5} &
            \multicolumn{1}{c|}{\textbf{2.0}} &
             \textbf{79.5} &
             \textbf{46.8} & 
            \textbf{25.0} &
            \textbf{3.5} &
            \multicolumn{1}{c|}{\textbf{0.0}} 
             & \textbf{62.5}
            & \textbf{38.5}
            & \textbf{26.5}
           &  \multicolumn{1}{c}{\textbf{15.0}}   
           & \multicolumn{1}{c|}{4.5}
            \\

        \multicolumn{1}{|l|}{SG~\cite{smilkov2017smoothgrad}} &
           93.3 &
           81.8 & 
            48.8 &
            11.3 &
            \multicolumn{1}{c|}{2.8} &
            \textbf{79.5} & 
            49.3 & 
            26.3 &
            4.0 &
            \multicolumn{1}{c|}{\textbf{0.0}}  
             & 65.0
            & 42.0
            & 27.5
              &  \multicolumn{1}{c}{17.0} 
            & \multicolumn{1}{c|}{\textbf{0.0}}

            \\

        \multicolumn{1}{|l|}{SP (H + G)} &
            \textbf{93.0} &
            81.8 & 
            51.5 &
            15.8 & 
            \multicolumn{1}{c|}{7.5} &
           79.8 & 
           51.0 & 
            27.5 &
            5.3 &
            \multicolumn{1}{c|}{0.8} & 
             64.5 &
            42.0 &
             31.0 &
            \multicolumn{1}{c}{23.5}
             & \multicolumn{1}{c|}{7.5}

            \\
            
        \multicolumn{1}{|l|}{SP G} &
            93.3 &
            82.3 & 
            53.8 &
            16.3 & 
           \multicolumn{1}{c|}{5.0} &
            79.8 &
            51.5 & 
            29.5 &
            7.8 &
            \multicolumn{1}{c|}{1.0} 
             & 66.5
            &  47.5
            &  36.0
            &  \multicolumn{1}{c}{29.5} 
           & \multicolumn{1}{c|}{8.5}
           
            \\
            \multicolumn{1}{|l|}{G \cite{Simonyan2014DeepIC}} &
            93.3 &
            82.8 & 
            56.0 &
            18.5 & 
            \multicolumn{1}{c|}{8.8} &
            80.3 &
            52.3 & 
            31.8 &
            11.0 & 
            \multicolumn{1}{c|}{2.5} &
            69.0 &
            51.5 &
            41.0 & 
            \multicolumn{1}{c}{34.0} 
            & \multicolumn{1}{c|}{21.5}
            \\

        \multicolumn{1}{|l|}{Random} &
            99.8 &
            99.5 & 
            99.0 &
            99.0 & 
            \multicolumn{1}{c|}{98.8} &
            99.3 &
            98.0 & 
            97.3 &
            95.5 & 
            \multicolumn{1}{c|}{93.8} &
           100.0  & 
             99.5 & 
            99.0 &
            \multicolumn{1}{c}{98.5} 
            & \multicolumn{1}{c|}{96.5}
            \\
       
        \cline{0-15}
    
    \end{tabular}
    \caption{
    Post-hoc accuracy of adversarial attacks performed on the predicted SoftMax probability, at five attack magnitudes $\epsilon$.
    Lower is better.
    Results for SmoothHess + SmoothGrad (SH + SG, Ours), SmoothGrad (SG), SoftPlus Gradient (SP G) and SoftPlus Hessian + SoftPlus Gradient (SP (H + G)) are reported using parameters $\Sigma = \sigma^2 I \in \sR^{d \times d}$ and $\beta > 0 $ chosen based upon performance on a held-out validation set. 
    We additionally compare with the vanilla (unsmoothed) Gradient (G). 
    First order attack vectors are constructed by scaling the normalized gradient by $\epsilon$ and subtracting from the input. 
    Second order attack vectors are found by minimizing the corresponding second-order Taylor expansions. 
    }

    \label{tab:adv}
    \end{table*}

\textbf{Adversarial Attacks:} Adversarial attacks are performed after selecting the best $\beta$ and $\sigma$ from a held-out validation set.
We refrain from using the criterion $\sigma = \varepsilon/\sqrt{d}$ for adversarial attacks, as they rely upon the extremal, as opposed to average, behavior of $f$. 
We check between $\approx 10$ and $\approx 30$ values of $\sigma$ and $\beta$ on the held-out validation set before selecting the values resulting in the most effective attacks. 
Attacks are performed using magnitudes $\varepsilon \in \{0.25, 0.50, 0.75, 1.25, 1.75 \}$ for MNIST and FMNIST and $\varepsilon \in \{0.1, 0.2, 0.3, 0.4, 1.0 \}$ for CIFAR10, which is easier to successfully attack at lower magnitudes due to it's complexity. For more details see App.~\ref{app:Imp_Details}.

\textbf{Competing Methods}: 
We compare SmoothHess with other gradient-based methods that model $f$ locally; i.e. those which can be associated with the Taylor expansion around a smooth surrogate of $f$ or $f$ itself. 
Specifically, we compare with the Gradient and Hessian of SoftPlus smoothed network $f_\beta$ \citep{geometryblame, janizek2021explaining}, SmoothGrad \citep{smilkov2017smoothgrad} and the vanilla (unsmoothed) Gradient. 
We also compare adversarial attacks with random vectors scaled to the attack magnitude $\varepsilon$ as a baseline.

Additional experiments are provided in App.~\ref{app:Additional_Experiments}. 
We compare the efficacy of SmoothHess with the unsmoothed Hessian (for adversarial attacks on SoftMax) and Swish \citep{ramachandran2017searching} smoothed networks, both of which our method outperforms. 
We also run the $\mathcal{P}_{MSE}$ experiment using a ResNet101 model, validating the superior ability of SmoothHess to capture interactions in larger networks.

\subsection{Results}

\textbf{Symmetric Smoothing - Four Quadrant Dataset:}
\label{scn:exp_toy}
We investigate the ability of both SmoothHess and the SoftPlus Hessian to capture local feature interactions symmetrically. Results are shown in Figure \ref{fig:FourQuadrantToy}.  At each value of $\sigma^2$, aside from extremely small $\sigma^2 < 1e$-$2.5$,  the off-diagonal element of SmoothHess is approximately equal to $2.5$, the average interaction over the quadrants. 
The off-diagonal element of the SoftPlus Hessian is essentially never equal to $2.5$. 
The results for SmoothHess follow from the fact that an isotropic covariance was used, the rotation invariance of which weights points that are equidistant from $x$ equally.
The SmoothHess off-diagonal element is not $\approx 2.5$ at small $\sigma$ because, despite the network being trained to memorize the data, such a small neighborhood will inherently reflect noise. 
The inability of SoftPlus to capture interactions symmetrically follows from the fact that smoothing is done internally on each neuron, which does not guarantee symmetry. 

\textbf{Perturbation Mean-Squared-Error:}
\label{scn:exp_PMSE}
We show SmoothHess can be used to capture interactions occurring in a variety of neighborhoods around $x$. It is shown in Table~\ref{table:PMSE} that SmoothHess + SmoothGrad (SH + SG) achieves the lowest $\mathcal{P}_{MSE}$ for all $18$ combinations of dataset, function and neighborhood size $\varepsilon$.
We emphasize that this is despite the fact that \emph{only three values} of $\sigma$ were validated for SG + SH and SG on a held-out set compared to \emph{over one-hundred values} of $\beta$ for SP (H + G) and SP G.
This indicates the superior ability of SmoothHess to model $f$ over a diverse set of neighborhood sizes $\varepsilon$.
Further, it can be seen that second-order methods SH + SG and SP (H + G) achieve significantly lower $\mathcal{P}_{MSE}$ than their first-order counterparts, SG and SP G, respectively, sometimes by an order of magnitude.
This confirms the intuition that higher order Taylor expansions of the smooth surrogate provide more accurate models of $f$.
Interestingly, while SH + SG is clearly superior to SP (H + G), we see that SG and SP G are tied in many cases. 
This could indicate that the symmetric properties of Gaussian smoothing are comparatively more important when higher-order derivatives are considered.
However, more investigation is needed.

\textbf{Adversarial Attacks:}
\label{scn:exp_adv}
We use the interactions found by SmoothHess to generate adversarial attacks on the classifier. We see from Table~\ref{tab:adv} that the Taylor expansion associated with SmoothHess and SmoothGrad generates the most powerful adversarial attacks for all datasets and attack magnitudes $\varepsilon$, aside from CIFAR10 at $\varepsilon=1.0$ where both methods successfully change most class predictions.
The superiority of the SmoothHess attacks indicates that the Gaussian smoothing is capturing the extremal behavior of the predicted SoftMax probability more effectively than the other methods.
One hypothesis for SmoothGrad outperforming SmoothHess in the largest neighborhood for CIFAR10 is that the network behavior is highly complex over a large area, and adding higher order terms decreases performance. 

\begin{figure}[t]
   \begin{center}
   \includegraphics[width= 1\linewidth]{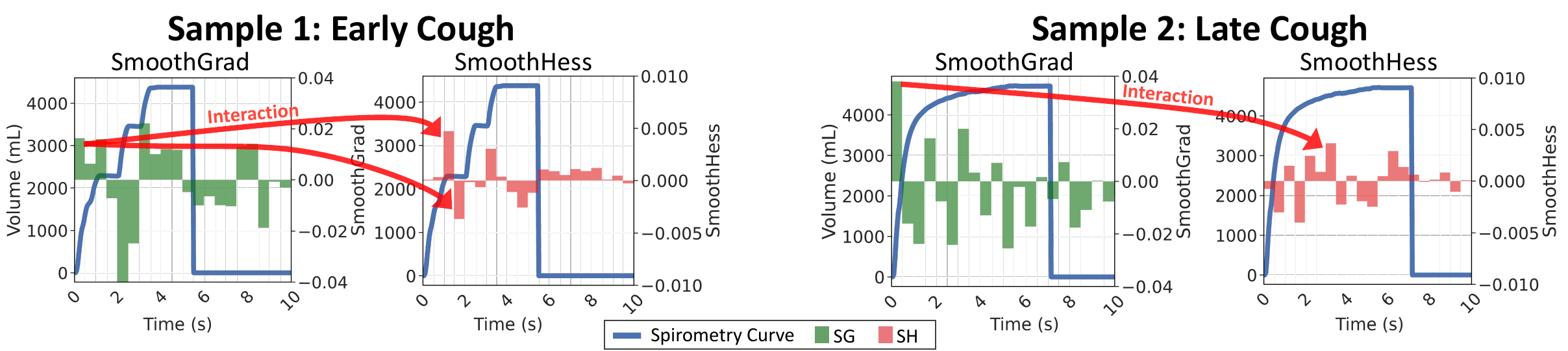}
   \end{center}
   \caption{
   Evaluation of network predictions for two spirometry samples (Left and Right), using SmoothGrad and SmoothHess.
   Spirometry curves are plotted in blue. SmoothGrad (green) and SmoothHess (red) attributions are grouped into 0.5s intervals and plotted as bars for each interval. 
   The plotted SmoothHess values are interactions with respect to the first 0.5s time interval.
   The red arrows point to the features that exhibit the largest interactions (endpoint) with the initial 0.5s (source).
   Sample 1 exhibits coughing within the first 4 seconds, as indicated by plateauing spirometry curve.
   We observe that the strongest interactions for Sample 1 occur at this initial cough-induced plateau.
   In contrast, Sample 2 exhibits no coughing within the first 4 seconds; the strongest interactions occur later near small fluctuations in the spirometry curve.
   }
   \label{fig:spirometry}
   \vspace{-0.1in}
\end{figure}  
\vspace{2mm}
\textbf{Qualitative Analysis of \fev{} Prediction using Rejected Spirometry:}
\vspace{-1mm}
\label{scn:exp_spiro}

A spirometry test is a common procedure used to evaluate pulmonary function, and is the main diagnostic tool for lung diseases such as Chronic Obstructive Pulmonary Disease \citep{johnsDiagnosisEarlyDetection2014}. During an exam, the patient blows into a spirometer, which records the exhalation as a volume-time curve.
Recent works have investigated the use of deep learning on spirograms for tasks such as subtyping \citep{Surya_Sandeep_2020}, genetics \citep{cosentinoInferenceChronicObstructive2023, yunUnsupervisedRepresentationLearning}, and mortality prediction \citep{deep_learning_utilizing_suboptimal_spirometry}. 
Additional background on spirometry is outlined in App. \ref{app:experiment_setup}.

\begin{wrapfigure}{r}{0.45\textwidth}
    \vspace{-8pt}
   \begin{center}
   \includegraphics[width= 1\linewidth]{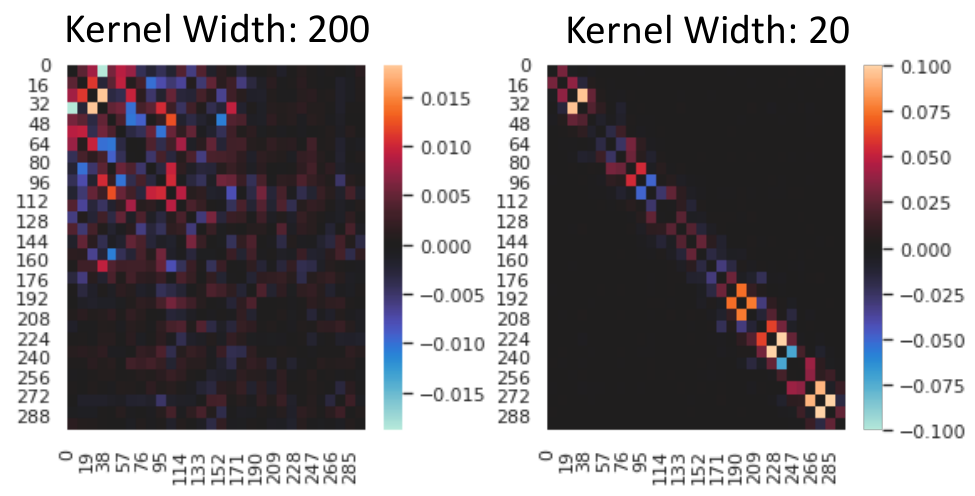}
    \caption{Heatmap of the SmoothHess interactions for the spirometry sample in Fig. \ref{fig:spirometry}, calculated on CNNs with varying convolution kernel width.}
    \label{fig:spirometry2}
   \end{center}
\end{wrapfigure}

Traditionally, exhalations interrupted by coughing are discarded for quality control reasons. 
We train a CNN on raw spirograms from the UK Biobank \citep{ukbiobank_study} to predict the patient's "Forced Expiratory Volume in 1 Second" (\fev{}), a metric frequently used to evaluate lung health, using efforts that were rejected due to coughing.
In Fig. \ref{fig:spirometry}, we apply SmoothHess and SmoothGrad on two spirometry samples to understand their \fev{} predictions.
In Sample 1, coughing occurs within the first 4 seconds of the curve, as evidenced by the early plateauing of the curve which indicates a pause in exhalation. In contrast, Sample 2 exhibits no detectable coughing within the first 4 seconds.
To improve the interpretability of the results, we group the features into 0.5 second time intervals.
\fev{} is traditionally measured in the initial 2 seconds of the non-rejected samples (see App. \ref{app:experiment_setup}), therefore we calculate SmoothHess interactions with respect to the first 0.5 second time interval.

\fev{} is known to be strongly affected by the presence of coughing \citep{luo2017automatic}.
The time intervals where coughing occurs, indicated by plateaus in the spirometry curves in Figure \ref{fig:spirometry}, should be an important signal for any model trained to predict \fev{}.
Indeed, we observe that for Sample 1, the SmoothGrad attribution for the first 0.5s interval is relatively low, with strong interactions occurring at the cough-induced plateaus.
In contrast, the first 0.5s interval for Sample 2 shows high importance, with lower magnitude interactions that may be indicative of small fluctuations in the volume-time curve.

In Figure \ref{fig:spirometry2} we present the SmoothHess matrix for Sample 1 in Fig. \ref{fig:spirometry}, applied on two CNN models with different convolution kernel width.
Interestingly, the smaller kernel width constrains the interactions to features that are spatially close together. 
In contrast, the features in the large kernel model have long-ranging interactions.
These results agree with the intuition that smaller kernel widths, which constrain the receptive field of neurons up until the final layers, may correspondingly limit feature interactions. 
\section{Conclusion and Future Work}
\label{scn:conclusion}
We introduce SmoothHess, the Hessian of the network convolved with a Gaussian, as a method for quantifying second-order feature interactions.
SmoothHess estimation, which relies only on gradient oracle calls, and cannot be performed with naive MC-averaging, is made possible by an extension of Stein's Lemma that we derive.
We provide non-asymptotic complexity bounds for our estimation procedure.
In contrast to previous works, our method can be run post-hoc, does not require architecture changes, affords the user localized flexibility over the weighting of input space, and can be run on any network output.
We experimentally validate the superior ability of SmoothHess to capture interactions over a variety of neighborhoods compared to competing methods.
Last, we use SmoothHess to glean insight into a network trained on real-world spirometry data.

\textbf{Limitations:} The outer product computations in SmoothHess estimation have space and time complexity $\mathcal{O}(d^2)$. When $d \gg  0$ this becomes expensive: for instance ImageNet~\citep{deng2009imagenet} typically has $d^2 \approx 10^{10}$. This is a common problem for all interaction effect methods.
We leave as future work to explore computationally efficient alternatives or approximations to these outer products; a potential remedy is to devise an appropriate power method to estimate the top eigenvectors of SmoothHess, rather than its entirety, which fits well our estimation via sampling  low-rank matrices.
\newpage
\section*{Acknowledgements}

This project was supported by NIH grants R01CA240771 and U24CA264369 from NCI, in part by MSKCC’s Cancer Center core support NIH grant P30CA008748 from NCI, and NIH 2T32HL007427-41.


\bibliographystyle{plainnat}
\bibliography{neurips_2023}

\clearpage 
\appendix 

\section{Societal Impacts}
\label{app:societalimpacts}

Machine learning models have become increasingly pervasive in society, from medicine \cite{50013, roy2022does, codella2019skin} and law \citep{DBLP:journals/corr/abs-2106-10776} to entertainment\citep{Netflix, bazzaz2023active, summerville2018procedural}. 
Therefore, it is important that users of the technology understand the factors underlying model predictions. 
To this end we propose SmoothHess for quantifying the feature interactions affecting model output. 
Potential applications of our method are widespread;
SmoothHess may be used to find interactions influencing a model to predict whether a customer will default on a credit loan or if a patient has melanoma. 
The deeper understanding of the model gleaned from SmoothHess may be used to improve decision making.
For instance, a doctor may notice that the SmoothHess feature interactions between the pixels in an image of a lesion don't "make sense", indicating that the model should not be trusted and that a more granular human assessment is required.
However, such applications are highly sensitive and the cost of inaccurate predictions, or, in this case, misinterpreted attributions, can be high.  
An inaccurate interpretation can instill unwarranted trust, or mistrust, in a user.
In the most extreme cases this may lead to sub-optimal decision making such as the confident denial of a credit loan to a trustworthy customer or misdiagnosis of a benign lesion as malignant. 

In addition, it is important to note the challenges related to ensuring robust, stable, and trustworthy explanations. 
In particular, recent works have uncovered issues related to the sensitivity of the explainer to small changes in the input \citep{alvarez-melis2018, zulqarnain_lipschitzness}, adversarial attacks \citep{ghorbaniInterpretationNeuralNetworks2019, geometryblame, hsiehEvaluationsMethodsExplanation2021}, or hyperparameter tuning \citep{bansalSAMSensitivityAttribution2020}. 
Methods have been proposed that attempt to quantify explanation uncertainty \citep{reliable:neurips21,cxplain, hill2022explanation}, however further challenges remain.
Thus, as with all methods for explaining machine learning model predictions, we recommend that SmoothHess is used in tandem with the careful consideration of domain experts, who are best equipped to interpret interactions in the context of their field.

\section{Proof of Proposition \ref{prop:PricesConvolvedTheorem}}
\label{app:Stein}
\subsection{Preliminary}

We make use of a Lemma from \citet{SteinForReparam} in our proof below, which we relate here: 

\setcounter{theorem}{2}
\begin{restatable}{lemma}{LinLemma} (\citet{SteinForReparam})
\label{lemma:linlemma}
Denote $x_0 \in \sR^d$, locally-Lipschitz continuous function $h(z) : \sR^d \rightarrow \sR$, covariance matrix $\Sigma \in \sR^{d \times d}$, random vector $z \in \sR^d$ distributed from $z \sim \mathcal{N}(x_0, \Sigma)$. If $\mathbb{E}_{z}[|h(z)|] < \infty$, then 
\begin{equation}
    \mathbb{E}_z [ \Sigma^{-1} ((z-x_0) (z-x_0)^T  - \Sigma ) \Sigma^{-1}h(z) ] = \mathbb{E}_z[\Sigma^{-1}(z-x_0) [\nabla_z h(z)]^T]. 
\end{equation}
\end{restatable}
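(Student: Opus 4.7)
The plan is to derive Lemma \ref{lemma:linlemma} from a single application of the first-order Stein identity (Lemma \ref{lemma:SteinsLemma}) to the auxiliary scalar functions $g_k(\delta) := \delta_k\, h(x_0+\delta)$ for $k = 1, \ldots, d$, followed by a short linear-algebraic rearrangement. I first change variables to $\delta := z - x_0 \sim \mathcal{N}(0,\Sigma)$, so that the target identity becomes
\[
\mathbb{E}_\delta\!\left[\Sigma^{-1}(\delta\delta^T - \Sigma)\Sigma^{-1} h(x_0+\delta)\right] = \mathbb{E}_\delta\!\left[\Sigma^{-1}\delta\,\bigl(\nabla h(x_0+\delta)\bigr)^T\right].
\]
Since $h$ is locally Lipschitz, Rademacher's theorem ensures $h$ and each $g_k$ are differentiable almost everywhere, and the product rule yields $\nabla g_k(\delta) = e_k\, h(x_0+\delta) + \delta_k\, \nabla h(x_0+\delta)$ a.e., where $e_k$ denotes the $k$-th standard basis vector.

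Applying Lemma \ref{lemma:SteinsLemma} to each $g_k$ gives $\mathbb{E}[\Sigma^{-1}\delta\,\delta_k\, h(x_0+\delta)] = e_k \mathbb{E}[h(x_0+\delta)] + \mathbb{E}[\delta_k \nabla h(x_0+\delta)]$. Using the identities $\mathbb{E}[\Sigma^{-1}\delta\,\delta_k h] = \Sigma^{-1}\mathbb{E}[\delta\delta^T h]\,e_k$ and $e_k = \Sigma^{-1}\Sigma\,e_k$, rearranging produces the column identity $\mathbb{E}[\delta_k \nabla h(x_0+\delta)] = \Sigma^{-1}\mathbb{E}[(\delta\delta^T - \Sigma)\,h(x_0+\delta)]\,e_k$. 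Stacking these $d$ vector identities as the columns of a $d\times d$ matrix yields $\mathbb{E}[\nabla h(x_0+\delta)\,\delta^T] = \Sigma^{-1}\mathbb{E}[(\delta\delta^T - \Sigma)\,h(x_0+\delta)]$, and right-multiplying by $\Sigma^{-1}$ gives
\[
\mathbb{E}_\delta\!\left[\nabla h(x_0+\delta)\,\delta^T \Sigma^{-1}\right] = \Sigma^{-1}\mathbb{E}_\delta\!\left[(\delta\delta^T - \Sigma)\,h(x_0+\delta)\right]\Sigma^{-1},
\]
whose right-hand side is precisely the target LHS. Because the matrix $\Sigma^{-1}(\delta\delta^T - \Sigma)\Sigma^{-1}$ is manifestly symmetric (a symmetric sandwich of $\Sigma^{-1}$ around a symmetric core, scaled by the scalar $h$), its expectation is symmetric. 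Taking the transpose of the displayed equation therefore identifies the left side with the target RHS $\mathbb{E}[\Sigma^{-1}\delta\,(\nabla h(x_0+\delta))^T]$, completing the proof.

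The main obstacle is verifying the integrability hypothesis of first-order Stein applied to $g_k$: namely $\mathbb{E}[|\partial_i g_k(\delta)|] < \infty$ for each $i$, which reduces to bounding $\mathbb{E}[|\delta_k\, \partial_i h(x_0+\delta)|]$ beyond the stated $\mathbb{E}|h| < \infty$. I would close this gap using the locally ACL framework mentioned in the paper's footnote (which \citet{SteinForReparam} employ precisely to handle products of locally bounded weak gradients with polynomial Gaussian weights); alternatively, via truncation: approximate $h$ by compactly supported Lipschitz functions $h_R := h \cdot \chi_R$, where $\chi_R$ is a smooth cutoff at radius $R$, prove the identity for each $h_R$ (where all hypotheses are trivially satisfied), and pass to $R \to \infty$ using dominated convergence and the Gaussian tail decay of the weights $\Sigma^{-1}(\delta\delta^T-\Sigma)\Sigma^{-1} q_\Sigma(\delta)$. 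Once this integrability is secured, the remainder of the proof is the elementary rearrangement above.
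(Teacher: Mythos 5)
You are proving a statement the paper itself never proves: Lemma~\ref{lemma:linlemma} is imported verbatim from \citet{SteinForReparam} and used as a black box inside the proof of Proposition~\ref{prop:PricesConvolvedTheorem}, so there is no in-paper argument to compare against. Your derivation is therefore a genuinely different, self-contained route, and its core is correct: you reduce the second-order identity to the first-order Stein identity (Lemma~\ref{lemma:SteinsLemma}) applied to the coordinate-weighted functions $\tilde g_k(y) = (y-x_0)_k\, h(y)$, which are locally Lipschitz, hence a.e.\ differentiable with the product rule $\nabla \tilde g_k = e_k h + (y-x_0)_k \nabla h$ holding a.e.\ by Rademacher; the rearrangement into column identities, the stacking into $\mathbb{E}[\nabla h(x_0+\delta)\,\delta^T] = \Sigma^{-1}\mathbb{E}[(\delta\delta^T-\Sigma)h(x_0+\delta)]$, and the final transpose step using the manifest symmetry of $\Sigma^{-1}(\delta\delta^T-\Sigma)\Sigma^{-1}h$ are all sound. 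What the paper's citation buys is an outsourcing of the measure-theoretic fine print; what your route buys is transparency (the second-order lemma is nothing but first-order Stein applied to $\delta_k h$) plus, as a by-product, a proof that $\mathbb{E}[\Sigma^{-1}\delta(\nabla h)^T]$ is symmetric, consistent with its interpretation as a Hessian in Eq.~\eqref{eqn:steinconveq}.

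The one substantive issue is the integrability gap you flag yourself, and it is worth being precise about it. Applying Lemma~\ref{lemma:SteinsLemma} to $\tilde g_k$ requires $\mathbb{E}[|\delta_i \delta_k h(x_0+\delta)|]<\infty$ and $\mathbb{E}[|\delta_k [\nabla h(x_0+\delta)]_i|]<\infty$, and neither follows from the stated hypothesis $\mathbb{E}[|h|]<\infty$: e.g.\ $h(z)=\sin(e^{c\lVert z\rVert^2})$ with $c$ large is bounded and smooth, yet has infinite gradient moments against the Gaussian. Note, however, that this is partly a defect of the lemma as quoted --- with only $\mathbb{E}[|h|]<\infty$, even the two sides of the identity need not exist as Lebesgue integrals --- so any proof must implicitly assume both are absolutely integrable. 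Under exactly that minimal assumption, your truncation argument does close the gap: the identity holds for $h_R = h\chi_R$ with all moments trivially finite; dominated convergence applies to both sides with dominating functions $\lVert \Sigma^{-1}(\delta\delta^T-\Sigma)\Sigma^{-1}\rVert\,|h|$ and $\lVert\Sigma^{-1}\delta\rVert\,\lVert\nabla h\rVert$; and the cutoff term $\mathbb{E}[\Sigma^{-1}\delta\, h\,(\nabla\chi_R)^T]$ vanishes because $\lVert\nabla\chi_R\rVert \le C/R$ on an annulus of radius of order $R$, giving a bound of order $\mathbb{E}[|h|\,\mathbb{I}[\lVert\delta\rVert\ge R]]\to 0$. (Your alternative fix --- appealing to the locally ACL machinery of \citet{SteinForReparam} --- is circular here, since the lemma is theirs; the truncation route is the one to keep.) Finally, in the only place the paper invokes the lemma, Proposition~\ref{prop:PricesConvolvedTheorem}, the function $g$ is globally $L$-Lipschitz, so $\lVert\nabla g\rVert\le L$ a.e.\ and $|g|$ grows at most linearly; all the required moments are then finite and your derivation fully substitutes for the citation.
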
 

\subsection{Main Result}
\steinlemmafirstsecondsmoothgood* 
\begin{proof}

We define random vector $z = x_0 + \delta \in \sR^d$ distributed from $z \sim \mathcal{N}(x_0, \Sigma)$, and for which we denote the density function as $p_\Sigma$. 
We begin by showing $\mathbb{E}_z[|g(z)|] = \mathbb{E}_\delta[|g(x_0+ \delta)|] < \infty$. We are given that $g$ is $L$-Lipschitz, i.e. $\forall a, b \in \sR^d$
\begin{equation}
\label{eqn:lip} 
    |g(a) - g(b)| \leq L \lVert a - b \rVert_2
\end{equation}
for some $L > 0$. 

Now, for any fixed $\delta \in \sR^d$ we have 
\begin{subequations}
    \begin{align}
  & |g(x_0 + \delta)| = |g(x_0) + (g(x_0 + \delta) - g(x_0))| \stackrel{\triangle \ \text{ineq.}}{\leq} |g(x_0)| + |g(x_0 + \delta) - g(x_0)| \stackrel{Eq. \ref{eqn:lip}}{\leq}   \\
  & |g(x_0)| + L\lVert x_0 + \delta - x_0 \rVert_2 = |g(x_0)| + L\lVert \delta \rVert_2 . 
\end{align}
\end{subequations}
Thus, we may bound the expectation $\mathbb{E}_{\delta}[|g(x_0 + \delta)|]$: 
\begin{align}
    &  \mathbb{E}_{\delta}[|g(x_0 + \delta)|] \leq  \mathbb{E}_{\delta}[|g(x_0)| + L\lVert \delta \rVert_2 ] = g(x_0) + L\mathbb{E}_{\delta}[\lVert \delta \rVert_2] < \infty. 
\end{align}
Here $g(x_0) < \infty$ as it is a constant, and it may be seen that $L \mathbb{E}_\delta [\lVert \delta \rVert_2] < \infty$ using a simple change of variables. Defining $\beta = \Sigma^{-\frac{1}{2}} \delta \sim \mathcal{N}(0, I)$  one may write $\mathbb{E}_\delta[\lVert \delta \rVert_2] = \mathbb{E}_\beta[\lVert \Sigma^{\frac{1}{2}} \beta\rVert_2]$. As a straightforward consequence of Cauchy-Schwarz, for any fixed $\beta$, one may write  
\begin{align}
\label{eqn:cauchy}
    & \lVert \Sigma^{\frac{1}{2}} \beta \rVert_2 \leq \lVert \Sigma^{\frac{1}{2}} \rVert_F \lVert \beta \rVert_2 
\end{align}
where $\lVert \cdot \rVert_F$ denotes the Frobenius norm.
Noting that $\lVert \beta \rVert_2 \sim \mathcal{X}_d$ we use Eq.~\eqref{eqn:cauchy} to see
    \begin{align}
        & \mathbb{E}_\beta[\lVert \Sigma^{\frac{1}{2}} \beta\rVert_2]  \leq \mathbb{E}_\beta[\lVert \Sigma^{\frac{1}{2}}\rVert_F \lVert \beta \rVert_2 ] = \lVert \Sigma^{\frac{1}{2}}\rVert_F \mathbb{E}_\beta [\lVert \beta \rVert_2 ] = \lVert \Sigma^{\frac{1}{2}}\rVert_F \sqrt{2} \frac{\Gamma((d + 1) / 2)}{\Gamma(d / 2)} < \infty 
    \end{align}
where $\Gamma(\cdot)$ denotes the Gamma function. 

Next, we move the Hessian operator inside the integral. 
\begin{subequations}
\label{eqn:derivation}
\begin{align}
        & \nabla^2_x (g \ast q_\Sigma)(x_0) = \nabla^2_x \int_{z \in \sR^d} g(z) q_\Sigma(z -x_0) dz = \int_{z \in \sR^d} g(z) \nabla^2_x q_\Sigma(z-x_0) dz = \\
        & \int_{z \in \sR^d} g(z) \nabla_x  \nabla^T_x q_\Sigma(z-x_0) dz = \int_{z \in \sR^d} g(z) (\nabla_x q_\Sigma(z-x_0) (z-x_0)^T \Sigma^{-1}) dz = \\
        & \int_{z \in \sR^d} g(z) ( (\nabla_x (z-x_0)^T \Sigma^{-1} ) q_\Sigma(z-x_0) + ( \nabla_x  q_\Sigma(z-x_0)) (z-x_0)^T \Sigma^{-1})dz = \\ 
        & \int_{z \in \sR^d} g(z) ( -I \Sigma^{-1}  q_\Sigma(z-x_0) + ( \nabla_x  q_\Sigma(z-x_0)) (z-x_0)^T \Sigma^{-1})dz =\\
        & \int_{z \in \sR^d} g(z) ( -I \Sigma^{-1}  q_\Sigma(z-x_0) + q_\Sigma(z - x_0) \Sigma^{-1}(z - x_0)  (z-x_0)^T \Sigma^{-1})dz = \\ 
        & \int_{z \in \sR^d} g(z)  q_\Sigma(z-x_0) ( -I \Sigma^{-1}  + \Sigma^{-1}(z - x_0)  (z-x_0)^T \Sigma^{-1})dz = \\ 
        & \int_{z \in \sR^d} g(z)  p_\Sigma(z) ( -I \Sigma^{-1}  + \Sigma^{-1}(z - x_0)  (z-x_0)^T \Sigma^{-1})dz =  \ \ \ \ \ \ \ (p_{\Sigma}(z) = q_\Sigma(z - x_0))\\
        & \mathbb{E}_z[g(z) ( -I \Sigma^{-1}  + \Sigma^{-1}(z - x_0)  (z-x_0)^T \Sigma^{-1})] = \\ 
        &  \mathbb{E}_z[g(z) \Sigma^{-1} ( -I   + (z - x_0) (z-x_0)^T \Sigma^{-1})] = \\
        &  \mathbb{E}_z[g(z) \Sigma^{-1} ( -\Sigma   + (z - x_0) (z-x_0)^T )\Sigma^{-1}] =  \\
        &  \mathbb{E}_z[\Sigma^{-1} ((z - x_0) (z-x_0)^T -\Sigma)\Sigma^{-1}g(z)]
    \end{align}
\end{subequations}
Lemma \ref{lemma:linlemma} may be applied as $g$ is Lipschitz, and thus locally-Lipschitz, and $\mathbb{E}_z[|g(z)|] < \infty$, yielding
\begin{equation}
\label{eqn:applin}
    \mathbb{E}_z[ \Sigma^{-1} ((z-x_0) (z-x_0)^T  - \Sigma ) \Sigma^{-1}g(z)] = \mathbb{E}_z[\Sigma^{-1}(z-x_0) [\nabla_z g(z)]^T]
\end{equation}
Using a change of variables from $z$ to $x_0 + \delta$ we write
\begin{equation}
    \mathbb{E}_z[\Sigma^{-1}(z-x_0) [\nabla_z g(z)]^T] = \mathbb{E}_\delta[\Sigma^{-1} \delta [\nabla_{x} g(x_0 + \delta)]^T]
\end{equation}
which, when combined with Eq.~\eqref{eqn:derivation} and Eq.~\eqref{eqn:applin}, completes the proof
\begin{equation}
    \nabla^2_x (g \ast q_\Sigma)(x_0) = \mathbb{E}_\delta[\Sigma^{-1} \delta [\nabla_{x} g(x_0 + \delta)]^T]. 
\end{equation}
\end{proof}
\section{Proof of Theorem \ref{thm:SH_Complexity}} 
\label{app:complexity}
We begin by establishing a result expressing the eigenvalues of the symmetrization of a rank $1$ matrix in closed form: 

\begin{restatable}{lemma}{eigen_thm}
\label{thm:eigen_thm}
    Given $x, y \in \sR^d$ denote $A = xy^T + yx^T \in \sR^{d \times d}$. The following facts hold: 

    \begin{enumerate}
        \item Matrix $A$ will have the following eigenvalues 
        \begin{itemize}
            \item It has $d-2$ eigenvalues equal to $0$
            \item The other two eigenvalues are denoted by $\lambda^+(A)$ and $\lambda^-(A)$ will have the following form:
            \begin{equation}
                \lambda^\pm(A) = x^Ty \pm \lVert x \rVert_2 \lVert y \rVert_2
            \end{equation}
        \end{itemize}
        \item $\lambda^+(A)$ and $\lambda^-(A)$ are non-negative and non-positive respectively. 
        \item Given $x$ and $y$ are sampled from sub-gaussian distributions, then $\lambda^+(A)$ and $\lambda^-(A)$ are sub-exponential random variables. 
    \end{enumerate}
\end{restatable}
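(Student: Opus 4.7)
The plan is to first diagonalize $A = xy^T + yx^T$ explicitly by exploiting its rank-at-most-two structure, and then upgrade the closed-form eigenvalues to tail properties in part 3.

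For part 1, I would note that $A$ is a sum of two rank-one matrices and hence has rank at most two, which immediately accounts for $d-2$ zero eigenvalues: concretely, any $v \in \sR^d$ orthogonal to both $x$ and $y$ satisfies $Av = x(y^T v) + y(x^T v) = 0$, so the $(d-2)$-dimensional orthogonal complement of $\mathrm{span}(x,y)$ lies in the kernel. For the remaining two eigenvalues I would use that $\mathrm{span}(x,y)$ is an invariant subspace, write an eigenvector as $v = \alpha x + \beta y$, and expand
\begin{equation}
Av = x\bigl(\alpha (x^T y) + \beta \|y\|_2^2\bigr) + y\bigl(\alpha \|x\|_2^2 + \beta (x^T y)\bigr).
\end{equation}
Matching coefficients against $\lambda v = \lambda \alpha x + \lambda \beta y$ produces a $2\times 2$ linear system whose characteristic equation is $(x^T y - \lambda)^2 = \|x\|_2^2 \|y\|_2^2$, yielding $\lambda^{\pm} = x^T y \pm \|x\|_2 \|y\|_2$. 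The linearly dependent case $y = tx$ is consistent with the formula: one of $\lambda^{\pm}$ equals $2t\|x\|_2^2$ and the other equals $0$, which matches $A = 2t\, xx^T$ directly.

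Part 2 follows in one line from the Cauchy--Schwarz inequality $|x^T y| \leq \|x\|_2 \|y\|_2$, which forces $\lambda^{+} \geq 0$ and $\lambda^{-} \leq 0$.

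For part 3, I would invoke standard closure properties of Orlicz norms (see, e.g., \citet{vershynin2010introduction}). The inner product $x^T y$ of two sub-gaussian vectors is sub-exponential, since products of sub-gaussian random variables are sub-exponential. The Euclidean norm $\|x\|_2$ is sub-gaussian as a $1$-Lipschitz function of a sub-gaussian random vector (with $\psi_2$-constant scaling like $\sqrt{d}$ times the per-coordinate constant); likewise for $\|y\|_2$, so the product $\|x\|_2 \|y\|_2$ is again sub-exponential. Since sums of sub-exponentials are sub-exponential, $\lambda^{\pm} = x^T y \pm \|x\|_2 \|y\|_2$ inherit the sub-exponential property. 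The main obstacle---really just careful bookkeeping---will be tracking the explicit $\psi_1$ and $\psi_2$ constants through these operations, because those constants are exactly what will propagate into $C^{\pm}$ and $c^{\pm}$ in Theorem~\ref{thm:SH_Complexity}, where in the intended application $x = \Sigma^{-1}\delta$ is Gaussian (sub-gaussian with explicit parameters determined by $\Sigma$) and $y = \nabla_x f(x_0 + \delta)$ is almost surely bounded by the Lipschitz constant of $f$ on its finitely many linear pieces, and hence trivially sub-gaussian.
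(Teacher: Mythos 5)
Your proposal is correct, and for part 1 it takes a genuinely different route from the paper. The paper diagonalizes $A$ by passing to an orthonormal basis $Q_2 = [e_1\,|\,e_2]$ of the column space, showing that norms and inner products are preserved under $x \mapsto \tilde{x} = Q_2^T x$, and then extracting the eigenvalues from the identity $\tilde{x}\tilde{y}^T + \tilde{y}\tilde{x}^T = \mathrm{diag}([\lambda_1,\lambda_2])$ together with the off-diagonal constraint $\tilde{x}_1\tilde{y}_2 + \tilde{x}_2\tilde{y}_1 = 0$; this requires a page of bookkeeping to verify $\lambda_{1,2} = \tilde{x}^T\tilde{y} \pm \lVert\tilde{x}\rVert_2\lVert\tilde{y}\rVert_2$. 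You instead restrict $A$ to its invariant subspace $\mathrm{span}(x,y)$ in the (non-orthogonal) basis $\{x,y\}$, which yields the $2\times 2$ matrix $\bigl(\begin{smallmatrix} x^Ty & \lVert y\rVert_2^2 \\ \lVert x\rVert_2^2 & x^Ty\end{smallmatrix}\bigr)$ and the characteristic equation $(x^Ty-\lambda)^2 = \lVert x\rVert_2^2\lVert y\rVert_2^2$ in one step. Your route is shorter and more transparent, and you correctly handle the degenerate case $y = tx$; the paper's route has the mild advantage that its orthonormal vectors $\tilde{x},\tilde{y}$ feed directly into the definition of the eigenvector/eigenvalue pairs $w_\delta^\pm = \sqrt{\pm\lambda^\pm}\,v_\delta^\pm$ used downstream in Theorem~\ref{thm:SH_Complexity}, whereas your basis $\{x,y\}$ would still need to be orthonormalized to produce unit eigenvectors. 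Parts 2 and 3 match the paper's argument. One small caveat on part 3: you justify the sub-gaussianity of $\lVert x\rVert_2$ by Lipschitz concentration, which is a valid principle for Gaussian vectors (and trivially for bounded ones, covering both $\Sigma^{-1}\delta$ and $\nabla_x f(x_0+\delta)$ in the application) but is not a general property of sub-gaussian vectors; for the lemma as stated, the paper's chain --- each $x_i^2$ is sub-exponential, so $\sum_i x_i^2$ is sub-exponential, and the square root of a non-negative sub-exponential variable is sub-gaussian --- is the safer justification, and is essentially what your parenthetical about the $\sqrt{d}$ scaling of the $\psi_2$-constant amounts to anyway.
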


\begin{proof} 
We divide the proof into three sections
\begin{enumerate}
    \item 
    We have $\text{rank}(A) \leq 2, \ A \in S_d$. Therefore $\exists Q = [e_1 | e_2 | \ldots | e_d] \in \sR^{d \times d}$ s.t. $Q^TQ = QQ^T = I$, the column vectors $e_i \in \sR^d$ are orthonormal and 
    \begin{subequations}
        \begin{align}
            & A  = Q \Lambda Q^T \\
            & Q^T A Q = \Lambda 
        \end{align}
    \end{subequations}
    where $\Lambda = \text{diag}([\lambda_1, \lambda_2, 0, \ldots, 0]) \in \sR^{d \times d}$.  
    As eigenvalues are invariant to change of basis, $A$ has eigenvalues $\lambda_1$ and $\lambda_2$ and the other $d-2$ eigenvalues are equal to $0$. 

    It can be seen that $\text{span}(\{x,y\}) = \text{span}(\{e_1, e_2\})$. As $\text{span}(\{e_1, e_2 \}) = C(A)$ this can be shown by proving $\text{span}(\{x,y\}) = C(A)$. We know $\exists z_x \in \sR^d : z_x \perp x$ and $\exists z_y \in \sR^d : z_y \perp y$. Thus we have
    \begin{subequations}
        \begin{align}
            & Az_x = (xy^T + yx^T)z_x = xy^Tz_x + yx^Tz_x = x(y^Tz_x) \\
            & Az_y = (xy^T + yx^T)z_y = xy^Tz_y + yx^Tz_y = y(x^Tz_y). 
        \end{align}
    \end{subequations}
    From the above, we see that $x,y \in C(A)$. We know rank$(A) \leq 2$.
    If rank$(A) = 2$ we have $x \neq y$ and thus $\text{span}(\{x,y\}) = C(A)$. 
    If rank$(A) = 1$ we have $x \neq 0,  y \neq 0$ and still $\text{span}(\{x,y\}) = C(A)$.
    If rank$(A) = 0$ we have $x = y= 0$ and clearly $\text{span}(\{x,y\}) = \{0\} = C(A)$. 
    
    As $\{e_i \}_{i=1}^d$ are orthonormal, we have that $x^Te_j = y^Te_j = 0, \  \forall j > 2$.  
    We define 
    \begin{subequations}
    \begin{align}
        & \bar{x} = Q^Tx = (e_1^Tx, e_2^Tx, e_3^Tx, \ldots, e_d^Tx)  = (e_1^Tx, e_2^Tx, 0, \ldots, 0) \in \sR^d, \\
        & \bar{y} = Q^Ty = (e_1^Ty, e_2^Ty, e_3^Ty, \ldots, e_d^Ty) =  (e_1^Ty, e_2^Ty, 0, \ldots, 0) \in \sR^d
    \end{align}     
    \end{subequations}
    We define $Q_2 = [e_1 | e_2] \in \sR^{d \times 2}$ and 
    \begin{subequations}
        \begin{align}
            & \tilde{x} = Q_2^Tx = (e_1^Tx, e_2^Tx) \in \sR^2 \\
            & \tilde{y} = Q_2^Ty = (e_1^Ty, e_2^Ty) \in \sR^2 
        \end{align}
    \end{subequations}
    We first show that $\lVert \bar{x} \rVert_2$ and $\lVert \bar{y} \rVert_2$ are  equal to $\lVert x \rVert_2$ and $\lVert y \rVert_2$ respectively:
    \begin{subequations}
    \label{eqn:first_eq}
        \begin{align}
            & \lVert \bar{x} \rVert_2 = \lVert Q^Tx \rVert_2 = \sqrt{(Q^Tx)^T(Q^Tx)} = \sqrt{x^TQQ^Tx} = \sqrt{x^TIx} = \lVert x \rVert_2 \\
           & \lVert \bar{y} \rVert_2 = \lVert Q^Ty \rVert_2 = \sqrt{(Q^Ty)^T(Q^Ty)} = \sqrt{y^TQQ^Ty} = \sqrt{y^TIy} = \lVert y \rVert_2
        \end{align}
    \end{subequations}
    Next, we use Eq.~\eqref{eqn:first_eq} to show that $\lVert \tilde{x} \rVert_2$ and $\lVert \tilde{y} \rVert_2$ are equal to $\lVert x \rVert_2$ and $\lVert y \rVert_2$ respectively:
    \begin{subequations}
    \label{eqn:norm_equal}
    \begin{align}
        & \lVert \tilde{x}\rVert_2 = \sqrt{(e_1^Tx)^2 + (e_2^Tx)^2} =  \lVert \bar{x} \rVert_2 = \lVert x \rVert_2 \\ 
      & \lVert \tilde{y}\rVert_2 = \sqrt{(e_1^Ty)^2 + (e_2^Ty)^2} =  \lVert \bar{y} \rVert_2 = \lVert y \rVert_2
    \end{align}
    \end{subequations}    
    We show an equality between inner products $\tilde{x}^T \tilde{y} = x^Ty$:
    \begin{subequations}
        \label{eqn:inner_equal}
        \begin{align}
            & \tilde{x}^T \tilde{y} = (e_1^Tx)(e_1^Ty) + (e_2^Tx)(e_2^Ty) = \bar{x}^T\bar{y} = (Q^Tx)^TQ^Ty = x^TQQ^Ty = \\
            & x^TIy = x^Ty
        \end{align}
    \end{subequations}
    Now, one may write, 
    \begin{subequations}
    \label{eqn:changeofbasis}
        \begin{align}
            & Q_2^T A Q_2 = \text{diag}([\lambda_1, \lambda_2]) \\
            & Q_2^T (xy^T + yx^T) Q_2 = \text{diag}([\lambda_1, \lambda_2]) \\ 
            & Q_2^Txy^TQ_2 + Q_2^Tyx^TQ_2 = \text{diag}([\lambda_1, \lambda_2])    \label{eqn:changeofbasis_end} \\
            &  \tilde{x}\tilde{y}^T + \tilde{y}\tilde{x}^T = \text{diag}([\lambda_1, \lambda_2]) \label{eqn:simplified}.
        \end{align}
    \end{subequations}
    The following facts hold as a result of Eq.~\eqref{eqn:simplified}: 
    \begin{itemize}
        \item $\lambda_1 = 2 \tilde{x}_1 \tilde{y}_1$ 
        \item $\lambda_2 = 2 \tilde{x}_2 \tilde{y}_2$ 
        \item $\tilde{x}_1\tilde{y}_2 + \tilde{x}_2\tilde{y}_1 = 0$
    \end{itemize}
    Now, we show 
    \begin{align}
    \label{eqn:orig_eig}
        \lambda_1 = \tilde{x}^T\tilde{y} + \lVert \tilde{x} \rVert_2 \lVert \tilde{y} \rVert_2, &&  \lambda_2 = \tilde{x}^T\tilde{y} - \lVert \tilde{x} \rVert_2 \lVert \tilde{y} \rVert_2
    \end{align}
Which can be derived as such: 
\begin{equation}
\begin{split}
    \tilde{x}^T\tilde{y} \pm  \lVert \tilde{x} \rVert_2 \lVert \tilde{y}\rVert_2 = \tilde{x}_1 \tilde{y}_1 + \tilde{x}_2 \tilde{y}_2 \pm \sqrt{(\tilde{x}_1^{2} + \tilde{x}_2^{2}) (\tilde{y}_1^{2} + \tilde{y}_2^{2})} \\
     = \tilde{x}_1 \tilde{y}_1 + \tilde{x}_2 \tilde{y}_2 \pm \sqrt{(\tilde{x}_1 \tilde{y}_1 - \tilde{x}_2 \tilde{y}_2)^{2}}=
      \tilde{x}_1 \tilde{y}_1 + \tilde{x}_2 \tilde{y}_2 \pm |\tilde{x}_1 \tilde{y}_1 - \tilde{x}_2 \tilde{y}_2 | = \lambda_{1} \text{ or } \lambda_{2}
    \end{split}
\end{equation}
where the second equality comes from the following:
\begin{equation}
    \begin{split}
    (\tilde{x}_{1} \tilde{y}_{1} - \tilde{x}_{2}\tilde{y}_{2})^{2} = (\tilde{x}_{1} \tilde{y}_{1})^{2} - 2 \tilde{x}_{1} \tilde{x}_{2} \tilde{y}_{1} \tilde{y}_{2} + (\tilde{x}_{2} \tilde{y}_{2})^{2}\\
    =(\tilde{x}_{1} \tilde{y}_{1})^{2} - 2 \tilde{x}_{1} \tilde{x}_{2} \tilde{y}_{1} \tilde{y}_{2} + (\tilde{x}_{2} \tilde{y}_{2})^{2} + (\tilde{x}_1\tilde{y}_2 + \tilde{x}_2\tilde{y}_1)^{2}\\
    =(\tilde{x_{1}}^2 + \tilde{x}_{2}^{2})(\tilde{y}_{1}^{2}+\tilde{y}_{2}^{2}).
    \end{split}
\end{equation}
    Proving that Eq.~\eqref{eqn:orig_eig} holds. Finally, we combine Eq.~\eqref{eqn:norm_equal} and Eq.~\eqref{eqn:inner_equal} with  Eq.~\eqref{eqn:orig_eig} to express the eigenvalues of $A$ as:
    \begin{align}
    \label{eqn:final_eig}
        \lambda_1 = x^Ty + \lVert x \rVert_2 \lVert y \rVert_2, &&  \lambda_2 = x^Ty  - \lVert x \rVert_2 \lVert y \rVert_2
    \end{align}
    which we use to denote $\lambda^+(A) = \lambda_1, \lambda^-(A) = \lambda_2$. 

    \item    
    As $|x^Ty| \leq \lVert x \rVert_2 \lVert y \rVert_2$, it follows that $\lambda^+(A) = x^Ty + \lVert x \rVert_2 \lVert y \rVert_2 \geq 0$ and $\lambda^-(A) = x^Ty - \lVert x \rVert_2 \lVert y \rVert_2 \leq 0$.
    \item 
    We denote $D = \{1, \ldots, d \}$
    It can be seen that $x_i y_i$ is sub-exponential  $\forall i \in D$ , as a sub-gaussian times a sub-gaussian is sub exponential.
Thus, it follows that  
\begin{equation}
\label{eqn:subexpeq1}
    x^Ty = \sum_{i=1}^d x_i y_i \  \text{ is a sub-exponential random variable}, 
\end{equation}
as the sum of sub-exponential random variables is sub-exponential.
Further we see that $x_i^2$ and $y_i^2$ are sub-exponential as the square of a sub-gaussian is sub-exponential. 
As the sum of sub-exponentials is sub-exponential we have that $\sum_{i=1}^d x_i^2, \sum_{i=1}^d y_i^2$ are both sub-exponential random variables. 
As the square root of a sub-exponential is sub-gaussian we have that 
\begin{subequations}
    \label{eqn:subexpeq2}
    \begin{align}
        & \lVert x \rVert_2 = \sqrt{\sum_{i=1}^d x_i^2} \ \  \text{is a sub-gaussian random variable }\\ 
        & \lVert y \rVert_2 = \sqrt{\sum_{i=1}^d y_i^2} \  \ \text{is a sub-gaussian random variable}
    \end{align}
\end{subequations}
As a sub-gaussian times a sub-gaussian is sub-exponential, from Eq. \ref{eqn:subexpeq2} we have that
\begin{equation}
\label{eqn:subgmult}
    \lVert x \rVert_2 \lVert y \rVert_2 \ \  \text{is a sub-exponential random variable }. 
\end{equation}
Now we see from Eq. and \ref{eqn:subexpeq1} Eq. \ref{eqn:subgmult}  that both $\lambda^+(A)$ and $\lambda^-(A)$ are the sum of sub-exponential random variables and thus are sub-exponential. 
\end{enumerate}
\end{proof}

We now use the result of Lemma~\ref{thm:eigen_thm} to prove the sample complexity bounds for SmoothHess in Theorem~\ref{thm:SH_Complexity}: 
 
\SteinComplexity* 
\begin{proof}
As $x_0,f$ and $\Sigma$ are fixed, we refer to $\hat{H}_n(f,x_0,\Sigma)$ as $\hat{H}_n$ for brevity. 
We denote $D = \{1, \ldots, d \}$. 
We begin by explicitly expressing our estimator $\hat{H}_n$ in terms of $\delta_i$ and $\nabla_x f(x_0 + \delta_i)$. From Eq.~\eqref{eqn:SmoothHess_Estimator} we have
\begin{subequations}
\label{eqn:estimator_form}
\begin{align}
    & H^\circ_n = \frac{1}{n}\sum_{i=1}^n \Sigma^{-1} \delta_i [\nabla_x f(x_0 + \delta_i)]^T \\
    & \hat{H}_n = \frac{1}{2}H^\circ_n + \frac{1}{2}H^{\circ T}_n = \\
    & \frac{1}{2}\frac{1}{n}\sum_{i=1}^n (\Sigma^{-1} \delta_i [\nabla_x f(x_0 + \delta_i)]^T) +  \frac{1}{2}\frac{1}{n}\sum_{i=1}^n (\nabla_x f(x_0 + \delta_i) \delta_i^T\Sigma^{-1}) 
\end{align}    
\end{subequations}
Now, we show the convergence of our estimator $\hat{H}_n$:
\begin{restatable}{lemma}{converge}
\label{lemma:convergeestimaor}
$\lim_{n \rightarrow \infty} \hat{H}_n = H$
\end{restatable}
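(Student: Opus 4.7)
The plan is to apply the strong law of large numbers (SLLN) entrywise to the unsymmetrized estimator $\hat{H}_n^\circ$, and then conclude via symmetrization. By Proposition~\ref{prop:PricesConvolvedTheorem}, the i.i.d.\ summands $\Sigma^{-1}\delta_i[\nabla_x f(x_0+\delta_i)]^T$ have common mean equal to $H$, so the substantive content reduces to (i) verifying integrability of each entry of a summand so that the SLLN applies, and (ii) checking that the limit is preserved under symmetrization.

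For integrability, note that since $f$ is piecewise linear over a \emph{finite} partition of $\sR^d$, the gradient $\nabla_x f$ takes only finitely many values almost everywhere and is therefore uniformly bounded by some constant $L > 0$ (the maximum row-norm of the linear coefficients across pieces). In particular $f$ is globally $L$-Lipschitz, so Proposition~\ref{prop:PricesConvolvedTheorem} applies. Moreover, for each entry $(j,k)$,
\begin{equation*}
\mathbb{E}\bigl|[\Sigma^{-1}\delta_i]_j \,[\nabla_x f(x_0+\delta_i)]_k\bigr| \;\leq\; L \cdot \mathbb{E}\bigl|[\Sigma^{-1}\delta_i]_j\bigr| \;<\; \infty,
\end{equation*}
since $[\Sigma^{-1}\delta_i]_j$ is a zero-mean Gaussian with finite variance. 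This is the step requiring the most care, but it is an immediate consequence of the piecewise-linearity hypothesis combined with Gaussian moment bounds.

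Applying the SLLN entrywise then yields $\hat{H}_n^\circ \to H$ almost surely, and taking transposes gives $\hat{H}_n^{\circ T} \to H^T$ almost surely. Since $f$ is Lipschitz (and hence locally integrable), $f \ast q_\Sigma$ is $C^\infty$, and so $H = \nabla_x^2[(f \ast q_\Sigma)(x_0)]$ is symmetric by Clairaut's theorem; that is, $H = H^T$. Combining these observations yields
\begin{equation*}
\hat{H}_n \;=\; \tfrac{1}{2}\bigl(\hat{H}_n^\circ + \hat{H}_n^{\circ T}\bigr) \;\longrightarrow\; \tfrac{1}{2}(H + H^T) \;=\; H
\end{equation*}
almost surely, which gives the stated limit. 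The main (and essentially only) technical ingredient is the almost-everywhere gradient bound used to secure integrability; everything else is a routine application of the SLLN together with continuity of transposition and symmetrization.
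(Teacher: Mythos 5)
Your proof is correct and follows essentially the same route as the paper: show $\hat{H}_n^\circ$ converges to $\mathbb{E}_\delta[\Sigma^{-1}\delta[\nabla_x f(x_0+\delta)]^T] = H$ via Proposition~\ref{prop:PricesConvolvedTheorem}, note the transpose converges to $H^T = H$ by symmetry of the Hessian, and conclude for the symmetrized estimator. You are in fact somewhat more careful than the paper, which simply equates the limit of the sample average with the expectation without explicitly invoking the strong law of large numbers or verifying integrability of the summands (your a.e.\ gradient bound from piecewise linearity mirrors the argument the paper uses later in its sub-gaussianity lemma).
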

\begin{proof}
From Proposition~\ref{prop:PricesConvolvedTheorem} it is clear to see that $\lim_{n \rightarrow \infty} H^\circ_n = H$: 
\begin{align}
    & \lim_{n \rightarrow \infty} H^\circ_n = \lim_{n\rightarrow \infty} \frac{1}{n}\sum_{i=1}^n \Sigma^{-1} \delta_i [\nabla_x f(x_0 + \delta_i)]^T = \mathbb{E}_\delta[\Sigma^{-1}\delta [\nabla_x f(x_0 +\delta)]^T] \stackrel{\text{Prop}~\ref{prop:PricesConvolvedTheorem}}{=} H. 
\end{align}
Next, we show it is also the case that $\lim_{n \rightarrow \infty} H^{\circ T}_n = H$:
\begin{subequations}
\begin{align}
    & \lim_{n \rightarrow \infty} H^{\circ T}_n = \lim_{n \rightarrow \infty} \frac{1}{n}\sum_{i=1}^n  \nabla_x f(x_0 + \delta_i)  \delta_i^T \Sigma^{-1}  = \\
    & \lim_{n \rightarrow \infty} \frac{1}{n}\sum_{i=1}^n (\Sigma^{-1} \delta_i [\nabla_x f(x_0 + \delta_i)]^T)^T = (\lim_{n \rightarrow \infty} \frac{1}{n} \sum_{i=1}^n \Sigma^{-1} \delta_i [\nabla_x f(x_0 + \delta_i)]^T)^T = \\
    & (\mathbb{E}_\delta[\Sigma^{-1}\delta [\nabla_x f(x_0 +\delta)]^T])^T \stackrel{\text{Prop}~\ref{prop:PricesConvolvedTheorem}}{=} H^T = H \ \ \ \ \ \ \ \ \ \ \ \ \ \ \ \ \ \ \ \ \ \ \ \ \ \ \ \ \ \ \ \ \ \ \ \text{(Symmetry of Hessian)} 
\end{align}
\end{subequations}
Now, as we have $\lim_{n \rightarrow \infty} H^\circ_n = \lim_{n \rightarrow \infty} H^{\circ T}_n = H$, it follows that
    \begin{align}
\label{eqn:result_converge}
        & \lim_{n \rightarrow \infty} \hat{H}_n = \lim_{n \rightarrow \infty} \frac{1}{2} H^{\circ}_n +  \lim_{n \rightarrow \infty} \frac{1}{2} H^{\circ T}_n = \frac{1}{2} H + \frac{1}{2}H  = H 
    \end{align}
\end{proof}
We establish the following notation to be used below: given a fixed vector $\delta \in \sR^d$ one may construct matrix $A_\delta \in \sR^{d \times d}$ by: 
\begin{equation}
\label{eqn:symm_mat_A}
    A_{\delta} = \frac{1}{2} (\Sigma^{-1} \delta [\nabla_x f(x_0 + \delta)]^T)  +  \frac{1}{2}(\nabla_x f(x_0 + \delta) \delta^T \Sigma^{-1} ) \in \sR^{d \times d}
\end{equation}
It can be seen from Eq.~\ref{eqn:estimator_form} and Lemma~\ref{lemma:convergeestimaor} that $H$ and $\hat{H}_n$  may be expressed in terms of matrices $A_\delta$ and $A_{\delta_i}$: 
\begin{align}
\label{eqn:original_forms}
    H = \mathbb{E}_\delta[A_\delta], && \hat{H}_n = \frac{1}{n}\sum_{i=1}^n A_{\delta_i} 
\end{align}
Next, we establish that the random vectors $\Sigma^{-1}\delta$ and $\nabla f(x_0 + \delta)$ are sub-gaussian: 
\begin{restatable}{lemma}{subglemma}
\label{lemma:subglemma}
The random vectors $\Sigma^{-1}\delta$ and $\nabla f(x_0 + \delta)$ are sub-gaussian.
\end{restatable}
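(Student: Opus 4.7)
}
The plan is to handle the two random vectors separately, exploiting the Gaussian structure for the first and the finite piecewise-linear structure of $f$ for the second. Recall that a random vector $X \in \sR^d$ is sub-gaussian iff all one-dimensional marginals $\langle X, u \rangle$, $u \in \sR^d$, are sub-gaussian random variables; two sufficient conditions are (i) $X$ is jointly Gaussian, or (ii) $X$ is almost surely bounded in norm.

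\textbf{Step 1 (the easy half): $\Sigma^{-1}\delta$ is sub-gaussian.} Since $\delta \sim \mathcal{N}(0,\Sigma)$, linear transformations of Gaussians are Gaussian, so $\Sigma^{-1}\delta \sim \mathcal{N}(0,\Sigma^{-1}\Sigma\Sigma^{-1}) = \mathcal{N}(0,\Sigma^{-1})$. Any Gaussian vector is sub-gaussian, so this part follows immediately, with a sub-gaussian norm controlled by the operator norm of $\Sigma^{-1/2}$.

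\textbf{Step 2 (the crucial half): $\nabla_x f(x_0 + \delta)$ is sub-gaussian.} Here I will use the hypothesis of Theorem~\ref{thm:SH_Complexity} that $f$ is piecewise linear over a \emph{finite} partition of $\sR^d$, i.e., there exist finitely many convex polytopes $\{Q_i\}_{i=1}^K$ covering $\sR^d$ on each of which $f$ restricts to an affine function $f|_{Q_i}(z) = a_i^T z + b_i$ for some $a_i \in \sR^d$, $b_i \in \sR$. On the interior of each $Q_i$, $\nabla_x f(z) = a_i$, so $\nabla_x f(x_0 + \delta)$ takes values in the finite set $\{a_1, \ldots, a_K\}$ almost surely (the union of polytope boundaries has Lebesgue, hence Gaussian, measure zero). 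Consequently
\begin{equation}
\lVert \nabla_x f(x_0 + \delta) \rVert_2 \leq M := \max_{i \in \{1, \ldots, K\}} \lVert a_i \rVert_2 < \infty \quad \text{a.s.}
\end{equation}
An almost surely bounded random vector is sub-gaussian (e.g., Hoeffding's lemma applied to each projection $\langle \nabla_x f(x_0+\delta), u\rangle$, which lies in $[-M\lVert u \rVert_2, M\lVert u\rVert_2]$ a.s.), with sub-gaussian norm bounded by a constant multiple of $M$. This finishes the lemma.

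\textbf{Where the work is.} There is no real obstacle here; the whole point is that the ReLU network hypothesis of Theorem~\ref{thm:SH_Complexity} (finite polyhedral partition) converts the otherwise-delicate sub-gaussianity question into a triviality by forcing $\nabla f$ to take only finitely many values. The only things worth being careful about are (a) acknowledging that $\nabla f$ is only defined almost everywhere, which is fine since $\delta$ has a density, and (b) keeping track of the constants $C^\pm, c^\pm$ in Theorem~\ref{thm:SH_Complexity}, which will ultimately depend on $\lVert \Sigma^{-1/2}\rVert_{\mathrm{op}}$ from Step~1 and on $M = \max_i \lVert a_i\rVert_2$ from Step~2.
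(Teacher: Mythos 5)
Your proposal is correct and follows essentially the same route as the paper: $\Sigma^{-1}\delta$ is Gaussian hence sub-gaussian, and $\nabla_x f(x_0+\delta)$ is almost surely bounded (by the maximum gradient norm over the finitely many affine pieces) hence sub-gaussian. The paper's proof is identical in substance, merely stating the bound $\lVert \nabla f(x)\rVert_2 \le \max_Q \lVert V_Q\rVert_2$ outside a measure-zero set rather than emphasizing that the gradient takes finitely many values.
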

\begin{proof}
As $\Sigma^{-1} \delta$ is Gaussian it is sub-gaussian. 
Now, we show that $\nabla f(x_0 + \delta)$ is a sub-gaussian random-vector. We have that $f$ is piecewise-linear over a partition of $\sR^d$ with finite cardinality $L$. 
Let us denote this partition as $\mathcal{Q} = \{Q_i\}_{i=1}^L, \ Q_i \subseteq \sR^d$, where, when restricted to a given $Q \in \mathcal{Q}$ we have
\begin{equation}
    f\lvert_Q(x) = V_Qx + A_Q
\end{equation}
where $V_Q \in \sR^d, A_Q \in \sR$ are the affine coefficients associated with the region $Q$. 
Then it is the case that  $\nabla f :\sR^d \rightarrow \sR$ is a bounded function, where, aside from a set of measure $0$, $M \subseteq \sR^d$ (the boundaries of regions $Q$) where $\nabla f$ is not defined, one has 
\begin{equation}
    \lVert \nabla f(x) \rVert_2 \leq \max_{Q \in \mathcal{Q}} \lVert V_Q \rVert_2, \ \ \forall x \in \sR^d \backslash M.
\end{equation}
Thus, $\nabla f(x_0 + \delta)$ is a bounded random vector and therefore is sub-gaussian.
\end{proof}

Given the operators $\lambda^+, \lambda^- : \sR^{d \times d} \rightarrow \sR$ as defined in the statement of Lemma~\ref{thm:eigen_thm} and fixed vector $\delta \in \sR^d$, we denote $\lambda^+_\delta := \lambda^+(A_\delta), \lambda^-_\delta := \lambda^-(A_\delta)$ and the corresponding unit eigenvectors as $v^+_\delta \in \sR^d$ and  $v^-_\delta \in \sR^d$ respectively. 
We denote random vectors $w^+_\delta, w^-_\delta \in \sR^d$ by 
\begin{align}
    & w^+_\delta = \sqrt{\lambda^+_\delta}  v^+_\delta , && w^-_\delta = \sqrt{-\lambda^-_\delta}  v^-_\delta 
\end{align}
where $\sqrt{\lambda_\delta^+}$ and $v_\delta^+$ are a random variable random vector pair coming from the same $\delta$. An immediate consequence of Lemma~\ref{lemma:subglemma} is that $w^+_\delta$ and $w^-_\delta$ are sub-gaussian random vectors: 
\begin{restatable}{lemma}{subgvectors}
\label{lemma:subgvectors}
   $w^+_\delta$ and $w^-_\delta$ are sub-gaussian random-vectors 
\end{restatable}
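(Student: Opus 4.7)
The plan is to reduce sub-gaussianity of the random vectors $w_\delta^+$ and $w_\delta^-$ to sub-exponentiality of the signed eigenvalues $\lambda^+_\delta$ and $\lambda^-_\delta$, which is already available. Recall that a random vector $X \in \sR^d$ is sub-gaussian iff its one-dimensional marginals $\langle X, u \rangle$ are sub-gaussian random variables, with Orlicz norm bounded uniformly over unit vectors $u \in \sR^d$. Since $v_\delta^\pm$ are unit vectors, for any unit $u$ we have the deterministic bound $|\langle w_\delta^\pm, u \rangle| \leq \|w_\delta^\pm\|_2 = \sqrt{|\lambda^\pm_\delta|}$ by Cauchy--Schwarz, so it suffices to prove that $\sqrt{\lambda^+_\delta}$ and $\sqrt{-\lambda^-_\delta}$ are sub-gaussian random variables.

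The first step is to verify that $\lambda^+_\delta$ and $\lambda^-_\delta$ are sub-exponential. Writing $A_\delta = x y^T + y x^T$ with $x = \tfrac{1}{\sqrt{2}} \Sigma^{-1}\delta$ and $y = \tfrac{1}{\sqrt{2}} \nabla_x f(x_0 + \delta)$, Lemma~\ref{lemma:subglemma} tells us that both $x$ and $y$ are sub-gaussian random vectors (sub-gaussianity is preserved under scaling). Part 3 of Lemma~\ref{thm:eigen_thm} then gives sub-exponentiality of $\lambda^\pm_\delta$, while part 2 gives the sign information $\lambda^+_\delta \geq 0 \geq \lambda^-_\delta$, so that $\lambda^+_\delta$ and $-\lambda^-_\delta$ are non-negative sub-exponential random variables.

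The second step is to invoke the standard Orlicz-norm identity that a non-negative random variable $Z \geq 0$ is sub-exponential if and only if $\sqrt{Z}$ is sub-gaussian, with $\|Z\|_{\psi_1} = \|\sqrt{Z}\|_{\psi_2}^2$ (see, e.g., \citet{vershynin2018high}). Applying this to $Z = \lambda^+_\delta$ and $Z = -\lambda^-_\delta$ shows that $\|w_\delta^+\|_2 = \sqrt{\lambda^+_\delta}$ and $\|w_\delta^-\|_2 = \sqrt{-\lambda^-_\delta}$ are sub-gaussian random variables.

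Combining these two steps: for any unit $u \in \sR^d$, the marginal $\langle w_\delta^\pm, u \rangle$ is bounded in absolute value by a sub-gaussian random variable (independent of $u$), hence is itself sub-gaussian with Orlicz norm bounded uniformly in $u$. This gives sub-gaussianity of the random vectors $w_\delta^+$ and $w_\delta^-$, completing the argument. No significant technical obstacle is anticipated; the one subtlety is measurability of the unit eigenvectors $v_\delta^\pm$ as functions of $\delta$ (which can be handled by a measurable selection argument, or side-stepped entirely by observing that only the deterministic bound $|\langle w_\delta^\pm, u \rangle| \leq \|w_\delta^\pm\|_2 = \sqrt{|\lambda_\delta^\pm|}$ is needed, and the right-hand side is a measurable function of $\delta$).
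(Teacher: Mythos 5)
Your proposal is correct and follows essentially the same route as the paper: both arguments use Lemma~\ref{thm:eigen_thm}(2)--(3) together with Lemma~\ref{lemma:subglemma} to conclude that $\lambda^+_\delta$ and $-\lambda^-_\delta$ are non-negative sub-exponential random variables, deduce that their square roots are sub-gaussian, and then control every marginal $\langle w^\pm_\delta, z\rangle$ by the deterministic Cauchy--Schwarz bound $|\langle v^\pm_\delta, z\rangle| \leq \lVert z \rVert_2$. Your explicit handling of the $\tfrac{1}{\sqrt{2}}$ rescaling needed to write $A_\delta = xy^T + yx^T$ is a small point of added care that the paper leaves implicit, but it does not change the argument.
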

\begin{proof}
Using Lemma~\ref{thm:eigen_thm}(3), Lemma~\ref{thm:eigen_thm}(2) and Lemma~\ref{lemma:subglemma} we see that that $\lambda^+_\delta$ and $-\lambda^-_\delta$ are non-negative sub-exponential random variables and thus that $\sqrt{\lambda^+_\delta}$ and $\sqrt{-\lambda^-_\delta}$ are sub-gaussian random variables. We may say $w^+_\delta$ is a sub-gaussian random vector if $\langle w^+_\delta, z \rangle$ is a sub-gaussian random variable $\forall z \in \sR^d$. Let us fix arbitrary $z \in \sR^d$. As  $\sqrt{\lambda^+_\delta}$ is sub-gaussian we have 
\begin{subequations}
    \begin{align}
        & \exists K_1>0 \  s.t. \ \mathbb{P}(|\sqrt{\lambda^+_\delta}| \geq t) \leq 2\exp(-t^2 / K_1^2) \ \forall t \geq 0 
    \end{align}
\end{subequations}
Now, $\forall t \geq 0$
\begin{subequations}
    \begin{align}
        &  \mathbb{P}( |\langle w^+_\delta, z \rangle |\geq t) =  \mathbb{P}(|\sqrt{\lambda^+_\delta} \langle v^+_\delta,  z \rangle| \geq t) = \\
        & \mathbb{P}(|\sqrt{\lambda^+_\delta} | \geq t / |\langle v^+_\delta,  z \rangle|)  \stackrel{\text{C-S, } \lVert v_\delta^+ \rVert_2 = 1}{\leq} \mathbb{P}(|\sqrt{\lambda^+_\delta} | \geq t / \lVert z \rVert_2 )\leq 2\exp(-(t^2 / \lVert z \rVert_2^2 K_1^2 )) 
    \end{align}
\end{subequations}
Thus defining $K_1^{(z)} := K_1  \lVert z \rVert_2$ we see that $\forall t \geq 0 $
\begin{equation}
    \mathbb{P}( |\langle w^+_\delta, z \rangle |\geq t) \leq \exp(-t^2 / (K_1^{(z)})^2). 
\end{equation}
Thus $\langle w^+_\delta, z \rangle$ is sub-gaussian for arbitrary $z \in \sR^d$.
Therefore, $w^+_\delta$ is a sub-gaussian random-vector. 
The same argument holds to show that $w^-_\delta$ is a sub-gaussian random vector. 

\end{proof}
Given any fixed $\delta$, it can be seen that
\begin{equation}
A_\delta = w^+_\delta w^{+T}_\delta - w^-_\delta w^{-T}_\delta.    
\end{equation}
Thus one may re-write $H$ and $\hat{H}_n$ from Eq.~\ref{eqn:original_forms} as:
\begin{align}
\label{H:orig_form}
     H = \mathbb{E}_\delta[A_\delta] = \mathbb{E}_\delta[w^+_\delta w^{+T}_\delta - w^-_\delta w^{-T}_\delta], &&  \hat{H}_n = \frac{1}{n} \sum_{i=1}^n A_{\delta_i} =  \frac{1}{n} \sum_{i=1}^n w^+_{\delta_i} w^{+T}_{\delta_i} - w^-_{\delta_i} w^{-T}_{\delta_i}. 
\end{align}
Because $\{\delta_i\}_{i=1}^n$ are i.i.d. random vectors and $w^+_{\delta_i}, w^-_{\delta_i}$ are fully determined by $\delta_i$, it follows that $\{ w^+_{\delta_i} \}_{i=1}^n$ and  $\{w^-_{\delta_i} \}_{i=1}^n $ are both sets of i.i.d. random vectors.

We denote the following:
\begin{subequations}
    \begin{align}
        & H^+ = \mathbb{E}_\delta[w^+_\delta w^{+T}_\delta] , && H^- = \mathbb{E}_\delta[w^-_\delta w^{-T}_\delta],  \\
        & \hat{H}^+_n = \frac{1}{n} \sum_{i=1}^n w^+_{\delta_i} w^{+T}_{\delta_i}, && \hat{H}^-_n = \frac{1}{n} \sum_{i=1}^n w^-_{\delta_i} w^{-T}_{\delta_i}
    \end{align}
\end{subequations}
It can be seen from the RHS of Eq.~\eqref{H:orig_form} that
\begin{equation}
\label{eqn:decompose_estimator}
    \hat{H}_n = \hat{H}_n^+ - \hat{H}_n^-. 
\end{equation}
We now aim to decompose $H$ in terms of $H^+, H^-$, in order to derive separate concentration bounds.  
To this end, we prove the following lemma:
\begin{restatable}{lemma}{bounded}
\label{lemma:bounded}
$H^+$ and $H^-$ exist.  
\end{restatable}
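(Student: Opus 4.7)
The plan is to show existence of $H^+$ and $H^-$ by verifying that each entry of the matrix-valued expectation is finite, which reduces to establishing that $\mathbb{E}_\delta[\lambda^+_\delta]$ and $\mathbb{E}_\delta[|\lambda^-_\delta|]$ are both finite. Since $w^+_\delta w^{+T}_\delta$ is a rank-one positive semi-definite matrix with single non-zero eigenvalue $\lambda^+_\delta = \lVert w^+_\delta\rVert_2^2$, its Frobenius norm equals $\lambda^+_\delta$, and hence every entry is bounded in absolute value by $\lambda^+_\delta$. An analogous statement holds for the negative part with $|\lambda^-_\delta|$. So entrywise finiteness of the expectation will follow once we have integrability of these scalar eigenvalue functions.

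First I would invoke Lemma~\ref{thm:eigen_thm}(1) with $x := \Sigma^{-1}\delta$ and $y := \nabla_x f(x_0+\delta)$, giving the closed form
\[
\lambda^\pm_\delta = \tfrac{1}{2}\bigl(x^Ty \pm \lVert x\rVert_2 \lVert y\rVert_2\bigr).
\]
Then, both eigenvalues are dominated in absolute value by $\lVert x\rVert_2 \lVert y\rVert_2$ via Cauchy--Schwarz. The key ingredient is that $\nabla_x f$ is \emph{uniformly bounded} on $\sR^d$: as argued in the proof of Lemma~\ref{lemma:subglemma}, since $f$ is piecewise linear over a finite partition $\mathcal{Q}=\{Q_i\}_{i=1}^L$, the gradient satisfies $\lVert \nabla_x f(z)\rVert_2 \leq M := \max_{Q\in\mathcal{Q}} \lVert V_Q\rVert_2$ for almost every $z\in\sR^d$. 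Consequently $\lVert y\rVert_2 \leq M$ almost surely.

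Combining these observations,
\[
\mathbb{E}_\delta\bigl[|\lambda^\pm_\delta|\bigr] \leq \mathbb{E}_\delta\bigl[\lVert \Sigma^{-1}\delta\rVert_2 \lVert \nabla_x f(x_0+\delta)\rVert_2\bigr] \leq M\, \mathbb{E}_\delta\bigl[\lVert \Sigma^{-1}\delta\rVert_2\bigr],
\]
and the remaining expectation is finite because $\Sigma^{-1}\delta \sim \mathcal{N}(0, \Sigma^{-1})$ is Gaussian and Gaussian vectors have finite moments of all orders (this is essentially the same Cauchy--Schwarz plus chi-distribution argument used in App.~\ref{app:Stein}). Hence $\mathbb{E}_\delta[\lambda^+_\delta]<\infty$ and $\mathbb{E}_\delta[|\lambda^-_\delta|]<\infty$.

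Finally, for each pair of indices $i,j\in\{1,\dots,d\}$, the $(i,j)$ entry of $w^+_\delta w^{+T}_\delta$ satisfies $|(w^+_\delta w^{+T}_\delta)_{ij}| \leq \lVert w^+_\delta w^{+T}_\delta\rVert_F = \lambda^+_\delta$, so the integrand defining $(H^+)_{ij}$ is dominated by an integrable function; Fubini/dominated convergence then gives that $(H^+)_{ij}$ is well defined and finite. The same argument, using $|\lambda^-_\delta|$ as the dominating scalar, shows $H^-$ exists. I expect the only subtle step to be making precise the uniform-boundedness of $\nabla_x f$ almost everywhere, but this is already handled inside the proof of Lemma~\ref{lemma:subglemma} and may simply be cited.
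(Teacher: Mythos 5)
Your proof is correct, but it follows a genuinely different route from the paper's. The paper's argument is a two-line appeal to the sub-gaussian machinery it has already built: by Lemma~\ref{lemma:subgvectors} each coordinate $(w^+_\delta)_k$ is sub-gaussian, so each entry $(w^+_\delta w^{+T}_\delta)_{ij} = (w^+_\delta)_i(w^+_\delta)_j$ is a product of sub-gaussians and hence sub-exponential, and sub-exponential random variables have finite mean; the same holds for $w^-_\delta$. You instead bypass the sub-gaussian norms entirely and argue by direct domination: every entry of $w^{\pm}_\delta w^{\pm T}_\delta$ is bounded by $|\lambda^{\pm}_\delta|$, the closed form from Lemma~\ref{thm:eigen_thm}(1) together with Cauchy--Schwarz gives $|\lambda^{\pm}_\delta| \leq \lVert \Sigma^{-1}\delta\rVert_2\,\lVert\nabla_x f(x_0+\delta)\rVert_2$, the gradient is uniformly bounded a.e.\ by $\max_Q \lVert V_Q\rVert_2$ (the same fact underlying Lemma~\ref{lemma:subglemma}), and $\mathbb{E}_\delta[\lVert\Sigma^{-1}\delta\rVert_2]<\infty$ since $\Sigma^{-1}\delta$ is Gaussian. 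All the individual steps check out, including the factor of $\tfrac12$ in your eigenvalue formula (needed because $A_\delta$ is half of the matrix $xy^T+yx^T$ to which Lemma~\ref{thm:eigen_thm} applies) and the identity $\lVert w^+_\delta w^{+T}_\delta\rVert_F = \lVert w^+_\delta\rVert_2^2 = \lambda^+_\delta$. What each approach buys: the paper's version is shorter given the infrastructure already in place and is the natural companion to the subsequent application of \citet{vershynin2010introduction}'s covariance-estimation bound, which also lives in sub-gaussian language; your version is more elementary and self-contained, yields an explicit quantitative bound $\mathbb{E}_\delta[|\lambda^{\pm}_\delta|] \leq M\,\mathbb{E}_\delta[\lVert\Sigma^{-1}\delta\rVert_2]$ rather than mere existence, and does not require Lemma~\ref{lemma:subgvectors} at all (only the bounded-gradient observation and part (1) of Lemma~\ref{thm:eigen_thm}). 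The invocation of Fubini/dominated convergence at the end is heavier than needed --- integrability of the absolute value of each entry already guarantees the expectation is well defined --- but this is a stylistic point, not a gap.
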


\begin{proof}
Let us consider the random matrix $w^+_\delta w^{+T}_\delta \in \sR^{d \times d}$. 
As $(w^+_\delta)_k \in \sR$ is sub-gaussian $\forall k \in D$, the element $(w^+_\delta w^{+T}_\delta)_{ij} \in \sR$ is sub-exponential as the product of two sub-gaussian's, $\forall i,j \in D$. 
Thus, $\mathbb{E}_\delta [(w^+_\delta w^{+T}_\delta)_{ij}]$ exists $\forall i,j \in D$.
The same argument can be made to show $\mathbb{E}_\delta [(w^-_\delta w^{-T}_\delta)_{ij}]$ 
exists $\forall i,j \in D$.

\end{proof}
In light of Lemma~\ref{lemma:bounded}, the LHS of Eq.~\eqref{H:orig_form} may be decomposed as
\begin{equation}
\label{eqn:converge}
    H = \mathbb{E}_\delta[w_\delta^+ w_\delta^{+T} - w_\delta^-w_\delta^{-T}]  = \mathbb{E}_\delta[w^+_\delta w^{+T}_\delta] - \mathbb{E}_\delta[w^-_\delta w^{-T}_\delta]  = H^+ - H^-.
\end{equation}
Before deriving separate concentration bounds on $\hat{H}_n^+$ and $\hat{H}_n^-$ we note that
\begin{align}
\lim_{n \rightarrow \infty} \hat{H}_n^+ = \mathbb{E}_\delta[w^+_\delta w^{+T}_\delta], && \lim_{n \rightarrow \infty} \hat{H}_n^- = \mathbb{E}_\delta[w^-_\delta w^{-T}_\delta].  
\end{align}
Finally, we bound the deviation of $\hat{H}^+_n$ and $\hat{H}^-_n$ from their expectations.
Let us fix $\varepsilon, \gamma \in (0, 1]$. From Eq.~\eqref{eqn:converge}, and the fact that $\{w_{\delta_i}^+\}_{i=1}^n$ is a set of i.i.d. sub-gaussian random vectors, Theorem 3.39, Remark 3.40 of \citet{vershynin2010introduction} may be applied, yielding: $\forall t \geq 0$
\begin{equation}
\label{eq:bound}
    \mathbb{P}\bigg( \lVert \hat{H}_{n}^+ - H^+ \rVert_2 >  \max(\varepsilon^+_n, (\varepsilon^+_n)^2) \bigg) \leq 2\exp(-c^+t^2)
\end{equation}
where $\varepsilon^+_n = C^+ \frac{\sqrt{d}}{\sqrt{n}}  + \frac{t}{\sqrt{n}}$ and $C^+, c_+ > 0$ are constants depending on the sub-gaussian norm of $w^+_{\delta_i}$. Let us select $t = \sqrt{\frac{\log (4 /\gamma) }{c^+}}$. Plugging into Eq.~\eqref{eq:bound}, we get
\begin{equation}
\label{eq:bound2}
    \mathbb{P}\bigg( \lVert \hat{H}_{n}^+ - H^+ \rVert_2 >  \max ( C^+ \frac{\sqrt{d}}{\sqrt{n}} + \frac{\sqrt{\frac{\log (4 /\gamma) }{c^+}}}{\sqrt{n}}  , (C^+ \frac{\sqrt{d}}{\sqrt{n}} + \frac{\sqrt{\frac{\log (4 /\gamma) }{c^+}}}{\sqrt{n}})^2  ) \bigg) \leq \frac{\gamma}{2}
\end{equation}
Let us consider $n^+ = \frac{4}{\varepsilon^2} (C^+\sqrt{d} + \sqrt{\frac{\log (4 / \gamma)}{c^+}})^2$. One may see that 
\begin{subequations}
    \begin{align}
        & \varepsilon_{n^+}^+ = C^+ \frac{\sqrt{d}}{\sqrt{n^+}} + \frac{\sqrt{\frac{\log (4 /\gamma) }{c^+}}}{\sqrt{n^+}} = \\
        & C^+ \frac{ \varepsilon \sqrt{d}}{2 (C^+\sqrt{d} + \sqrt{\frac{\log (4 / \gamma)}{c^+}}) } + \frac{ \varepsilon \sqrt{\frac{\log (4 /\gamma) }{c^+}}}{2 (C^+\sqrt{d} + \sqrt{\frac{\log (4 / \gamma)}{c^+}})} = \\
        & \frac{\varepsilon}{2} \frac{C^+ \sqrt{d} + \sqrt{\frac{\log (4 / \gamma)}{c^+}}}{C^+ \sqrt{d} + \sqrt{\frac{\log (4 / \gamma)}{c^+}}} = \frac{\varepsilon}{2}
    \end{align}
\end{subequations}
Thus, given $n = n^+$ one has $\varepsilon^+_n = \max(\frac{\varepsilon}{2}, (\frac{\varepsilon}{2})^2) = \frac{\varepsilon}{2}$, because $\frac{\varepsilon}{2} < \varepsilon \leq 1$, and that
\begin{equation}
\label{eqn:h_plus_bound}
     \mathbb{P}\bigg( \lVert \hat{H}_{n}^+ - H^+ \rVert_2 > \frac{\varepsilon}{2} \bigg) \leq \frac{\gamma}{2}. 
\end{equation}
In fact, because $\varepsilon^+_n$ is monotonically decreasing in $n$, given $n \geq n^+$ Eq.~\eqref{eqn:h_plus_bound} holds. 

The same logic above may be used to show that there exists constants $C^-, c^- >0$ depending on the sub-gaussian norm of $w^-_\delta$ such that, given $n \geq n^- = \frac{4}{\varepsilon^2} (C^-\sqrt{d} + \sqrt{\frac{\log (4 / \gamma)}{c^-}})^2$ one has
\begin{equation}
\label{eqn:h_min_bound}
     \mathbb{P}\bigg( \lVert \hat{H}_{n}^- - H^- \rVert_2 > \frac{\varepsilon}{2} \bigg ) \leq \frac{\gamma}{2}. 
\end{equation}
Finally, we combine the two bounds from Eq.~\eqref{eqn:h_min_bound} and Eq.~\eqref{eqn:h_plus_bound}.  
Given $n \geq \max(\frac{4}{\varepsilon^2} (C^+\sqrt{d} + \sqrt{\frac{\log (4 / \gamma)}{c^+}})^2, \frac{4}{\varepsilon^2} (C^-\sqrt{d} + \sqrt{\frac{\log (4 / \gamma)}{c^-}})^2)$ we have 
\begin{subequations}
    \begin{align}
        & \mathbb{P}(\lVert\hat{H}_n - H \rVert_2 > \varepsilon )= \mathbb{P}(\lVert\hat{H}_n^+ - \hat{H}_n^- - H^+ + H^-\rVert_2 > \varepsilon) = \ \ \ \ \ \ \ \ \ \ \ \  \ \ \ (\text{Eq.}~\eqref{eqn:converge} \text{Eq.}~\eqref{eqn:decompose_estimator}) \\ 
         & \mathbb{P}(\lVert(\hat{H}_n^+ - H^+) - (\hat{H}_n^- - H^-)\rVert_2 > \varepsilon) \leq  \\
        & \mathbb{P}(\lVert(\hat{H}_n^+ - H^+) \rVert_2 +  \lVert (\hat{H}_n^- - H^-)\rVert_2 > \varepsilon) \leq \ \ \ \ \ \ \ \ \ \  \ \ \ \ \ \ \ \ \ \ \ \ \ \ \ \ \ \ \ \ \ \ \ \ \ \ \ \ \ \  \ \ \ \ \ \ \ \ \ \ \  
 (\triangle-\text{Ineq.)} \\
        & \mathbb{P}(\lVert \hat{H}_n^+ - H^+ \rVert_2 > \frac{\varepsilon}{2} \ \cup \ \lVert \hat{H}_n^- - H^- \rVert_2 > \frac{\varepsilon}{2} ) \leq \\ 
        & \mathbb{P}(\lVert \hat{H}_n^+ - H^+ \rVert_2 > \frac{\varepsilon}{2})  +  \mathbb{P}(\lVert \hat{H}_n^- - H^- \rVert_2 > \frac{\varepsilon}{2} ) \leq  \ \ \ \ \ \ \ \ \ \  \ \ \ \ \ \ \ \ \ \ \ \ \ \ \ \ \ \ \ \ \ \ \ \ \ \ \ \  \text{(Union bound)} \\
        & \frac{\gamma}{2} + \frac{\gamma}{2} = \gamma  \ \ \ \ \ \ \ \ \ \  \ \ \ \ \ \ \ \ \ \ \ \ \ \ \ \ \ \ \ \ \ \ \ \ \ \ \ \ \ \ \ \ \  \ \ \ \ \ \ \ \ \ \ \ \ \ \ \ \ \ \ \ \ \ \ \ \ \ \ \ \ \ \ \ \ \  \ \ \ \ \ \ \ \ \ \ \ \ \ \ \ \ \ \ \ \ \ \ \ (\text{Eq.}~\eqref{eqn:h_plus_bound} \text{Eq.}~\eqref{eqn:h_min_bound})
    \end{align}
\end{subequations}

\end{proof}
\section{Implementation Details}
\label{app:Imp_Details}
\subsection{Quadratic Optimization}
\label{app:quad_details}
Given a function $f : \sR^d \rightarrow \sR$, point $x_0 \in \sR^d$ gradient Hessian pair $G \in \sR^d, H \in \sR^{d \times d}$ and a magnitude constraint $\varepsilon > 0$, we aim to solve the following optimization:
\begin{align}
    \underset{\Delta \in \sR^d}{\min} \ \ \  f(x_0) + G^T(\Delta - x_0) + \frac{1}{2}(\Delta - x_0)^T H (\Delta - x_0), \quad  s.t. \quad \lVert \Delta - x_0 \rVert_2 \leq \varepsilon, 
\end{align}
as $f(x_0)$ is constant, the problem above is equivalent to 
\begin{align}
\label{eq:vanilla_adv_op}
   & \underset{\delta \in \sR^d}{\min} \ \ \  {G}^T\delta + \frac{1}{2}\delta^T H \delta, \quad  s.t. \quad \lVert \delta \rVert_2 \leq \varepsilon, 
\end{align}
where we have replaced $\Delta - x_0$, which can be interpreted as the output after the attack with $\delta = \Delta - x_0$, the attack vector itself.

The optimization problem in Eq.~\eqref{eq:vanilla_adv_op} is non-convex as $H$ is not guaranteed to be positive semi-definite.
However, as Slater's constraint qualification is satisfied, i.e. $\exists \delta \in \sR^d \ s.t. \  \lVert \delta \rVert_2 < \varepsilon$, Eq.~\eqref{eq:vanilla_adv_op} may be solved exactly\citep{boyd2004convex}.
Specifically, the solution may be obtained by solving an equivalent convex optimization problem: 
\begin{subequations}
\label{eqn:transformed_op}
    \begin{align}
        \underset{\gamma \in \sR^d, X \in S^d}{min} & \textbf{tr}(\frac{1}{2}HX) + G^T\gamma, \\
        s.t. & \ \mathbf{tr}(X) - \varepsilon^2 \leq 0, \\
        & [X, \gamma; \gamma^T, 1] \succeq	 0 
    \end{align}
\end{subequations}
where $S^d$ denotes the the set of symmetric matrices and $\succeq$ indicates the block matrix $[X, \gamma; \gamma^T, 1] \in \sR^{(d +1) \times (d+1)}$ is constrained to be positive semi-definite. 

However, the optimization in Eq.~\eqref{eqn:transformed_op} is expensive to solve when $d \gg 0$ as there are $\mathcal{O}(d^2)$ variables. 
For instance, MNIST and FMNIST have $d^2 \approx 6.0 \cdot 10^6 $ and  CIFAR10 has $d^2 \approx 10^8$. 
Thus, before converting Eq.~\eqref{eq:vanilla_adv_op} into Eq.~\eqref{eqn:transformed_op}, we elect to reduce the dimension of the optimization problem to $k \in \mathbb{N}, k \ll d$. 

Let us consider the eigendecomposition $Q \Lambda Q^T = H$.
Here the columns of $Q \in \sR^{d \times d}$ are orthonormal eigenvectors of $H$.
Given the $d$ eigenvalues $\{ \lambda_1, \ldots, \lambda_d \}$, sorted such that $i < j \implies |\lambda_i| \geq |\lambda_j|$, we have $\Lambda = \text{diag}(\lambda_1, \ldots, \lambda_d)$.
We remove the last $d-k$ columns from $Q$ and $d-k$ columns and rows from $\Lambda$ to construct $\tilde{Q} \in \sR^{d \times k}, \tilde{\Lambda} \in \sR^{k \times k}$. Thus, we have a low-rank approximation of $H$: 
\begin{equation}
    H \simeq \tilde{Q} \tilde{\Lambda} \tilde{Q}^T. 
\end{equation}
Thus, Eq.~\eqref{eq:vanilla_adv_op} is approximately equivalent to another optimization which uses this low-rank approximation for $H$: 
\begin{align}
\label{eq:vanilla_adv_op_eigen}
   & \underset{\delta \in \sR^d}{\min} \ \ \  G^T\delta + \frac{1}{2}\delta^T \tilde{Q} \tilde{\Lambda} \tilde{Q}^T \delta, \quad  s.t. \quad \lVert \delta \rVert_2 \leq \varepsilon. 
\end{align}
Defining $\tilde{\delta} = \tilde{Q}^T \delta \in \sR^{k}$, Eq.~\eqref{eq:vanilla_adv_op_eigen} is approximately equivalent to  
\begin{equation}
    \label{eq:vanilla_adv_op_eigen_lowdim}
 \underset{\tilde{\delta} \in \sR^k}{\min} \ \ \  G^T\tilde{Q} \tilde{\delta} + \frac{1}{2}\tilde{\delta}^T \tilde{\Lambda} \tilde{\delta}, \quad  s.t. \quad \lVert \tilde{\delta} \rVert_2 \leq \varepsilon, 
\end{equation}
where the constraint is simplified to $\lVert \tilde{\delta} \rVert_2 \leq \varepsilon$ from $\lVert \tilde{Q} \tilde{\delta} \rVert_2 \leq \varepsilon$ as
\begin{equation}
\lVert \tilde{Q} \tilde{\delta} \rVert_2 = ((\tilde{Q} \tilde{\delta})^T (\tilde{Q} \tilde{\delta}))^{\frac{1}{2}} = (\tilde{\delta}^T \tilde{Q}^T \tilde{Q} \tilde{\delta})^{\frac{1}{2}} = (\tilde{\delta}^T \tilde{Q}^T \tilde{Q} \tilde{\delta})^{\frac{1}{2}} = (\tilde{\delta}^T I \tilde{\delta})^{\frac{1}{2}} = \lVert \tilde{\delta} \rVert_2.   
\end{equation}
Finally, $\tilde{\delta}^* \in \sR^k$, the optimal solution to Eq.~\eqref{eq:vanilla_adv_op_eigen_lowdim}, is projected back to $\sR^d$ yielding an approximate solution to Eq.~\eqref{eq:vanilla_adv_op_eigen}:
\begin{equation}
    \delta^* = \tilde{Q} \tilde{\delta}^* \in \sR^d. 
\end{equation}

\textbf{Choosing k:} The choice of $k \in \mathbb{N}$ is determined using a threshold hyperparameter $T \in (0,1]$.
Given $T$, $k$ is chosen to be the smallest number of (sorted) eigenvalues/eigenvectors which account for a proportion of the total eigenvalue magnitude that is at least $T$: 
\begin{equation}
    k = \underset{k'}{\text{argmin}} \{k' \in \mathbb{N} : \sum_{i=1}^{k'} |\lambda_i| \geq T \sum_{i=1}^{d} |\lambda_i| \}. 
\end{equation}
For the values of $T$ used for each dataset see App.~\ref{app:experiment_setup} 
 
\subsubsection{Similarities and Differences with CASO/CAFO}
Our quadratic optimization is closely connected to the CAFO/CASO explanation vectors proposed by \citet{singla2019understanding}.
Both methods use optimizations to minimize a value outputted by the network, and the CASO method also uses a second-order Taylor expansion in their objective.
Additionally, the proposed Smooth-CASO method is equivalent to an attack using SmoothGrad and SmoothHess on the cross-entropy loss function (when their regularizers are set to $0$), which admits higher order derivatives.

However, there are some key differences.
First, our optimization is meant for use with arbitrary network outputs as opposed to just the loss, as is the case with CAFO/CASO.
While the vanilla loss Hessian modeled by \citet{singla2019understanding} is proven to be positive semi-definite, this is not the case for arbitrary functions which one may wish to attack.
Thus methods to optimize non-convex quadratic objectives, such as those outlined in App.~\ref{app:quad_details}, are required. 
Second, our goal is not to generate an explanatory feature importance vector for analysis, as is an important motivation for CAFO/CASO, but to assess the quality of the gradient Hessian pair used to attack the function. 
For this reason, we do not use sparsity constraints such as in \citet{singla2019understanding}, which are in part meant to improve the interpretability of CAFO/CASO as explainers. 
Last we stress that 
the techniques used to find the cross-entropy loss Hessian for CASO and Smooth-CASO \emph{cannot be used} for internal neurons, logits or regression valued output in ReLU networks, due to their piecewise-linearity. 

\subsection{SmoothHess}
The SmoothHess estimation procedure, and the amortization with SmoothGrad estimation, is presented below in Algorithm~\ref{alg:SmoothHess}. 
While empirically our SmoothHess estimator converges,  we have additionally found that reflecting each point $\delta_i$ in the perturbation set $\{\delta_i\}_{i=1}^n$ about the origin to create an augmented perturbation set $\{ \delta_i \}_{i=1}^n \cup \{-\delta_i \}_{i=1}^n$ before estimation can result in faster per-sample convergence.

\algnewcommand\AND{\textbf{\textup{and }}}
    \begin{algorithm}[H]
    \footnotesize
        \textbf{Input :} Sample of interest $x \in \sR^d$, Neural network indexed to output scalar of interest $f : \sR^d \rightarrow \sR$, Covariance $\Sigma \in \sR^{d \times d}$, Batch size for gradient oracle calls $n_1 \in \mathbb{N}$, Number of batches $n_2 \in \mathbb{N}$. 
         \\ 
        \textbf{Output :} $\hat{H}$ an estimate of SmoothHess, $\hat{G}$ an estimate of SmoothGrad  \\

        $\hat{H}, \hat{G}  \leftarrow \text{torch.zeros(d,d)}, \text{torch.zeros(d)} $   $\quad \quad \quad \quad  \backslash \backslash$ $\hat{H} \in \sR^{d \times d}$, $\hat{G} \in \sR^d$  \\ 
        $\Sigma^{-1} \leftarrow $ torch.inverse($\Sigma$) $\quad \quad \quad \quad  \backslash \backslash$ $\Sigma^{-1} \in \sR^{d \times d}$, If $\Sigma$ diagonal $\mathcal{O}(d)$, Else $\mathcal{O}(d^3)$ \\ 

        $\backslash \backslash \mathcal{O}(n_1n_2(W + d^2)) \ , \ $ $\mathcal{O}(W)$ is complexity of one forward pass through $f$  \\ 
        \For{i = 1, \ldots, $n_2$} 
        { 
        $\delta \leftarrow$ torch.normal$(n_1, 0, \Sigma)$ $\quad \quad \quad \quad  \backslash \backslash$ $\delta \in \sR^{n_1 \times d}$  \\
        $\nabla f(x + \delta) \leftarrow $ torch.autograd($f, x + \delta$) $\quad \quad \quad \quad  \backslash \backslash$ $\nabla f(x + \delta) \in \sR^{n_1 \times d}$, $\mathcal{O}(n_1W)$ \\
        $\hat{A_i} \leftarrow \text{torch.matmul}(\delta , \nabla f(x + \delta).T) $ $\quad \quad \quad \quad  \backslash \backslash$ $\hat{A_i} \in \sR^{d \times d}$, $\mathcal{O}(n_1 d^2)$ \\
        $\hat{H} \leftarrow \hat{H} + \hat{A_i} \ / \ n_1n_2 $ $\quad \quad \quad \quad  \backslash \backslash$ $\mathcal{O}(n_1 d^2)$ \\ 
        $\hat{G} \leftarrow \hat{G} + \nabla f(x + \delta)\text{.sum(dim = 0)} \ / \ n_1n_2 $ $\quad \quad \quad \quad  \backslash \backslash$ $\mathcal{O}(n_1d)$ 
        }

        $\hat{H} \leftarrow \text{torch.matmul}(\Sigma^{-1}, \hat{H})$ $\quad \quad \quad \quad  \backslash \backslash$ If $\Sigma$ diagonal $\mathcal{O}(d^2)$, Else $\mathcal{O}(d^3)$ \\ 
        $\hat{H} \leftarrow \hat{H} + \hat{H}.T$ $\quad \quad \quad \quad  \backslash \backslash$ $\mathcal{O}(d^2)$
        \texttt{\\}

        Return $\hat{H}, \hat{G}$
        
        \caption{Joint SmoothHess and SmoothGrad Estimation}
        \label{alg:SmoothHess}
    \end{algorithm}    

\subsection{Alternative Covariance Matrices}
While in this work we use isotropic covariance matrices of the form $\Sigma = \sigma^2 I, \sigma >0$, SmoothHess can be estimated using arbitrary positive definite covariance matrices $\Sigma \in \mathbb{R}^{d \times d}$.
Such a covariance matrix can be set according to the users preference, encoded using the fact that the eigenvectors of $\Sigma$ represent directions of interest and their corresponding eigenvalues represent levels of smoothing.

In the simplest case the user already has an orthornomal eigenvector basis in mind, $a_1, \ldots, a_d \in \mathbb{R}^d$ as well as desired levels of smoothing $\sigma_1, \ldots, \sigma_d > 0$.
In this case $\Sigma$ may be set by constructing eigenvector and eigenvalue matrices $Q = [a_1 | \ldots | a_{d}] \in \mathbb{R}^{d \times d}$ and $\Lambda = \text{diag}(\sigma_1, \ldots, \sigma_d) \in \mathbb{R}^{d \times d}$ and simply multiplying $\Sigma = Q \Lambda Q^T$.

Alternatively, the user may only have $k < d$ orthonormal eigenvectors in mind, $a_1, \ldots, a_k  \in \mathbb{R}^d$, for which they wish to smooth at specific levels $\sigma_1, \ldots, \sigma_k > 0$. 
A procedure such as Gram-Schmidt may be used to find $a_{k+1}, \ldots , a_d \in \mathbb{R}^d$ which extends $a_1, \ldots, a_k$ to an orthonormal basis of $\mathbb{R}^d$.
Again, the smoothing levels along the eigenvectors $a_{k+1}, \ldots, a_{d}$ may be chosen according to user preference. 
For instance, one may select $\sigma_{k+1} = \ldots = \sigma_d \approx 0$ if minimal smoothing is desired along these directions. 
Just as above $\Sigma$ may be set by constructing eigenvector and eigenvalue matrices $Q = [a_1 | \ldots | a_{d}] \in \mathbb{R}^{d \times d}$ and $\Lambda = \text{diag}(\sigma_1, \ldots, \sigma_d) \in \mathbb{R}^{d \times d}$ and simply multiplying $\Sigma = Q \Lambda Q^T$.

\section{Experiment Setup} \label{app:experiment_setup}
\subsection{Datasets and Models} \label{app:datasets}
In this work we make use of six datasets, two synthetic datasets (Four Quadrants, Nested Interactions) three benchmark datasets (MNIST, FMNIST, CIFAR10) and a real world medical dataset (Spirometry). Below we describe these datasets and training details.

\textbf{Four Quadrant}. The Four Quadrant dataset consists of points $x \in \sR^2$ sampled from the grid $[-2, 2] \times [-2, 2]$ with a spacing of $0.008$. A 6-layer fully connected ReLU network was trained using RMSProp~\citep{tieleman2012lecture} on the Four Quadrant dataset achieving a final mean-squared-error of $\approx 1e$-$4$. Training lasted for $40,000$ iterations with a batch size of $128$ and a starting learning rate of $1e$-$3$ which was decayed by a factor of $1e$-$1$ at iterations $5000, 10,000$ and $20,000$. 

\textbf{Nested Interactions}. The Nested Interactions dataset consists of points $x \in \sR^2$ sampled from the grid $[-2, 2] \times [-2, 2]$ with a spacing of $0.008$. 
A 6-layer fully connected ReLU network was trained using RMSProp on the Nested Interactions dataset achieving a final mean-squared-error of $\approx 1e$-$1$. 
Training lasted for $200,000$ iterations with a batch size of $64$ and a starting learning rate of $1e$-$3$ which was decayed by a factor of $1e$-$1$ at iterations $40,000, 80,000, 120,000$ and $160,000$. 
For more details on Nested Interactions see App.~\ref{app:nested}. 

\textbf{MNIST} MNIST consists of 70,000 28x28 greyscale images, each corresponding with one of the digits $0$-$9$. There are $60,000$ and $10,000$ images in the pre-defined train and test sets respectively. We further split the train set into $50,000$ images for training and $10,000$ for validation.
A 5-layer fully connected network with dimensions $500$-$300$-$250$-$250$-$250$ was trained with stochastic gradient descent for 30 epochs, with a batch size of 128 and a starting learning rate of $1e$-$2$ which was decayed by a factor of $1e$-$1$ at iterations $4,000$ and $8,000$. 
Final accuracies of $\approx 100\%$ (Train) $\approx 100 \%$ (Val) and $\approx 98 \%$ (Test) were achieved. 
All images are flattened. 
For $\mathcal{P}_{MSE}$ all methods were evaluated on $200$ test points. 
For adversarial attacks all methods were evaluated on $400$ test points. The threshold for the quadratic optimization attack was set to $T=0.98$. 

\textbf{FMNIST}. FMNIST consists of 70,000 28x28 greyscale images, each corresponding with one of $10$ articles of clothing.  There are $60,000$ and $10,000$ images in the pre-defined train and test sets respectively.
We further split the train set into $50,000$ images for training and $10,000$ for validation.
Final accuracies of $\approx 93\%$ (Train) $\approx 93 \%$ (Val) and $\approx 88\%$ (Test) were achieved. 
The network and training details are identical to that used for MNIST above.
All images are flattened. 
For $\mathcal{P}_{MSE}$ all methods were evaluated on $200$ test points. 
For adversarial attacks all methods were evaluated on $400$ test points. The threshold for the quadratic optimization attack was set to $T=0.98$. 

\textbf{CIFAR10} CIFAR10 consists of 60,000 3x32x32 RGB color images, each corresponding with an animal or vehicle. There are 50,000 and 10,000 images in the pre-defined train and test sets respectively. We further split the train set into $40,000$ images for training and $10,000$ for validation. 
A ResNet-18 \cite{He_2016_CVPR} was trained on CIFAR10 for $55$ epochs using a batch size of $128$. The first $5$ epochs were used for warmup with a starting learning rate of $1e$-$2$ ending at $0.5$. 
For the rest of training a cosine decay schedule was used \citep{loshchilov2016sgdr} decaying down to $1e$-$5$ by the final epoch. Augmentations used for training were (i) Random Horizontal Flip ($p=0.5)$ (ii) Color Jitter $(p=0.8)$ with brightness, contrast and saturation values equal to $0.4$ and hue value $0.1$ (iii) Random Grayscale $(p=0.2)$. Final accuracies of $\approx85 \%$ (Train, Augmentations) $\approx 96\%$ (Val, No Augmentations) and $\approx 90\%$ (Test, No Augmentations) were achieved.
No augmentations were applied to the validation/test data when evaluating explainers. 
For $\mathcal{P}_{MSE}$ all methods were evaluated on $100$ test points. For adversarial attacks all methods were evaluated on $200$ test points. The threshold for the quadratic optimization attack was set to $T=0.8$

\textbf{Spirometry.} The Spirometry dataset uses raw exhalation curves, measured in volume over time, recorded during a spirometry exam. Each spirometry curve is measured in 10ms intervals, which we downsample to 50ms intervals and limit to 15s in total length, resulting in 300 features. We use the UK Biobank dataset, which is a large, population-based study conducted in the United Kingdom. Participant statistics have been previously reported in \citet{ukbiobank_study}. The UK Biobank records 2-3 exhalation efforts for each participant, using a Vitalograph Pneumotrac 6800 device\footnote{https://biobank.ctsu.ox.ac.uk/crystal/crystal/docs/Spirometry.pdf}. If two efforts are recorded as passing acceptability criteria and are also reproducible ($\leq 5\%$ difference in Forced Vital Capacity (FVC) and Forced Expiratory Volume in 1 Second (FEV$_1$)), then the third effort is omitted.
A common metric used to evaluate lung health is the Forced Expiratory Volume in 1 Second (FEV$_1$), which is the maximum volume of air that can be expelled by the participant in 1 second \citep{spirometry_step_by_step}. Note that a participant's \fev{} measurement is taken as the maximum \fev{} over all recorded exhalation efforts during a single visit.

The spirometer automatically evaluates effort against a number of acceptability criteria. One such criteria is the detection of coughing.
In our experiment, we use a subset of exhalation efforts where coughing was detected and train a CNN to predict the participant's final \fev{} measurement. This subset contains 8,721 samples, which we split into training (80\%) and test (20\%) partitions. We follow the preprocessing in \citet{deep_learning_utilizing_suboptimal_spirometry} to ensure that participants have at least one effort that passes quality control where \fev{} can be measured. 
The trained CNN includes 10 convolution blocks. Each block contains a 1-d convolution of kernel width 200 and 20 channels, batch normalization, dropout (p=$0.5$), and skip connection. The model is trained using mean squared error (MSE), achieving 0.563 MSE on the train set and 0.547 MSE on the test set.

\subsection{Hyperparameters}
For ease of reading, we use the following notation below: $[a:b:c] = \{a, a + c, a + 2c, \ldots, b - 2c, b - c\} \subseteq \sR$  denotes the set of points between $a$ (inclusive) and $b$ (exclusive) at intervals of size $c$. Here $a$ and $b$ are chosen such that $a < b$ and  $(b-a) \mod c = 0 $. An example is: $[0.1:1.0:0.1] = \{0.1, 0.2, 0.3, 0.4, 0.5, 0.6, 0.7, 0.8, 0.9 \}$.
\subsubsection{$\mathcal{P}_{MSE}$}
For each of the $18$ combinations of dataset, function and neighborhood size $\varepsilon$ the performance of $\beta$ and $\sigma$ are validated on a held out set before selection. 
$200$ validation points are used to choose $\sigma, \beta$ for MNIST and FMNIST and $50$ validation points are used for CIFAR10. 
For SmoothHess + SmoothGrad and SmoothGrad \emph{only three values of $\sigma$} are validated, based on the common sense criterion $\sigma = \varepsilon/\sqrt{d}$:
given neighborhood size $\varepsilon$ and dataset with dimension $d$, $\sigma$ is chosen from $\sigma \in \{\varepsilon/2\sqrt{d} , 3\varepsilon/4\sqrt{d}, \varepsilon/\sqrt{d} \}$. 

The following values of $\beta$ are checked on a validation set:

\textbf{MNIST and FMNIST} $\beta \in [0.1:1:0.1] \cup [1:20:1] \cup [20:95:5] \cup [100:800:10]$

\textbf{CIFAR10:} $\beta \in [0.1:1:0.1] \cup [1:20:0.5] \cup [20:95:5] \cup [100:800:10]$

The values of $\sigma$ for SH and SH + SG, and $\beta$ for SP H + G and SP G which achieve the lowest $\mathcal{P}_{MSE}$ on the validation data are shown in Table~\ref{table:PMSE_Appendix_best_beta_sig}.
The results for SoftPlus $\beta$ are interesting: 
(i) We see that MNIST and FMNIST results, for both class logit and interior neuron, are consistent for fixed $\varepsilon$.
Further we see that CIFAR10 results between class logit and interior neuron are consistent for fixed $\varepsilon$. 
(ii) The optimal value of $\beta$ seems to be approximately proportional to the value of $\varepsilon$. 
While our results in Table~\ref{table:PMSE} show that SmoothHess is better at capturing local interactions than the SoftPlus Hessian, the results in Table~\ref{table:PMSE_Appendix_best_beta_sig} indicate that the relationship between $f$, $f_\beta$ and $\beta$ warrants further exploration.

\begin{table*}[t]

    \scriptsize
    \setlength{\tabcolsep}{1.0pt}
    \renewcommand{\arraystretch}{1.2}
    \centering
    \begin{tabular}{ l c c c c c c c c c c c c c c c c c c c c c c c c}
        \cline{0-18}
        \multicolumn{1}{|l|}{\emph{Dataset}} & 
        \multicolumn{6}{c|}{\textbf{MNIST}}  & 
        \multicolumn{6}{c|}{\textbf{FMNIST}} &
        \multicolumn{6}{c|}{\textbf{CIFAR10}} \\
      
        \cline{0-18}

        \multicolumn{1}{|l|}{\emph{Function}} & 
        \multicolumn{3}{c|}{Class Logit ($\downarrow$)} &
        \multicolumn{3}{c|}{Int. Neuron ($\downarrow$)} &
        \multicolumn{3}{c|}{Class Logit ($\downarrow$)} &
        \multicolumn{3}{c|}{Int. Neuron ($\downarrow$)} & 
        \multicolumn{3}{c|}{Class Logit ($\downarrow$)} &
        \multicolumn{3}{c|}{Int. Neuron ($\downarrow$)} \\ 
        
        \cline{0-18}
        \multicolumn{1}{|l|}{$\epsilon$} & 
        \multicolumn{1}{c}{$0.25$}  & 
        \multicolumn{1}{c}{$0.50$}  &  
        \multicolumn{1}{c|}{$1.00$}  & 
        \multicolumn{1}{c}{$0.25$}  & 
        \multicolumn{1}{c}{$0.50$}  & 
        \multicolumn{1}{c|}{$1.00$}  & 
        \multicolumn{1}{c}{$0.25$} & 
        \multicolumn{1}{c}{$0.50$}  & 
        \multicolumn{1}{c|}{$1.00$}  &
         \multicolumn{1}{c}{$0.25$}  & 
        \multicolumn{1}{c}{$0.50$}  & 
        \multicolumn{1}{c|}{$1.00$}  &
          \multicolumn{1}{c}{$0.25$}  & 
        \multicolumn{1}{c}{$0.50$}  & 
        \multicolumn{1}{c|}{$1.00$}  &
         \multicolumn{1}{c}{$0.25$}  & 
        \multicolumn{1}{c}{$0.50$}  & 
        \multicolumn{1}{c|}{$1.00$}  
        \\ 
        
        \cline{0-18} 
        \multicolumn{1}{|l|}{SH+SG (Us)} &
           4.5e-5  & 
           1.8e-4  &
            \multicolumn{1}{c|}{7.2e-4}  &
            4.5e-5  & 
           1.8e-4  &
            \multicolumn{1}{c|}{7.2e-4} &
             4.5e-5  & 
           1.8e-4  &
            \multicolumn{1}{c|}{7.2e-4}
           &  
            4.5e-5  & 
           1.8e-4  &
            \multicolumn{1}{c|}{7.2e-4} & 
            1.1e-5 &  
            4.6e-5 & 
            \multicolumn{1}{c|}{1.8e-4}  
             &  1.1e-5 &  
            4.6e-5 & 
            \multicolumn{1}{c|}{1.8e-4}  
            \\

        \multicolumn{1}{|l|}{SG~\cite{smilkov2017smoothgrad}} &
              4.5e-5  & 
           1.8e-4  &
            \multicolumn{1}{c|}{7.2e-4}  &
               4.5e-5  & 
           1.8e-4  &
            \multicolumn{1}{c|}{7.2e-4}  &
           4.5e-5  & 
           1.8e-4  &
            \multicolumn{1}{c|}{7.2e-4} 
            &
           4.5e-5  & 
           1.8e-4  &
            \multicolumn{1}{c|}{7.2e-4} 
              & 1.1e-5
            & 4.6e-5
           &  \multicolumn{1}{c|}{1.8e-4}  
             & 1.1e-5
            & 4.6e-5
           &  \multicolumn{1}{c|}{1.8e-4}  
            \\

        \multicolumn{1}{|l|}{SP H + G} &
            400.0 &
            200.0 &
            \multicolumn{1}{c|}{95.0} &
            400.0 &
            200.0  &
            \multicolumn{1}{c|}{100.0} &
             350.0 & 
            160.0  &
            \multicolumn{1}{c|}{75.0} & 
            360.0  &
            190.0 &
            \multicolumn{1}{c|}{95.0}
          &  11.5 
             & 6.0 
              &  \multicolumn{1}{c|}{3.5} 
              &  11.0
            & 6.0 
              &  \multicolumn{1}{c|}{4.0} 
            \\
        \multicolumn{1}{|l|}{SP G} &
            390.0 &
             200.0 &
            \multicolumn{1}{c|}{100.0} &
         400.0  &
           200.0 &
             \multicolumn{1}{c|}{100.0} &
             390.0 &
           180.0 &
            \multicolumn{1}{c|}{95.0} 
             & 360.0 
            & 190.0 
            &  \multicolumn{1}{c|}{95.0} 
            & 14.0
            & 8.0
              &  \multicolumn{1}{c|}{4.0} 
              & 16.0
             & 7.0 
              &  \multicolumn{1}{c|}{4.0} 
            \\
              \multicolumn{1}{|l|}{SW (H + G)} & 190.0 
              & 95.0 
           & 
            \multicolumn{1}{c|}{55.0} & 190.0
             & 95.0
             &
            \multicolumn{1}{c|}{50.0} 
             & 170.0 
            &  80.0 
           & \multicolumn{1}{c|}{45.0} 
           & 170.0
            &   90.0
           & \multicolumn{1}{c|}{45.0} 
           & 8.5
            &  8.5
           & \multicolumn{1}{c|}{8.5}
           & 11.0
            &  11.0
           & \multicolumn{1}{c|}{11.0}
            \\
                \multicolumn{1}{|l|}{SW G} & 190.0 
              &  95.0
           & 
            \multicolumn{1}{c|}{55.0 } & 190.0
             & 95.0
             &
            \multicolumn{1}{c|}{50.0} 
             &  180.0 
            &  100.0
           & \multicolumn{1}{c|}{65.0}
           &  170.0
            &   90.0
           & \multicolumn{1}{c|}{50.0}
           & 8.5
            &  8.5
           & \multicolumn{1}{c|}{8.5}
           & 11.0
            &  11.0
           & \multicolumn{1}{c|}{11.0}
            \\

        \cline{0-18} 
    \end{tabular}
    \caption{
    Selected values of $\sigma^2$ and $\beta$, achieving the lowest average $\mathcal{P}_{MSE}$ on on a held-out validation set. 
}

    \label{table:PMSE_Appendix_best_beta_sig}
    
    \end{table*}

\subsubsection{Adversarial Attacks}
$200$ validation points were used to choose $\beta$ and $\sigma^2$ for MNIST and FMNIST and $50$ validation points were used to choose $\beta$ and $\sigma^2$ for CIFAR10. 

The following values of $\sigma^2$ were validated for adversarial attacks:

\textbf{MNIST and FMNIST:} $\sigma^2 \in \{ 0.001, 0.005\} \cup [0.01:0.1:0.01] \cup [0.15:1.00:0.05]$

\textbf{CIFAR10:} $\sigma^2 \in \{ 5e$-$05, 7.5e$-$05\} \cup [0.0001:0.0006:0.0001] \cup \{0.00075\} \cup [0.001:0.006:0.001]$ 

The following values of $\beta$ were validated for adversarial attacks:

\textbf{MNIST and FMNIST:} $\beta \in [1:20:1] \cup [20:100:5] \cup [100:210:10]$. 

\textbf{CIFAR10:} $\beta \in [1:10:1] \cup [10:45:5]$

The values of $\sigma^2$ and $\beta$ that achieve lowest validation post-hoc accuracy are shown in Table~\ref{tab:adv_best_sig_beta}. 
Following intuition, it is generally the case that parameters corresponding with increased smoothing (larger $\sigma^2$ and smaller $\beta$) achieve better results (lower post-hoc accuracy) for large $\varepsilon$, and parameters corresponding to less smoothing (smaller $\sigma^2$ and larger $\beta$) achieve better results for small $\varepsilon$.

\begin{table*}[t]

    \scriptsize
    \setlength{\tabcolsep}{2.5pt}
    \renewcommand{\arraystretch}{1.2}
    \centering
    \begin{tabular}{ l c c c c c c c c c c c c c c c}
        \cline{0-15}
        \multicolumn{1}{|l|}{\emph{Dataset}} & 
        \multicolumn{5}{c|}{\textbf{MNIST}}  & 
        \multicolumn{5}{c|}{\textbf{FMNIST}} &
        \multicolumn{5}{c|}{\textbf{CIFAR10}} \\
      
        \cline{0-15}

        \multicolumn{1}{|l|}{\emph{Attack Magnitude} $\epsilon$} & 
        \multicolumn{1}{c}{$0.25$}  & 
        \multicolumn{1}{c}{$0.50$}  &
        \multicolumn{1}{c}{$0.75$}  & 
        \multicolumn{1}{c}{$1.25$}  & 
        \multicolumn{1}{c|}{$1.75$}  & 
        \multicolumn{1}{c}{$0.25$}  & 
        \multicolumn{1}{c}{$0.50$}  &
        \multicolumn{1}{c}{$0.75$}  & 
        \multicolumn{1}{c}{$1.25$}  & 
        \multicolumn{1}{c|}{$1.75$}  & 
        \multicolumn{1}{c}{$0.1$} & 
        \multicolumn{1}{c}{$0.2$}  & 
        \multicolumn{1}{c}{$0.3$}  & 
        \multicolumn{1}{c}{$0.4$}  &
        \multicolumn{1}{c|}{$1.0$}   \\ 
        
        \cline{0-15} 
            
        \multicolumn{1}{|l|}{SH+SG (Us)} &
           1e-3 &
            5e-3& 
             3e-2&
             5e-2&
            \multicolumn{1}{c|}{1.5e-1} &
               1e-3 & 
            4e-2& 
             3e-2&
            8e-2&
            \multicolumn{1}{c|}{3e-2} 
             & 5e-5
            & 1e-4
            & 4e-4
           &    4e-4
           & \multicolumn{1}{c|}{7.5e-4}
            \\

        \multicolumn{1}{|l|}{SG~\cite{smilkov2017smoothgrad}} &
            1e-3&
            5e-3& 
             2e-2&
             6e-2&
            \multicolumn{1}{c|}{8e-2} &
            5e-3 & 
            4e-2& 
            8e-2 &
            1.5e-1&
            \multicolumn{1}{c|}{3e-1}  
             & 5e-5
            & 3e-4
            & 5e-4
              &  7.5e-4
            & \multicolumn{1}{c|}{3e-3}

            \\

        \multicolumn{1}{|l|}{SP H + G} &
            8 &
            18 & 
            17 &
             10 & 
            \multicolumn{1}{c|}{6} &
        20 & 
          25 & 
            13&
           9&
            \multicolumn{1}{c|}{6} & 
             5 &
             4&
            3  &
            2
             & \multicolumn{1}{c|}{2}

            \\
            
        \multicolumn{1}{|l|}{SP G} &
             10 & 
           12 & 
             10&
           11 &
            \multicolumn{1}{c|}{3} & 
            30 &
             9 & 
            11 &
            8 &
            \multicolumn{1}{c|}{3} 
             & 5
            &  4
            &  3
            &  2
           & \multicolumn{1}{c|}{2} \\

        \cline{0-15}
    
    \end{tabular}
    \caption{
   Selected values of $\sigma^2$ and $\beta$, achieving the lowest post-hoc accuracy of adversarial attacks on a held-out validation set.  
    }

    \label{tab:adv_best_sig_beta}
    \end{table*}
    
\section{Additional Experiments}
\label{app:Additional_Experiments}
\subsection{$\mathcal{P_{MSE}}$}
\textbf{Comparison with Swish:} In Table~\ref{table:PMSE}, $\mathcal{P}_{MSE}$ results are presented for five methods: the first (SmoothGrad) and second (SmoothHess + SmoothGrad) order Taylor expansions of the ReLU network $f$ convolved with a Gaussian, the first and second order Taylor expansions of the SoftPlus smoothed network and the vanilla (unsmoothed) Gradient. 
We present Table~\ref{table:PMSE_Appendix}, a version of Table~\ref{table:PMSE} which includes additional results comparing with Swish \citep{ramachandran2017searching} smoothed networks.
Swish, an alternative smooth activation to SoftPlus, is formally defined as $\text{Sw}_\beta(x) = x \ \text{sigmoid}(\beta x)$ where $\text{sigmoid}(x) = \frac{1}{1 + \exp{(-x)}}$ and $\beta$ is a hyperparamter determining the level of smoothing.
It can be seen in Table~\ref{table:PMSE_Appendix} that Swish is generally less effective then SoftPlus, and is outperformed by our method at each combination of dataset and locality. 

\textbf{Standard Deviation:} We report the standard deviation of the $\mathcal{P}_{MSE}$ for each method, dataset, function and neighborhood size $\varepsilon$ in Table~\ref{table:PMSE_Appendix_Std}. 
Of the 18 $\mathcal{P}_{MSE}$ results, SmoothHess + SmoothGrad attains the lowest standard deviation for $15$ and ties with SoftPlus Hessian + SoftPlus Gradient for $2$. 
The standard deviation of SoftPlus Hessian + SoftPlus Gradient is the lowest for FMNIST internal neuron at $\varepsilon = 0.5$, achieving $2.4e$-$7$ while SmoothHess + SmoothGrad achieves $2.5$e-$7$. 

\textbf{ResNet101:} We repeat our $\mathcal{P}_{MSE}$ experiment for the predicted class logits of CIFAR10 using a ResNet101, reporting results in Table~\ref{tab:resnet101}. 
It can be seen in the leftmost column that our method, SH + SG, achieves superior performance to the competing methods at each locality, indicating that SmoothHess can generalize to larger network architectures. 
The standard deviation of $P_{MSE}$, as well as choices of $\sigma^2$\ / \ $\beta$ (as selected from a validation set) are reported in the center and rigthmost columns, respectively. 

\begin{table*}[t]

    \scriptsize
    \setlength{\tabcolsep}{1.0pt}
    \renewcommand{\arraystretch}{1.2}
    \centering
    \begin{tabular}{ l c c c c c c c c c c c c c c c c c c c c c c c c}
        \cline{0-18}
        \multicolumn{1}{|l|}{\emph{Dataset}} & 
        \multicolumn{6}{c|}{\textbf{MNIST}}  & 
        \multicolumn{6}{c|}{\textbf{FMNIST}} &
        \multicolumn{6}{c|}{\textbf{CIFAR10}} \\
      
        \cline{0-18}

        \multicolumn{1}{|l|}{\emph{Function}} & 
        \multicolumn{3}{c|}{Class Logit ($\downarrow$)} &
        \multicolumn{3}{c|}{Int. Neuron ($\downarrow$)} &
        \multicolumn{3}{c|}{Class Logit ($\downarrow$)} &
        \multicolumn{3}{c|}{Int. Neuron ($\downarrow$)} & 
        \multicolumn{3}{c|}{Class Logit ($\downarrow$)} &
        \multicolumn{3}{c|}{Int. Neuron ($\downarrow$)} \\ 
        
        \cline{0-18}
        \multicolumn{1}{|l|}{$\epsilon$} & 
        \multicolumn{1}{c}{$0.25$}  & 
        \multicolumn{1}{c}{$0.50$}  & 
        \multicolumn{1}{c|}{$1.00$}  & 
        \multicolumn{1}{c}{$0.25$}  & 
        \multicolumn{1}{c}{$0.50$}  & 
        \multicolumn{1}{c|}{$1.00$}  & 
        \multicolumn{1}{c}{$0.25$} & 
        \multicolumn{1}{c}{$0.50$}  & 
        \multicolumn{1}{c|}{$1.00$}  &
         \multicolumn{1}{c}{$0.25$}  & 
        \multicolumn{1}{c}{$0.50$}  & 
        \multicolumn{1}{c|}{$1.00$}  &
          \multicolumn{1}{c}{$0.25$}  & 
        \multicolumn{1}{c}{$0.50$}  & 
        \multicolumn{1}{c|}{$1.00$}  &
         \multicolumn{1}{c}{$0.25$}  & 
        \multicolumn{1}{c}{$0.50$}  & 
        \multicolumn{1}{c|}{$1.00$}  
        \\ 
        
        \cline{0-18} 
        \multicolumn{1}{|l|}{SH+SG (Us)} &
            \textbf{9.6e-7} &
            \textbf{7.8-6} &
            \multicolumn{1}{c|}{\textbf{6.7e-5}} &
            \textbf{4.9e-8} &
            \textbf{4.0e-7}  &
             \multicolumn{1}{c|}{\textbf{3.3e-6}} &
             \textbf{6.5e-7} &
            \textbf{4.0e-6} &
            \multicolumn{1}{c|}{\textbf{4.3e-5}} 
           &  \textbf{2.0e-8} 
           & \textbf{1.8e-7}  
            & \multicolumn{1}{c|}{\textbf{1.6e-6}}  
             & \textbf{9.8e-4}
            & \textbf{2.2e-2}
           &  \multicolumn{1}{c|}{\textbf{1.2e-1}}  
             & \textbf{8.1e-7}
            &  \textbf{1.4e-5}
           &  \multicolumn{1}{c|}{\textbf{1.6e-4}}   
            \\


        \multicolumn{1}{|l|}{SG~\cite{smilkov2017smoothgrad}} &
            4.5e-6 &
            4.1e-5 &
            \multicolumn{1}{c|}{3.9e-4} &
            2.1e-7 &
            1.7e-6  &
            \multicolumn{1}{c|}{ 1.5e-5} &
            3.0e-6 & 
            2.7e-5 &
            \multicolumn{1}{c|}{2.6e-4} 
             & 1.0e-7 
            & 9.0e-7
              &  \multicolumn{1}{c|}{7.0e-6} 
              & 1.3e-2
            & 8.6e-2
              &  \multicolumn{1}{c|}{4.9e-1} 
              & 1.3e-5
            &   1.1e-4
              &  \multicolumn{1}{c|}{8.3e-4} 
            \\

        \multicolumn{1}{|l|}{SP (H + G)} &
             1.2e-6 &
            9.6e-6 &
            \multicolumn{1}{c|}{8.1e-5} &
            5.5e-8 &
            4.4e-7 &
            \multicolumn{1}{c|}{3.7e-6} &
            9.6e-7 & 
            7.5e-6 &
            \multicolumn{1}{c|}{6.5e-5} & 
             3.0e-8 &
            2.1e-7 &
            \multicolumn{1}{c|}{1.8e-6}
            & 2.1e-3
            & 3.3e-2
              &  \multicolumn{1}{c|}{2.5e-1} 
              &  1.1e-5
            & 1.0e-4
              &  \multicolumn{1}{c|}{7.0e-4} 
            \\
        \multicolumn{1}{|l|}{SP G} &
            4.6e-6 &
             4.1e-5 &
            \multicolumn{1}{c|}{3.9e-4} &
            2.1e-7 &
            1.7e-6  &
             \multicolumn{1}{c|}{1.5e-5} &
           3.2e-6   &
           2.8e-5 &
            \multicolumn{1}{c|}{2.6e-4} 
             & 1.0e-7
            & 8.5e-7
            &  \multicolumn{1}{c|}{7.2e-6} 
            & 1.3e-2
            & 9.0e-2
              &  \multicolumn{1}{c|}{5.2e-1} 
              & 5.1e-5
             & 2.9e-4
              &  \multicolumn{1}{c|}{1.6e-3} 
            \\
             \multicolumn{1}{|l|}{SW (H+ G)} & 2.4e-6
            & 2.0e-5
              & 
            \multicolumn{1}{c|}{1.9e-4} & 1.0e-7
            & 8.3e-7
             &
             \multicolumn{1}{c|}{7.3e-6} & 2.1e-6
             & 1.7e-5
            & 
            \multicolumn{1}{c|}{1.8e-4} 
             & 5.0e-8
            & 4.3e-7
            &  \multicolumn{1}{c|}{3.7e-6} 
            & 1.1e-2
            & 3.3e-1
              &  \multicolumn{1}{c|}{7.9e0} 
              & 6.2e-5
             & 1.7e-3
              &  \multicolumn{1}{c|}{3.8e-2} 
            \\
             \multicolumn{1}{|l|}{SW G} & 5.6e-6
            & 5.0e-5
              &
            \multicolumn{1}{c|}{4.9e-4} & 2.4e-7
            & 2.0e-6
             &
             \multicolumn{1}{c|}{1.8e-5} & 3.9e-6
             & 3.5e-5
            &
            \multicolumn{1}{c|}{3.5e-4} 
             & 1.1e-7
            & 9.6e-7
            &  \multicolumn{1}{c|}{8.3e-6} 
            & 4.9e-2
            & 9.8e-2
              &  \multicolumn{1}{c|}{6.0e-1} 
              & 5.6e-5
             & 3.4e-4
              &  \multicolumn{1}{c|}{2.0e-3} 
            \\
        \multicolumn{1}{|l|}{G~\cite{Simonyan2014DeepIC}} &
            4.2e-3 &
            1.7e-2 &
            \multicolumn{1}{c|}{6.7e-2} &
            2.0e-3 &
            7.0e-3 &
            \multicolumn{1}{c|}{2.9e-2} &
           3.8e-3  & 
           1.5e-2&
            \multicolumn{1}{c|}{6.0e-2} 
            & 1.0e-4
            & 4.0e-4 
            &  \multicolumn{1}{c|}{1.8e-3} 
            &  3.0e-1
            & 1.2e-0
              &  \multicolumn{1}{c|}{5.0e-0} 
              & 9.0e-4
            & 3.5e-3
              &  \multicolumn{1}{c|}{1.4e-2}  \\
              
        \cline{0-18} 
    \end{tabular}
    \caption{
    Average $\mathcal{P}_{MSE}$ results at three radii $\varepsilon$, with the inclusion of vanilla Gradient (G). 
    Other methods include: SmoothHess + SmoothGrad (SH+SG,Us) SmoothGrad (SG) SoftPlus Grad (SP G) SoftPlus Hessian + Gradient (SP H + G).
    Results are provided for the predicted class logit, and the penultimate neuron maximally activated by the "three," dress and cat classes for MNIST, FMNIST and CIFAR10 respectively.
    }  

    \label{table:PMSE_Appendix}
    
    \end{table*}

\begin{table*}[t]

    \scriptsize
    \setlength{\tabcolsep}{1.0pt}
    \renewcommand{\arraystretch}{1.2}
    \centering
    \begin{tabular}{ l c c c c c c c c c c c c c c c c c c c c c c c c}
        \cline{0-18}
        \multicolumn{1}{|l|}{\emph{Dataset}} & 
        \multicolumn{6}{c|}{\textbf{MNIST}}  & 
        \multicolumn{6}{c|}{\textbf{FMNIST}} &
        \multicolumn{6}{c|}{\textbf{CIFAR10}} \\
      
        \cline{0-18}

        \multicolumn{1}{|l|}{\emph{Function}} & 
        \multicolumn{3}{c|}{Class Logit ($\downarrow$)} &
        \multicolumn{3}{c|}{Int. Neuron ($\downarrow$)} &
        \multicolumn{3}{c|}{Class Logit ($\downarrow$)} &
        \multicolumn{3}{c|}{Int. Neuron ($\downarrow$)} & 
        \multicolumn{3}{c|}{Class Logit ($\downarrow$)} &
        \multicolumn{3}{c|}{Int. Neuron ($\downarrow$)} \\ 
        
        \cline{0-18}
        \multicolumn{1}{|l|}{$\epsilon$} & 
        \multicolumn{1}{c}{$0.25$}  & 
        \multicolumn{1}{c}{$0.50$}  & 
        \multicolumn{1}{c|}{$1.00$}  & 
        \multicolumn{1}{c}{$0.25$}  & 
        \multicolumn{1}{c}{$0.50$}  & 
        \multicolumn{1}{c|}{$1.00$}  & 
        \multicolumn{1}{c}{$0.25$} & 
        \multicolumn{1}{c}{$0.50$}  & 
        \multicolumn{1}{c|}{$1.00$}  &
         \multicolumn{1}{c}{$0.25$}  & 
        \multicolumn{1}{c}{$0.50$}  & 
        \multicolumn{1}{c|}{$1.00$}  &
          \multicolumn{1}{c}{$0.25$}  & 
        \multicolumn{1}{c}{$0.50$}  & 
        \multicolumn{1}{c|}{$1.00$}  &
         \multicolumn{1}{c}{$0.25$}  & 
        \multicolumn{1}{c}{$0.50$}  & 
        \multicolumn{1}{c|}{$1.00$}  
        \\ 
        
        \cline{0-18} 
        \multicolumn{1}{|l|}{SH+SG (Us)} &
           \textbf{9.9e-7} &
            \textbf{7.3-6} &
            \multicolumn{1}{c|}{\textbf{6.4e-5}} &
           \textbf{3.9e-8} &
            \textbf{2.7e-7}  &
             \multicolumn{1}{c|}{\textbf{2.1e-6}} &
             \textbf{1.0e-6}&
              \textbf{7.7e-6} &
            \multicolumn{1}{c|}{\textbf{6.4e-5}}  
           &  \textbf{2.8e-8}
           &  2.5e-7
            & \multicolumn{1}{c|}{\textbf{1.9e-6}}  
             &  \textbf{4.7e-3} 
            & \textbf{1.1e-1}
           &  \multicolumn{1}{c|}{\textbf{4.3e-1}}   
             & \textbf{1.5e-6}
            &  \textbf{3.6e-5}
           &  \multicolumn{1}{c|}{\textbf{4.2e-4}}   
            \\

        \multicolumn{1}{|l|}{SG~\cite{smilkov2017smoothgrad}} &
            7.0e-6 &
           7.4e-5 &
            \multicolumn{1}{c|}{7.1e-5} &
             2.2e-7 &
             1.6e-6 &
            \multicolumn{1}{c|}{1.2e-5} &
            5.5e-6 & 
            5.1e-5 &
            \multicolumn{1}{c|}{4.6e-4} 
             &  2.0e-7
            &  1.8e-6
              &  \multicolumn{1}{c|}{1.3e-5} 
              & 4.7e-2
            & 1.9e-1
           &  \multicolumn{1}{c|}{7.5e-1}  
              & 4.2e-5
            &  4.1e-4
              &  \multicolumn{1}{c|}{2.9e-3} 
            \\
            
        \multicolumn{1}{|l|}{SP (H + G)} &
           1.2e-6  &
          9.5e-6  &
            \multicolumn{1}{c|}{7.7e-5} &
            4.4e-8 &
           2.9e-7 &
            \multicolumn{1}{c|}{\textbf{2.1e-6}} &
            2.4e-6 & 
          1.5e-5  &
            \multicolumn{1}{c|}{1.2e-4} & 
            3.1e-8  &
           \textbf{2.4e-7} &
            \multicolumn{1}{c|}{\textbf{1.9e-6}} & 
           6.5e-3 & 
            1.3e-1 
              &  \multicolumn{1}{c|}{6.4e-1} & 
             5.7e-5 &  
            4.0e-4 
              &  \multicolumn{1}{c|}{1.9e-3} 
            \\
        \multicolumn{1}{|l|}{SP G} &
            7.1e-6 &
             7.4e-5 & 
            \multicolumn{1}{c|}{7.2e-4} &
       2.2e-7   &
         1.6e-6   &
             \multicolumn{1}{c|}{1.2e-5} &
           6.1e-6   &
         5.4e-5  &
            \multicolumn{1}{c|}{4.7e-4} & 
           2.0e-7 & 
           1.8e-6 & 
            \multicolumn{1}{c|}{1.4e-5} & 
           4.8e-2 &
           1.9e-1 & 
               \multicolumn{1}{c|}{7.6e-1} 
              & 2.6e-4
             & 8.9e-4
              &  \multicolumn{1}{c|}{3.6e-3} 
            \\
              \multicolumn{1}{|l|}{SW (H+ G)} & 2.6e-6
            & 2.0e-5
              & 
            \multicolumn{1}{c|}{1.9e-4} & 6.5e-8
            & 4.6e-7
             &
             \multicolumn{1}{c|}{4.2e-6} & 7.2e-6
             & 3.3e-5
            &
            \multicolumn{1}{c|}{3.7e-4} 
             & 5.3e-8
            & 4.5e-7
            &  \multicolumn{1}{c|}{4.0e-6} 
            & 5.0e-2
            & 1.7e0
              &  \multicolumn{1}{c|}{4.3e1} 
              & 3.6e-4
             & 1.2e-2
              &  \multicolumn{1}{c|}{2.7e-1} 
            \\
             \multicolumn{1}{|l|}{SW G} & 7.3e-6
            & 7.2e-5
              &
            \multicolumn{1}{c|}{7.4e-4} & 2.1e-7
            & 1.5e-6
             &
             \multicolumn{1}{c|}{1.2e-5} & 6.6e-6
             & 6.2e-5
            &
            \multicolumn{1}{c|}{6.2e-4} 
             & 1.5e-7
            & 1.1e-6
            &  \multicolumn{1}{c|}{9.0e-6} 
            & 5.0e-2
            & 2.1e-1
              &  \multicolumn{1}{c|}{8.6e-1} 
              & 2.8e-4
             & 1.0e-3
              &  \multicolumn{1}{c|}{4.2e-3} 
            \\
        \multicolumn{1}{|l|}{G~\cite{Simonyan2014DeepIC}} &
            2.1e-3 &
            8.4e-3 &
            \multicolumn{1}{c|}{3.4e-2} &
            7.9e-5&
             3.0e-4&
            \multicolumn{1}{c|}{1.2e-3} &
           2.7e-3  & 
           1.1e-2&
            \multicolumn{1}{c|}{4.3e-2} 
            & 9.1e-5
            & 4.0e-4
            &  \multicolumn{1}{c|}{1.4e-3} 
            & 2.4e-1
            & 9.6e-1
              &  \multicolumn{1}{c|}{3.7e-0} 
              & 9.0e-4 
            & 3.7e-3
              &  \multicolumn{1}{c|}{1.5e-2}  \\
              
        \cline{0-18} 
    \end{tabular}
    \caption{
    Standard deviation of $\mathcal{P}_{MSE}$ at three radii $\epsilon$. SmoothHess + SmoothGrad (SH+SG,Us) SmoothGrad (SG) SoftPlus Grad (SP G) SoftPlus Hessian + Gradient (SP H + G) Vanilla gradient (G).
    Results are provided for the predicted class logit, and the penultimate neuron maximally activated by the "three," dress and cat classes for MNIST, FMNIST and CIFAR10 respectively.
}

    \label{table:PMSE_Appendix_Std}
    
    \end{table*}

\begin{table*}[t]

    \setlength{\tabcolsep}{5.0pt}
    \renewcommand{\arraystretch}{2}
    \centering
    \begin{tabular}{ l c c c c c c c c c c c c c c c}
        
   \cline{0-9}
        \multicolumn{1}{|l|}{Value} & 
        \multicolumn{3}{c|}{$\mathcal{P_{MSE}}$}  & 
        \multicolumn{3}{c|}{$\mathcal{P_{MSE}}$ Std} &
         \multicolumn{3}{c|}{$\sigma^2, \beta$ used}\\ 
         
        \cline{0-9}

        \cline{0-9}
        \multicolumn{1}{|l|}{$\epsilon$} & 
        \multicolumn{1}{c}{$0.25$}  & 
        \multicolumn{1}{c}{$0.50$}  & 
        \multicolumn{1}{c|}{$1.00$}  &
         \multicolumn{1}{c}{$0.25$}  & 
        \multicolumn{1}{c}{$0.50$}  & 
        \multicolumn{1}{c|}{$1.00$}  &
         \multicolumn{1}{c}{$0.25$}  & 
        \multicolumn{1}{c}{$0.50$}  & 
        \multicolumn{1}{c|}{$1.00$}  
        \\ 
        
        \cline{0-9} 
        \multicolumn{1}{|l|}{SH+SG (Us)} &
            \textbf{1.3e-4} &
            \textbf{9.5e-4} &
            \multicolumn{1}{c|}{\textbf{6.9e-3}} 
             & \textbf{1.7e-4}
            & \textbf{1.1e-3}
            & \multicolumn{1}{c|}{\textbf{1.1e-2}} 
              & 1.1e-5
            & 4.6e-6
            & \multicolumn{1}{c|}{1.8e-4}
            \\

        \multicolumn{1}{|l|}{SG} &
          5.3e-4&
           4.3e-3 &
            \multicolumn{1}{c|}{3.3e-2} 
            & 1.0e-3
            & 8.3e-3
            & \multicolumn{1}{c|}{6.7e-2} 
             & 1.1e-5
            & 4.6e-6
            & \multicolumn{1}{c|}{1.8e-4}
            \\
        \multicolumn{1}{|l|}{SP (H + G)} & 3.7e-4
            &2.6e-3
           & 
            \multicolumn{1}{c|}{1.7e-2} 
             & 6.6e-4
            & 2.7e-3
            & \multicolumn{1}{c|}{3.1e-2} 
             & 41.0
            & 16.0
            & \multicolumn{1}{c|}{11.5}
              \\  
        \multicolumn{1}{|l|}{SP G} & 6.2e-4
            & 5.0e-3 
           & 
            \multicolumn{1}{c|}{3.7e-2} 
             & 1.1e-3
            & 8.8e-3
            & \multicolumn{1}{c|}{6.9e-2} 
             & 55.0
            & 22.5
            & \multicolumn{1}{c|}{11.5}
              \\
              \multicolumn{1}{|l|}{Swish (H + G)} & 1.5e-3
        & 1.2e-2
       &
        \multicolumn{1}{c|}{4.5e-2} 
         & 2.5e-3
            & 8.3e-3
            & \multicolumn{1}{c|}{2.8e-2} 
             & 27.5
            & 17.5
            & \multicolumn{1}{c|}{1.5}
            
          \\  
    \multicolumn{1}{|l|}{Swish G} & 8.8e-4
        &  7.0e-3
       & 
        \multicolumn{1}{c|}{5.3e-2} 
         & 1.4e-3
            & 1.1e-2
            & \multicolumn{1}{c|}{8.3e-2} 
             & 60.0  
            & 1.0e6
            & \multicolumn{1}{c|}{1.0e6}
          \\
        \multicolumn{1}{|l|}{G} &
            3.8e0 &
           1.5e+1 &
            \multicolumn{1}{c|}{6.0e+1} 
            & 3.3e0
            & 1.3e1 
            &
            \multicolumn{1}{c|}{5.2e1} 
            & n/a 
            & n/a 
            & 
            \multicolumn{1}{c|}{n/a}
              \\
              
        \cline{0-9} 
    \end{tabular}
    \caption{
    Using a  ResNet101 trained on CIFAR10, the average $\mathcal{P}_{MSE}$ achieved by SmoothHess + SmoothGrad (SH + SG, Us), SmoothGrad (SG), SoftPlus Hessian + SoftPlus Gradient (SP (H+G)), SoftPlus Gradient (SP G), Swish Hessian + Swish Gradient (Swish (H+G)), Swish Gradient (Swish G) and Vanilla Gradient (G), is evaluated as a proxy for explainer quality.
    Results are reported at three radii $\varepsilon$, for the predicted class logit.
    \textbf{Left:}  $\mathcal{P}_{MSE}$ results are reported.
    The lowest value in each column is bolded.
    \textbf{Middle:} The standard deviation of $\mathcal{P}_{MSE}$ is reported.
    The lowest value in each column is bolded.
    \textbf{Right:} The smoothing hyperparameter ($\sigma^2$ for SmoothGrad and SmoothHess + SmoothGrad, $\beta$ for SoftPlus and Swish) used is reported. 
    \emph{Our method, SH + SG, achieves the lowest $\mathcal{P}_{MSE}$ for each of the three radii $\varepsilon$.} 
    This indicates that the interactions SmoothHess captures improve the model of network behaviour, \emph{even for large networks such as ResNet101}. 
    } 
    
    \label{tab:resnet101}
    
    \end{table*}

\subsection{Adversarial Attacks}
\textbf{Comparison with Vanilla Hessian:}
One may use the vanilla Hessian of the predicted SoftMax probability, which admits higher order derivatives, to construct adversarial attacks. 
Table~\ref{tab:adv2} is an updated version of Table~\ref{tab:adv} which includes results for attacks using the vanilla Hessian + vanilla Gradient (H + G), in the third to last row.

We see that inclusion of the vanilla Hessian generally results in more effective attacks then use of the vanilla gradient alone. 
H + G ties SH+SG and SP H + G for lowest post-hoc accuracy at the smallest magnitude ($\varepsilon = 0.25$) for the simplest dataset (MNIST).
However, as no smoothing is done, the attacks generated from the H + G are generally significantly weaker then those generated using smooth surrogates.
In fact, aside from MNIST with $\varepsilon=0.25$, the second order vanilla H + G attacks achieve higher post-hoc accuracy then first-order method SmoothGrad for all datasets and values of $\varepsilon$. 
This is especially apparent for CIFAR10, the most complex dataset. 
\begin{table*}[t]

    \scriptsize
    \setlength{\tabcolsep}{2.5pt}
    \renewcommand{\arraystretch}{1.2}
    \centering
    \begin{tabular}{ l c c c c c c c c c c c c c c c}
        \cline{0-15}
        \multicolumn{1}{|l|}{\emph{Dataset}} & 
        \multicolumn{5}{c|}{\textbf{MNIST}}  & 
        \multicolumn{5}{c|}{\textbf{FMNIST}} &
        \multicolumn{5}{c|}{\textbf{CIFAR10}} \\
      
        \cline{0-15}

        \multicolumn{1}{|l|}{\emph{Attack Magnitude} $\epsilon$} & 
        \multicolumn{1}{c}{$0.25$}  & 
        \multicolumn{1}{c}{$0.50$}  &
        \multicolumn{1}{c}{$0.75$}  & 
        \multicolumn{1}{c}{$1.25$}  & 
        \multicolumn{1}{c|}{$1.75$}  & 
        \multicolumn{1}{c}{$0.25$}  & 
        \multicolumn{1}{c}{$0.50$}  &
        \multicolumn{1}{c}{$0.75$}  & 
        \multicolumn{1}{c}{$1.25$}  & 
        \multicolumn{1}{c|}{$1.75$}  & 
        \multicolumn{1}{c}{$0.1$} & 
        \multicolumn{1}{c}{$0.2$}  & 
        \multicolumn{1}{c}{$0.3$}  & 
        \multicolumn{1}{c}{$0.4$}  &
        \multicolumn{1}{c|}{$1.0$}   \\ 
        
        \cline{0-15} 
            
        \multicolumn{1}{|l|}{SH+SG (Us)} &
            \textbf{93.0} &
            \textbf{80.3} & 
            \textbf{48.0} &
            \textbf{10.5} &
            \multicolumn{1}{c|}{\textbf{2.0}} &
             \textbf{79.5} &
             \textbf{46.8} & 
            \textbf{25.0} &
            \textbf{3.5} &
            \multicolumn{1}{c|}{\textbf{0.0}} 
             & \textbf{62.5}
            & \textbf{38.5}
            & \textbf{26.5}
           &  \multicolumn{1}{c}{\textbf{15.0}}   
           & \multicolumn{1}{c|}{4.5}
            \\

        \multicolumn{1}{|l|}{SG~\cite{smilkov2017smoothgrad}} &
           93.3 &
           81.8 & 
            48.8 &
            11.3 &
            \multicolumn{1}{c|}{2.8} &
            \textbf{79.5} & 
            49.3 & 
            26.3 &
            4.0 &
            \multicolumn{1}{c|}{\textbf{0.0}}  
             & 65.0
            & 42.0
            & 27.5
              &  \multicolumn{1}{c}{17.0} 
            & \multicolumn{1}{c|}{\textbf{0.0}}

            \\

        \multicolumn{1}{|l|}{SP (H + G)} &
            \textbf{93.0} &
            81.8 & 
            51.5 &
            15.8 & 
            \multicolumn{1}{c|}{7.5} &
           79.8 & 
           51.0 & 
            27.5 &
            5.3 &
            \multicolumn{1}{c|}{0.8} & 
             64.5 &
            42.0 &
             31.0 &
            \multicolumn{1}{c}{23.5}
             & \multicolumn{1}{c|}{7.5}

            \\
            
        \multicolumn{1}{|l|}{SP G} &
            93.3 &
            82.3 & 
            53.8 &
            16.3 & 
           \multicolumn{1}{c|}{5.0} &
            79.8 &
            51.5 & 
            29.5 &
            7.8 &
            \multicolumn{1}{c|}{1.0} 
             & 66.5
            &  47.5
            &  36.0
            &  \multicolumn{1}{c}{29.5} 
           & \multicolumn{1}{c|}{8.5}
            \\
               \multicolumn{1}{|l|}{H + G} &
            \textbf{93.0} &
           81.8 &
            55.3 & 
             19.0 & 
            \multicolumn{1}{c|}{11.8} &
            80.0 &
            50.0 &
            30.3 & 
         9.5 & 
            \multicolumn{1}{c|}{2.0} &
           68.0  & 
           51.5 &
           40.5 & 
            32.5 & 
            \multicolumn{1}{c|}{22.0}

            \\
        \multicolumn{1}{|l|}{G \cite{Simonyan2014DeepIC}} &
            93.3 &
            82.8 & 
            56.0 &
            18.5 & 
            \multicolumn{1}{c|}{8.8} &
            80.3 &
            52.3 & 
            31.8 &
            11.0 & 
            \multicolumn{1}{c|}{2.5} &
            69.0 &
            51.5 &
            41.0 & 
            \multicolumn{1}{c}{34.0} 
            & \multicolumn{1}{c|}{21.5}
            \\
        
        \multicolumn{1}{|l|}{Random} &
            99.8 &
            99.5 & 
            99.0 &
            99.0 & 
            \multicolumn{1}{c|}{98.8} &
            99.3 &
            98.0 & 
            97.3 &
            95.5 & 
            \multicolumn{1}{c|}{93.8} &
           100.0  & 
             99.5 & 
            99.0 &
            \multicolumn{1}{c}{98.5} 
            & \multicolumn{1}{c|}{96.5}
            \\
       
        \cline{0-15}
    
    \end{tabular}
    \caption{
    Post-hoc accuracy of adversarial attacks performed on the predicted SoftMax probability, at five attack magnitudes $\epsilon$, with the inclusion of the vanilla Hessian + vanilla Gradient (H + G).
    Lower is better.
    Other methods include: SmoothHess + SmoothGrad (SH + SG, Ours), SmoothGrad (SG), SoftPlus Gradient (SP G), SoftPlus Hessian + SoftPlus Gradient (SP (H + G)) and vanilla Gradient (G). 
    First order attack vectors 
    are constructed by scaling the normalized gradient by $\epsilon$ and subtracting from the input. 
    Second order attack vectors are found by minimizing the corresponding second-order Taylor expansions. 
    }

    \label{tab:adv2}
    \end{table*}
    
\subsection{Nested Interactions}
\label{app:nested}
We use the Nested Interactions dataset to highlight SmoothHess's ability to capture different interactions occurring at \emph{various localities} around a given point.
In this experiment we measure interactions around the origin $x_0 = (0,0)^T \in \sR^2$.  

Just like the Four Quadrant dataset, the Nested Interactions dataset consists of points $x \in \sR^2$ sampled uniformly from $[-2,2] \times [-2,2] \subset \sR^2$ with a spacing of $0.008$. 
We establish different interactions occurring around the origin $x_0$, based upon the distance from $x_0$.
Specifically, we set the label for a given point $x$ by:
$x \in B_{0.6}(x_0) \implies y(x) = \frac{1}{2} x_1^2 + x_1 x_2 \ , x \in B_{1.2}(x_0) \backslash B_{0.6}(x_0) \implies y(x) = x_1x_2 , \ x \in \sR^2 \backslash B_{1.2}(x_0) \implies y(x) = -5x_1x_2$. 

In words, the interaction between features $x_1$ and $x_2$ is $1$ inside the radius-$1.2$ ball around $x_0$ and is $-5$ outside of this ball. 
The interaction between $x_1$ and itself is $1$ inside the radius-$0.6$ ball around $x_0$ and $0$ outside of this ball. 
The interaction between $x_2$ and itself is $0$ over all of $\sR^2$.

We train a 6-layer neural network on the Nested Interactions dataset and estimate SmoothHess and SoftPlus Hessian for $\sigma^2 \in \{1e$-$6, \ldots, 1e1\}$ and $\beta \in \{1e\text{-}1, \ldots, 4.0 \}$ respectively.
The interaction results for $x_1$ with itself, $x_1$ with $x_2$ and $x_2$ with itself as a function of the level of smoothing ($\sigma$ or $\beta$) are reported in Figure~\ref{fig:HessianSoftplus}. 

As the target function $y(x)$ is discontinuous, it is not possible for a network to memorize the Nested Interactions dataset.
Thus, there very well may be interactions occurring in the network which are not described as above; the interactions we know occur in the data are not a pure "gold-standard". 
That being said, Figure~\ref{fig:HessianSoftplus} shows that SmoothHess captures the interactions as we know occur in the data, and the SoftPlus Hessian does not. 
This suggests that, to a large extent, both (i) the network has memorized the data and (ii) SmoothHess captures the network behaviour while the SoftPlus Hessian does not. 

\begin{figure}[t]
    \begin{center}
    \includegraphics[width=0.8\linewidth]{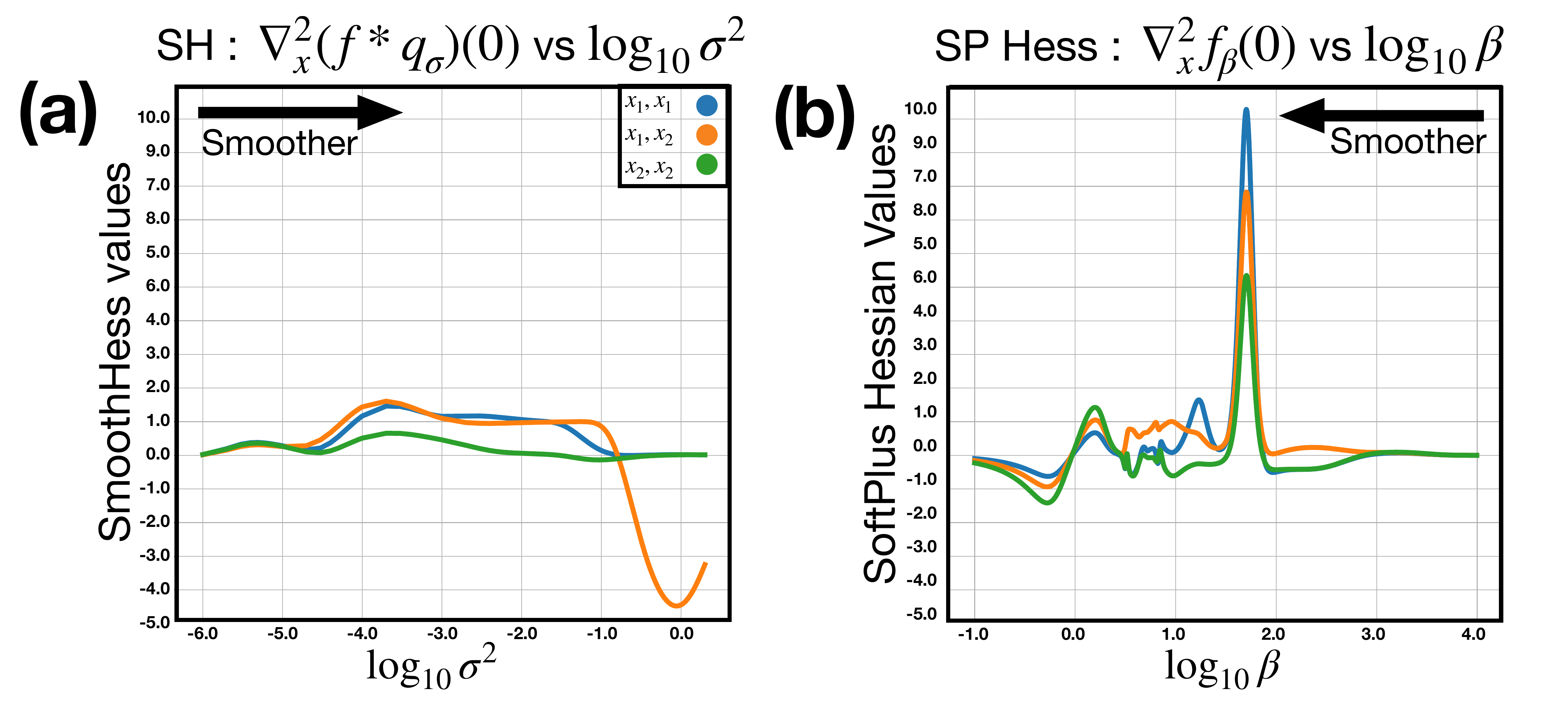}
    \end{center}
    \caption{ Three estimated Hessian elements at $x_0 = (0,0)^T$ for a $6$-layer ReLU Network $f : \sR^2 \rightarrow \sR$ trained on the Nested Interactions dataset. 
    \textbf{(a)} SmoothHess (SH) is estimated with isotropic covariance $\Sigma = \sigma^2 I$ using granularly sampled $\sigma^2 \in \{1e\text{-}6, \ldots, 10\}$. 
    At minute $\log_{10} \sigma^2 < -4$ either hyper-local noisy behavior is captured, or smoothing is so negligible that the smoothed function is approximately piece-wise linear with a low-magnitude Hessian.
    For $\sigma^2$ ranging from $\log_{10} \sigma^2 = -4$ to $\log_{10} \sigma^2 = 0$ we see SmoothHess reflects the interactions in the dataset:
    Starting at $\log_{10} \sigma^2 = -4$ both $x_1x_1$ and $x_1x_2$ have an interaction $\approx 1$ until the interaction between $x_1x_1$ begins to dip to $0$ around $\log_{10} \sigma^2 = -1.5$. 
    Finally around $\log_{10} \sigma^2 = -1$ the interaction $x_1 x_2$ begins to dip toward $-5$, until $\log_{10} \sigma^2 = 0$ when $\sigma^2$ is so large that samples outside the training data distribution are incorporated into SmoothHess estimation.
    \textbf{(b)} The Hessian of the SoftPlus smoothed function $f_\beta$ (SP Hess) is computed using granularly sampled $\beta \in \{1e\text{-}1, \ldots, 1e4\}$. 
    Here, as $\beta$ is decreased, it is not apparent that the variety of interactions in the Nested Interactions dataset are captured, either in relative ordering or magnitude. 
 }
    \label{fig:HessianSoftplus}
\end{figure}

\subsection{Qualitative Comparison}

We present a visual comparison of the interactions found by SmoothHess with those from other methods.
Namely, we consider methods that can be interpreted as the quadratic term in a second-order Taylor expansion around a smooth surrogate network: SoftPlus Hessian and Swish Hessian.

We show interactions found between super-pixels of CIFAR10 test images.
To this end, we utilize the Simple
Linear Iterative Clustering (SLIC) \citep{SLIC} algorithm to segment the image into 20-25 super-pixels.
We sum interactions between each pair of features in each pair of super-pixels, before visualization.

Results are shown in Figure~\ref{fig:Qualitative} for the predicted class logit of a ResNet18 model for three CIFAR10 test images.
Here, each row corresponds to a separate image.
Test images are visualized in column 1. 
Columns $2$-$4$ correspond to the three methods. 
For each image, interactions between one chosen super-pixel (outlined in black) and each other super-pixel are visualized as a heatmap overlaid upon the image. 
In order to facilitate comparison across images and methods, we standardize the heatmap colorbar to range between the most negative and most positive interaction values on a per-image and method basis. 

One interesting trend seen in each case is that there is a strong positive interaction between the chosen super-pixel and one other super-pixel which (a) is spatially nearby and (b) contains the class object of interest.
For example, in the first row, the side-view mirror of the car positively interacts with the front wheel.
In the second row, the tip of the frogs head can be seen to interact positively with the side of the head.
In the third row, the upper and lower portions of the dogs front leg have a strong positive interaction. 

Due to the subjectivity of this comparison, we include quantitative results above each image. 
Specifically, we indicate the $\mathcal{P}_{MSE}$ each method achieves within an $\varepsilon = 0.25$ ball around each image.
Optimal smoothing parameters were chosen for each method for this task (see Table~\ref{table:PMSE_Appendix_best_beta_sig}).   
It can be seen that SmoothHess achieves the lowest $\mathcal{P}_{MSE}$ in each case by a wide margin. 
Thus, SmoothHess may be the preferable option if one wishes for a visualization which best reflects the network's behaviour in an $\varepsilon = 0.25$ ball around the image. 

\begin{figure}[t]
    \begin{center}
    \includegraphics[width=\linewidth]{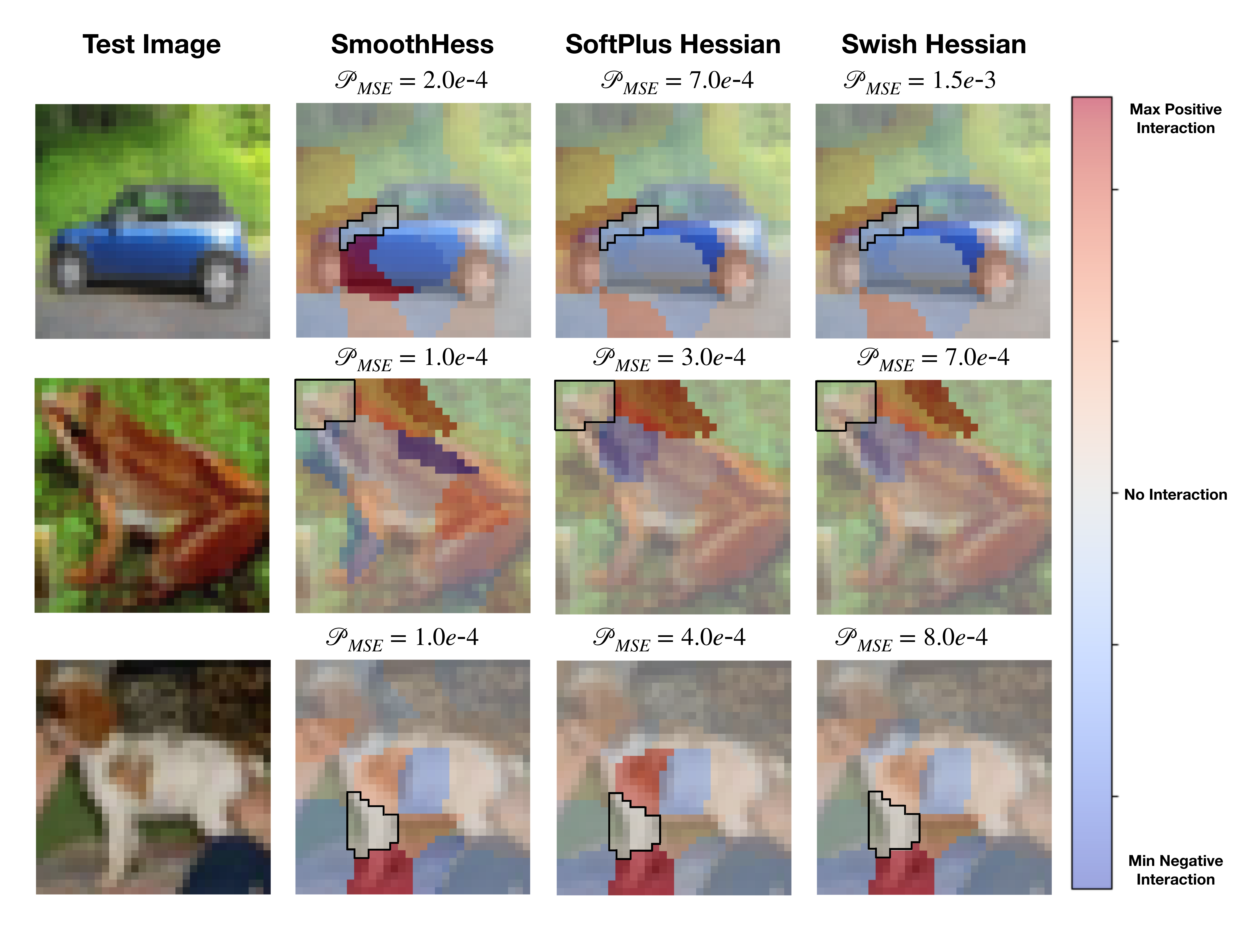}
    \end{center}
    \caption{Visualization of interactions between super-pixels found for a ResNet18 on CIFAR10 by SmoothHess, SoftPlus Hessian and Swish Hessian.
    Results are shown for test images of a car, frog and dog in the first second and third rows respectively. 
    Each image is visualised in column one.
    Images are segmented into 20-25 super-pixels using the SLIC algorithm \citep{SLIC}. 
    Interactions are summed between each pair of features in each pair of super-pixels.
    We show interactions with one given super-pixel in each image, outlined in black. 
    SmoothHess, SoftPlus Hessian and Swish Hessian interactions for this super-pixel are visualized as heatmaps overlaid upon the image in columns two, three and four, respectively. 
    The heatmap colorbar is standardized to range between the minimum and maximum interactions on each image-method pair separately, to facilitate comparison.
    Quantitative $\mathcal{P}_{MSE}$ results for $\varepsilon = 0.25$ are shown above each method-image pairing, with SmoothHess achieving the lowest $\mathcal{P}_{MSE}$ in all cases.
 }
    \label{fig:Qualitative}
\end{figure}

\end{document}